
\documentclass{article}
\usepackage[margin=1.25in]{geometry}
\usepackage{microtype}
\usepackage{graphicx}
\usepackage{subfigure}
\usepackage{booktabs} %

\usepackage{fancyhdr}
\usepackage{xcolor} %
\usepackage{algorithm}
\usepackage{algorithmic}
\usepackage{natbib}
\usepackage{eso-pic} %
\usepackage{forloop}
\usepackage{url}

\usepackage{amsmath}
\usepackage{amssymb}
\usepackage{mathtools}
\usepackage{amsthm}

\usepackage{hyperref}

  \definecolor{mydarkblue}{rgb}{0,0.08,0.45}
  \hypersetup{ %
    pdftitle={},
    pdfsubject={Proceedings of the International Conference on Machine Learning 2024},
    pdfkeywords={},
    pdfborder=0 0 0,
    pdfpagemode=UseNone,
    colorlinks=true,
    linkcolor=mydarkblue,
    citecolor=mydarkblue,
    filecolor=mydarkblue,
    urlcolor=mydarkblue,
    }

\usepackage[capitalize,noabbrev]{cleveref}

\usepackage{amsthm} %
\usepackage{mathtools,thmtools}
\usepackage{amsfonts,amsmath,amssymb}
\usepackage{thm-restate}
\usepackage{tcolorbox}
\usepackage{lipsum}
\usepackage{enumitem}
\usepackage{bm}
\usepackage{xspace}
\usepackage{nicefrac}
\usepackage{dsfont}
\usepackage{float}
\usepackage{bbm}
\usepackage{mdframed}

\usepackage{multirow}

\usepackage{enumitem}
\usepackage{bm}
\usepackage{xspace}
\usepackage{nicefrac}
\usepackage{dsfont}

\DeclareMathAlphabet{\mathbfsf}{\encodingdefault}{\sfdefault}{bx}{n}

\newcommand{\lr}[1]{\mathopen{}\left(#1\right)}

\newcommand{\LR}[1]{\mathopen{}\Big(#1\Big)}

\newcommand{\LRbra}[1]{\mathopen{}\Big[#1\Big]}

\newcommand{\lrset}[1]{\mathopen{}\left\{#1\right\}}

\newcommand{\LRset}[1]{\mathopen{}\Big\{#1\Big\}}

\newcommand{\lrabs}[1]{\mathopen{}\left|#1\right|}

\newcommand{\lrceil}[1]{\mathopen{}\left\lceil #1 \right\rceil}

\newcommand{\lrfloor}[1]{\mathopen{}\left\lfloor #1 \right\rfloor}

\newcommand{\ip}[1]{\langle #1 \rangle}

\usepackage{amsfonts,amsmath,amssymb}
\usepackage{mathtools}
\usepackage{float}
\usepackage{enumitem}

\newcommand{\cH}{\mathcal{H}}

\newcommand{\cP}{\mathcal{P}}

\newcommand{\hE}{\mathbb{E}}

\newcommand{\hP}{\mathbb{P}}

\newcommand{\hide}[1]{}

\newcommand{\beq}{\begin{eqnarray*}}
\newcommand{\eeq}{\end{eqnarray*}}
\newcommand{\beqn}{\begin{eqnarray}}
\newcommand{\eeqn}{\end{eqnarray}}

\newcommand{\E}{\mathbb{E}}

\DeclareMathOperator*{\cArgmin}{arg\!\min}
\DeclareMathOperator*{\cArgmax}{arg\!\max}
\usepackage{ragged2e}

\newcommand{\Enp}{\mathbb E_{\nu, \pi}}
\newcommand{\Pnp}{\mathbb P_{\nu, \pi}}
\newcommand{\Parm}[1]{\mathbb P_{\nu_{#1}}}
\newcommand{\Earm}[1]{\mathbb E_{\nu_{#1}}}

\newcommand{\Reg}{\mathrm{Reg}}
\newcommand{\candidreg}{\mathrm{CandidReg}}

\newcommand{\alg}{\mathfrak a}
\newcommand{\env}{\nu}
\newcommand{\policy}{\pi}
\newcommand{\optarm}{a^*}

\newcommand{\numcontext}[1]{\mathbb T^{\mathcal Z}_{#1}}

\newcommand{\numaction}[1]{\mathbb T^{\mathcal A}_{#1}}

\DeclareMathOperator{\cucb}{C-UCB}
\DeclareMathOperator{\ucb}{UCB}
\DeclareMathOperator{\elim}{PE}

\newcommand{\empmeanaction}[1]{\hat\mu^{\mathcal A}_{#1}}
\newcommand{\empmeancontext}[1]{\hat\mu^{\mathcal Z}_{#1}}
\newcommand{\meanaction}[1]{\mu^{\actionspace}(#1)}
\newcommand{\meancontext}{\mu^{\contextspace}}

\newcommand{\ucbaction}[1]{\mathrm{UCB}^{\mathcal A}_{#1}}
\newcommand{\ucbcontext}[1]{\mathrm{UCB}^{\mathcal Z}_{#1}}
\newcommand{\cucbaction}[1]{\widetilde{\mathrm{UCB}}_{#1}}

\newcommand{\contextspace}{\mathcal Z}
\newcommand{\actionspace}{\mathcal A}
\newcommand{\rewardspace}{\mathcal Y}
\newcommand{\envspace}{\mathcal P(\mathcal Z\times\mathcal Y)^{\mathcal A}}
\newcommand{\benignenvspace}{\mathcal P_{\mathrm{Benign}}(\mathcal Z\times\mathcal Y)^{\mathcal A}}
\newcommand{\policyspace}{\Pi(\actionspace, \contextspace, T)}
\newcommand{\historyspace}[1]{\cH_{#1}}
\newcommand{\historyvar}[1]{H_{#1}}
\newcommand{\noncausalspace}{\mathbb A_{\mathrm{agnostic}}}

\newcommand{\indicator}{\mathbb I}
\def\[#1\]{\begin{equation}\begin{aligned}#1\end{aligned}\end{equation}}
\def\*[#1\]{\begin{equation*}\begin{aligned}#1\end{aligned}\end{equation*}}
\usepackage{mathrsfs}
\newcommand{\KL}[2]{\mathrm{KL}(#1\ \Vert \ #2)}
\newcommand{\bernoulli}[1]{\mathrm{Ber}(#1)}

\newcommand{\ucbactionrv}[1]{A^{\mathrm{UCB}}_{#1}}
\newcommand{\cucbactionrv}[1]{A^{\mathrm{C-UCB}}_{#1}}

\newcommand{\eventaction}{E^{\mathcal A}}
\newcommand{\eventcontext}{E^{\mathcal Z}}
\newcommand{\eventmartingale}{E^{\mathrm{MG}}}
\newcommand{\eventdb}{\mathcal E}

\newcommand{\phaselength}[1]{T_{#1}}

\newcommand{\activeset}[1]{\mathcal A_{#1}}
\newcommand{\dimspan}[1]{d_{#1}}
\newcommand{\lastphase}{\ell_{\text{max}}}

\newcommand{\phaseestimate}[1]{\hat\mu^{\contextspace}_{#1}}
\newcommand{\phasecov}[1]{V_{#1}}
\newcommand{\phasedesign}[1]{\pi_{#1}}
\newcommand{\supp}[1]{\mathrm{supp}(#1)}
\newcommand{\trueoptarm}[1]{\optarm_{#1}}
\newcommand{\empoptarm}[1]{\hat a_{#1}}
\newcommand{\phaseconcentration}[1]{E^{\mathrm{phase}}_{#1}}
\newcommand{\phase}[1]{\text{phase }#1}

\newcommand{\actionnoise}[1]{\eta^{\actionspace}_{#1}}

\newcommand{\mingap}{\Delta_{\min}}

\newcommand{\localreward}[2]{U_{#1}(#2)}
\newcommand{\localtime}[2]{n_{#1}(#2)}
\newcommand{\activelearner}[1]{\mathcal I_{#1}}
\newcommand{\regbalancing}{\mathrm{DB}}

\DeclareMathOperator{\hac}{HAC-UCB}

\newcommand{\defn}[1]{\emph{#1}}
\newcommand{\PosReals}{\Reals_{>0}}

\newcommand{\Reals}{\mathbb R}

\usepackage{crossreftools}

    \makeatletter
\def\@fnsymbol#1{\ensuremath{\ifcase#1\or \dagger\or \ddagger\or
   \mathsection\or \mathparagraph\or \|\or **\or \dagger\dagger
   \or \ddagger\ddagger \else\@ctrerr\fi}}
    \makeatother

\theoremstyle{plain}
\newtheorem{theorem}{Theorem}[section]
\newtheorem{proposition}[theorem]{Proposition}
\newtheorem{lemma}[theorem]{Lemma}
\newtheorem{corollary}[theorem]{Corollary}
\theoremstyle{definition}
\newtheorem{definition}[theorem]{Definition}

\theoremstyle{remark}
\newtheorem{remark}[theorem]{Remark}

\usepackage[disable]{todonotes}
\usepackage[setmargin=true,marginparwidth=0.8in,hide=true]{marginalia}

\title{
Causal Bandits: The Pareto Optimal Frontier of Adaptivity, a Reduction to Linear Bandits, and Limitations around Unknown Marginals}

\date{}

\author{
    Ziyi Liu $^\star$ \thanks{Department of Statistical Sciences,
University of Toronto and Vector Institute; 
\texttt{kevind.liu@mail.utoronto.ca}.}
    \and Idan Attias $^\star$ \thanks{Department of Computer Science, Ben-Gurion University and Vector Institute; \texttt{idanatti@post.bgu.ac.il}.} 
    \and Daniel M. Roy\thanks{Department of Statistical Sciences,
University of Toronto and Vector Institute; \texttt{daniel.roy@utoronto.ca}.}
}

\begin{document}
\thispagestyle{empty}

\maketitle

\def\thefootnote{$\star$}\footnotetext{Equal contribution.}
\def\thefootnote{\arabic{footnote}}

\begin{abstract}
In this work, we investigate the problem of adapting to the presence or absence of causal structure in multi-armed bandit problems. In addition to the usual reward signal, we assume the learner has access to additional variables, observed in each round after acting. When these variables $d$-separate the action from the reward, existing work in causal bandits demonstrates that one can achieve strictly better (minimax) rates of regret (Lu et al., 2020).  
Our goal is to adapt to this favorable ``conditionally benign'' structure, if it is present in the environment, while simultaneously recovering worst-case minimax regret, if it is not. Notably, the learner has no prior knowledge of whether the favorable structure holds. In this paper, we establish the Pareto optimal frontier of adaptive rates. We prove upper and matching lower bounds on the possible trade-offs in the performance of learning in conditionally benign and arbitrary environments, resolving an open question raised by Bilodeau et al. (2022). Furthermore, we are the first to obtain instance-dependent bounds for causal bandits, by reducing the problem to the linear bandit setting. Finally, we examine the common assumption that the marginal distributions of the post-action contexts are known
and show that a nontrivial estimate is necessary for better-than-worst-case minimax rates.
\end{abstract}

\section{Introduction}\label{sec:intro}

In real-world decision making, we often want strong worst-case guarantees as well as the ability to adapt to favorable properties of real-world scenarios.
Adaptive sequential decision-making offers a framework to design algorithms to achieve these objectives. 

In this paper, we explore adaptivity in multi-armed bandit problems. In standard multi-armed bandits, the learner (policy) takes an action, receives a reward, and then this process repeats over a number of rounds. The learner's regret is the difference between its cumulative reward and the cumulative reward of the single best action in hindsight. Can we work to identify high-reward actions while minimizing regret?

In this work, we assume there is \emph{post-action context}, i.e., there may be additional information available to the learner after taking an action, beyond the reward signal. In a worst-case analysis, however, the learner can ignore the post-action context and still achieve minimax rates of regret: the worst-case environment will not offer useful information. However, many real-world settings possess the structure of multi-armed bandit problems with post-action context and, in those cases, this additional information is useful towards minimizing regret.

One way that post-action context can be useful is if we can assume causal structure relating the action (i.e., an intervention) to the reward and post-action (post-intervention) context.
Several authors have studied models in this vein \citep{BareinboimEtAl2015,lattimore2016causal}.
In this work, we build on the framework of \citet{lattimore2016causal}, wherein the post-action context is assumed to $d$-separate each intervention from its associated reward.

Under $d$-separation, 
the intervention and reward are independent, 
conditional on the post-intervention context. 
\citet{bilodeau2022adaptively} formalized this structure
in general terms: a bandit environment is \emph{conditionally benign} whenever the conditional distribution of the reward, given the post-action context, does not depend on the action.

Minimax regret is well understood for both the classical and causal variant of multi-armed bandits. Notably, algorithms tailored to conditionally benign environments can achieve lower rates of regret, scaling with the number of post-action contexts, rather than the potentially much larger set of actions \citep{lu2020regret,bilodeau2022adaptively}. 

Exploiting causal structure is not without its pitfalls. 
\citet{bilodeau2022adaptively} showed that C-UCB, a minimax optimal causal bandit algorithm, suffers linear regret in some non-benign environments.
This raised a natural question: Can we achieve strict adaptivity, 
i.e., obtain minimax rates simultaneously in the class of conditionally benign environments and in the class of all environments, without knowing in advance which class of environments we will face?

\citeauthor{bilodeau2022adaptively} proved that \emph{strict} adaptivity was impossible, but showed some level of adaptivity was possible. They designed a new algorithm, termed HAC-UCB, and proved that it simultaneously achieves minimax optimal rates on the class of benign environments and always achieves (suboptimal, though sublinear) $T^{3/4}$ rates. In light of this result, \citeauthor{bilodeau2022adaptively} raised an open problem, asking whether HAC-UCB was, in a sense, Pareto optimal, implying  that the slower rate was the price of adaptivity. More generally, we ask: 
\begin{center}
    \emph{What is the Pareto optimal frontier of simultaneously achievable rates of regret in the classes of benign and arbitrary environments, and what algorithms achieve these optimal tradeoffs?}
\end{center}

In this paper, we address the above question by providing a complete characterization of the Pareto optimal frontier (up to log factors) as well as the achieving algorithms. Besides adaptation, we also study the complexity of causal bandit problems from other perspectives. More specifically, we find a novel reduction from causal bandits to linear bandits, which facilitates the first instance-dependent regret bound for causal bandits and enables the applications of some linear bandit algorithms to causal bandits. We also investigate dropping the common assumption that we have perfect knowledge of ``the marginals", i.e., the distribution of the post-action context variable, under each action. On one hand, we show that it is impossible for any algorithm to enjoy improved minimax regret in benign environments without any knowledge of the true marginals. On the other hand, we identify cases where approximate knowledge of the marginal distributions suffices. Our contributions are explained in more details as follows. 
\begin{itemize}[leftmargin=0.3cm]
    \item In \cref{sec:pareto}, we establish near-optimal Pareto regret frontiers for the setting of causal bandits, resolving an open problem raised by \citet{bilodeau2022adaptively}, see \Cref{fig:pareto}.
    Utilizing a dynamic balancing method introduced by \citet{cutkosky21a}, we derive the upper bound and also prove near-optimal matching lower bounds. Remarkably, we introduce a phenomenon we call \emph{the price of adaptivity}, to capture the extra regret that one \emph{must} incur when attempting to adapt to the presence or lack of causal structure. Consequently, we demonstrate that the model selection method introduced by \citet{cutkosky21a} cannot be generally improved, for any nontrivial general improvement would decrease the price of adaptivity beyond our lower bound.
    \item In \cref{sec:benign}, we present a novel reduction from causal bandits to linear bandits with conditional sub-Gaussian noise. Utilizing a phased elimination technique \citep{lattimore20a},
    we identify a new dimension measuring the inherent complexity of causal bandits. It allows us to establish the first instance-dependent regret bound and a strictly tighter worst-case regret bound for causal bandits for conditionally benign environments. Additionally, we prove instance-dependent bounds for stochastic linear bandits, which are novel to the best of our knowledge.
    \item In \cref{sec:approx-marginal}, we study the situation where we have limited knowledge of the marginal distributions over post-action contexts. We provide a lower bound indicating that no algorithm can utilize the causal structure to achieve improved minimax rates without such prior knowledge. This partly justifies the common assumption in the causal bandits literature that algorithms are given the marginals. On the other side, we give a regret upper bound for the phased elimination algorithm with access to approximate marginals. This result shows that partial knowledge of the marginals suffices in some regimes.
\end{itemize}
\begin{figure}[]
    \centering
    \includegraphics[width=0.75\textwidth]{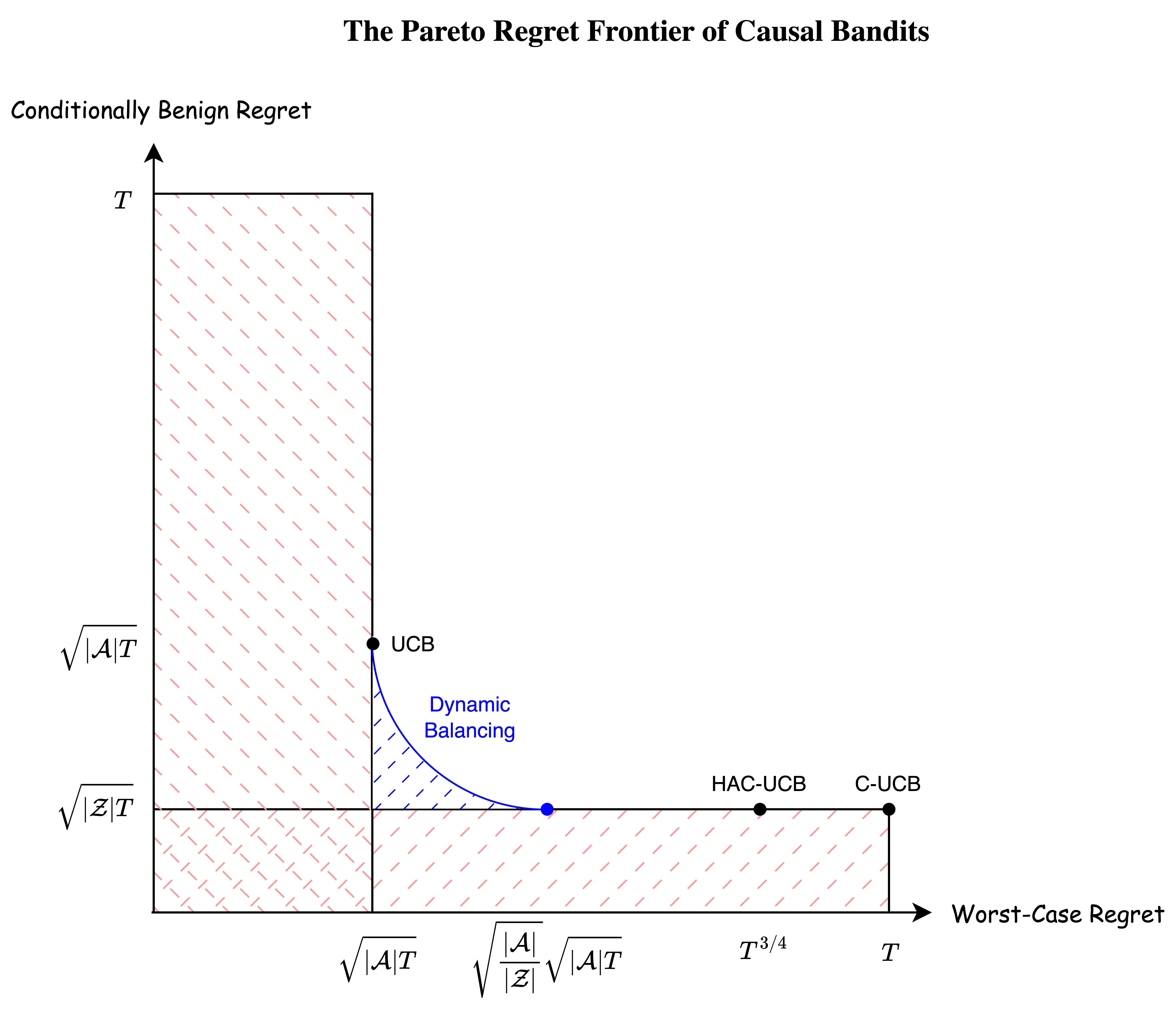}
    \caption{The Pareto-optimal frontier of simultaneously achievable rates of regret in (left axis) the class of conditionally benign environments and (bottom axis) the class of all environments. Shaded regions are unobtainable. All rates are determined up to log terms. Among algorithms that achieve minimax rates on conditionally benign environments, the previously best known algorithm (HAC-UCB) is dominated by an instance of Dynamic Balancing, which our results also demonstrate is Pareto optimal.}
    \label{fig:pareto}
\end{figure}

\subsection{Related Work} 
\paragraph{Causal bandits.} 
The causal bandit model was introduced by \citet{lattimore2016causal}, where their objective was to identify the best intervention. Such pure exploration problem has been extensively studied since then \citep{sen2017identifying,xiong2022pure}, while some other works focused on regret minimization \citep{lu2020regret, nair2021budgeted, bilodeau2022adaptively}. Another interesting topic is to relax the causal assumptions. For example, the assumption of known causal graph can be relaxed \citep{lu2021causal, malek2023additive}. Our work mainly builds on the study by \citet{bilodeau2022adaptively} regarding adapting to the existence of causal structures as well as approximate marginals.

\paragraph{Model selection.} To achieve adaptivity, a natural idea is to apply some model selection algorithm on top of a group of base learners. There is an extending line of works studying such corralling strategies in the bandit setting \citep{agarwal2017corralling, pacchiano2020regret, pacchiano2020model, cutkosky2020upper, arora2021corralling, cutkosky21a}. \citet{agarwal2017corralling} required certain stability conditions on the base learners, making their algorithm quite restricted. In contrast, some recently proposed general-purpose model selection algorithms for stochastic bandit problems \citep{pacchiano2020model, cutkosky2020upper, cutkosky21a} are better candidates in our setting, since they only necessitate mild assumptions on the base learners.
 
\paragraph{Pareto optimal frontier.} When we have multiple performance metrics but are unable to achieve the best under all of them simultaneously, the Pareto optimal frontier becomes a common objective to pursue subsequently. Problems with several competing benchmarks are abundant in bandit literature \citep{koolen2013pareto, lattimore2015pareto, marinov2021pareto, zhu2022pareto}.

\section{Problem Setup}\label{sec:prob-setup}
We consider the problem of stochastic bandit with post-action contexts, as defined by \citet{bilodeau2022adaptively} and follow their notations. Let $\actionspace$ be the finite action space, $\contextspace$ be the finite context space and $\rewardspace=[0,1]$ be the reward space. For any set $K$, we use $\cP(K)$ to denote the set of all probability distributions supported on $K$. For any $p\in\cP(\contextspace\times\rewardspace)$, we use $p(Z)$ to denote its marginal distribution over $\contextspace$, and use $p(Y|Z)$ to denote its the conditional distribution over $\rewardspace$ conditioning on the $Z-$component.

In this bandit problem, a learner interacts with the stochastic environment for $T$ rounds. The role of the environment is instantiated with a family of distributions $\env=\{\env_a: a\in\actionspace \}\in\cP(\contextspace\times\rewardspace)^{\actionspace}$ indexed by actions in $\actionspace$. For each round $t\in[T]$, the learner picks an action $A_t$ from $\actionspace$ and then receives a context-reward pair $(Z_t, Y_t)$ which is independently sampled from $\env_{A_t}\in\cP(\contextspace\times\rewardspace)$.

To model learner's strategy, we need to formalize the information that can be used for learner's prediction. Let $\historyvar{t}=(A_s, Z_s, Y_s)_{s\in[t]}$ denote the observed history up to round $t$, which is a random variable valued in $\historyspace{t}:=(\actionspace\times\contextspace\times\rewardspace)^t$. A policy $\pi$ by the learner could be modeled as a sequence of measurable maps from $\historyspace{t}$'s to $\actionspace$
 \*[
 \policy
 =
 (\policy_t)_{t\in[T]}\in\policyspace
 := 
 \prod_{t=1}^T \{\historyspace{t-1}\to\actionspace \},
 \]
where $\policyspace$ is the space of all policies compatible with $(\actionspace,\contextspace, T)$. Then the learner follows this policy by selecting $A_t=\policy_t(\historyvar{t-1})$ for each round $t$. Indeed, the distribution of all outcomes over $T$ rounds, i.e. $(A_t, Z_t, Y_t)_{t\in[T]}$, is determined by the environment $\env$ and the player's policy $\policy$ together. We will always highlight the ambient joint distribution by the subscript on probabilistic operators $\hP$ and $\E$, say $\Earm{a}$ and $\Enp$. Additionally, we denote the expected reward for action $a$ and the optimal action $\optarm$ by

 \[
 \meanaction{a}
 :=
 \Earm{a}[Y], 
 \quad
 \optarm
 :=
 \cArgmax_{a\in\actionspace}\meanaction{a}.
 \]

The goal of the learner is to choose some policy $\policy$ that maximizes her expected cumulative reward $\Enp[\sum_{t=1}^T Y_t]$, or equivalently minimizes her expected pseudo-regret
 \begin{align*}
    \Enp[\Reg(T)]
    :=
    \Enp[\sum_{t=1}^T \max_{a\in\actionspace}\Earm{a}[Y]-Y_t]
    =
    T\cdot\meanaction{\optarm}-\Enp[\sum_{t=1}^T \meanaction{A_t}],  
 \end{align*}
with 
$
\Reg(t)
:=
t\cdot\meanaction{\optarm} - \sum_{s=1}^t \meanaction{A_s}, t\in[T]
$
being the realized regret, \NA{which is stochastic.}

\paragraph{Conditionally benign property and $d$-separation.}
Under certain structures, the post-action context variable $Z$ enables more efficient exploration and hence smaller regret. One special structure that can be exploited for better regret guarantee in our setting is called \textit{conditionally benign property}, introduced by \citet{bilodeau2022adaptively}. 
\begin{definition}\citep[][Definition~3.1]{bilodeau2022adaptively}\label{def: conditionally benign property}
    An environment $\env\in\envspace$ is \textit{conditionally benign} if and only if there exists $p\in\cP(\contextspace\times\rewardspace)$ such that for each $a\in\actionspace$, $\env_a(Z)\ll p(Z)$ and $\env_a(Y|Z)=p(Y|Z)$ p-a.s. We further denote the space of all conditionally benign environments by $\benignenvspace$.
\end{definition}
The conditional benign property is quite general in the sense that it is equivalent to or weaker than some well-studied causal assumptions (\citealt{bilodeau2022adaptively}). In particular, the conditionally benign property is the same thing as the context variable $Z$ being a $d$-separator when $\actionspace$ is all interventions. To leverage this benign structure, the causal UCB ($\cucb$) algorithm recently proposed by \citet{lu2020regret} achieves $\tilde O(\sqrt{|\contextspace|T})$ regret, while non-causal algorithms that is unaware of this structure would still incur the possibly worse regret of $\tilde O(\sqrt{|\actionspace|T})$.
\subsection{Adaptivity}
A natural question is whether we can compete with C-UCB when the environment is conditionally benign while at the same time still maintain the worst-case $\tilde O({\sqrt{|\actionspace|T}})$ regret guarantee, without prior knowledge of the nature of the environment. Unfortunately algorithms designed specific to the benign setting may fail drastically in non-benign settings. For instance, C-UCB provably incurs linear regret in some non-benign environments \citep{bilodeau2022adaptively}. To remedy this, \citet{bilodeau2022adaptively} devised $\hac$ by adding a hypothesis test in each round, which is used for switching away from C-UCB to UCB irreversibly whenever it detects a deviation from conditionally benign property. $\hac$ is able to recover the $\tilde O(\sqrt{|\contextspace|T})$ regret in benign settings and achieve sublinear $\tilde O(T^{3/4})$ regret in the worst case.

Prior to this work, 
we do not know if $\hac$ is optimal. Indeed, \citet{bilodeau2022adaptively} showed that \textit{strict adaptation}, meaning that always achieving the worst-case $O(\sqrt{|\actionspace|T})$ regret while still being able to perform as good as $\cucb$ when causal structure exists, is impossible. But this does not rule out the possibility of improving the worst-case $\tilde O(T^{3/4})$ regret of $\hac$ unilaterally. In this paper we will show that such improvement is indeed feasible and thus obtain an algorithm that dominates $\hac$. Further we will show that our regret guarantee is not improvable through the lens of Pareto optimality.

\begin{remark}
    Regarding optimal rate of regret under the presence of causal structure, it is easy to show a  $\Omega(\sqrt{|\contextspace|T})$ regret lower bound, nearly matching existing $\tilde O(\sqrt{|\contextspace|T})$ regret upper bounds. Whether the log-factors can be shaved from the upper bound is unknown. However, the lower bound of \citet{bilodeau2022adaptively} still implies that strict adaptation is impossible for general $\actionspace$ and $\contextspace$, since when $|\actionspace|/|\contextspace|$ is, say, $\Omega(T^{1/5})$, the $\Omega(\sqrt{|\actionspace|T})$ lower bound in benign settings rules out a $\tilde O(\sqrt{|\contextspace|T})$ upper bound.
\end{remark}
\paragraph{Generic algorithms.} For rigorous treatment of adaptivity, we adopt the definition of algorithms as maps from \citet{bilodeau2022adaptively}. Specifically, an algorithm $\alg$ is any map from problem-specific inputs to the space of compatible policies
\*[
\alg: (\actionspace, \contextspace, T, q) 
\mapsto
\alg(\actionspace, \contextspace, T, q)\in\policyspace,
\]
where $q\in\cP(\contextspace)^{\actionspace}$ is the marginal distribution accessed by this algorithm as prior knowledge. When talking about algorithm-induced policies, by default we mean $\alg(\actionspace, \contextspace, T, \env(Z))$ if not stated otherwise, following the common assumption in the literature of causal bandits. We will also deal with the case of imperfect prior knowledge in \cref{sec:approx-marginal}, where $q$ may not be the exact $\env(Z)$. For notation simplicity, we will use $\alg$ to denote its induced policy $\alg(\actionspace, \contextspace, T, q)$ when the problem-specific inputs are clear from context. For example, $\hE_{\env, \alg}$ is the same thing as $\hE_{\env, \alg(\actionspace, \contextspace, T, q)}$.

\section{The Pareto Regret Frontier}\label{sec:pareto}
To formalize our notion of Pareto regret frontier, we need the following definition: 
\begin{definition}\label{def: realizable pair}
    A pair of rate functions $(R_1(T;\actionspace,\contextspace), R_2(T;\actionspace,\contextspace))$ is said to be \emph{realizable} if there is an algorithm $\alg$ such that for all $\actionspace,\contextspace$ and $T$,
    \*[
    \sup_{\env\in\benignenvspace}\E_{\env,\alg}[\Reg(T)]
    \leq
    R_1(T;\actionspace,\contextspace), \\
    \sup_{\env\in\envspace}\E_{\env,\alg}[\Reg(T)]
    \leq
    R_2(T;\actionspace,\contextspace).
    \]
    A pair $(R_1(T;\actionspace,\contextspace), R_2(T;\actionspace,\contextspace))$ is  \emph{reasonable} if $R_1(T;\actionspace,\contextspace)\geq\sqrt{|\contextspace|T}$ and $R_2(T;\actionspace,\contextspace)\geq\sqrt{|\actionspace|T}$.
\end{definition}
In the following we elide the dependence of rates $R_i$ on $\actionspace$ and $\contextspace$ below for clarity. We can now describe the Pareto regret frontier, i.e., the set of optimal realizable pairs of rates.
\begin{theorem}\label{main theorem: pareto regret frontier}
    There exists universal constants 
    $C, c, c'>0$ such that
    \begin{enumerate}[leftmargin=1.5em]
    \item Upper bound:
    If $(R_1(T), R_2(T))$ is reasonable and $R_1(T)R_2(T)\geq|\actionspace|T$, then
    $(C R_1(T)\log T, C R_2(T) \log T)$ is realizable;
    \item Lower bound:
    For all realizable $(R_1(T), R_2(T))$, we have  $R_2(T)>c'T$ or $R_1(T)R_2(T)\geq c|\actionspace|T$.
    \end{enumerate}
\end{theorem}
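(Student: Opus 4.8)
The plan is to prove the two parts separately: the upper bound by instantiating the dynamic balancing (model selection with elimination) meta-algorithm of \citet{cutkosky21a} over two carefully tuned base learners, and the lower bound by a change-of-measure construction that pits a benign reference environment against a single-arm non-benign perturbation, formalizing the ``price of adaptivity.''

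For the upper bound I would run balancing over exactly two base learners: $\cucb$ (valid on benign environments) equipped with the putative anytime regret rate $r_1\sqrt{n}$ on its own $n$ plays, and $\ucb$ (valid on all environments) with putative rate $r_2\sqrt n$, tuned as $r_1 = R_1(T)/\sqrt T$ and $r_2 = \sqrt{|\actionspace|}$ so that each learner is well-specified in its regime (using $R_1\ge\sqrt{|\contextspace|T}$ and $R_2\ge\sqrt{|\actionspace|T}$ from reasonableness). On a benign environment $\cucb$ is well-specified and balancing recovers its rate up to logs, giving $\tilde O(r_1\sqrt T)=\tilde O(R_1)$. On an arbitrary environment $\ucb$ is well-specified, and the only danger is over-investing in a misspecified $\cucb$. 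The crux is a detection-cost lemma: if $\cucb$ suffers an average gap $\beta$, the elimination test cannot fire until the reference learner $\ucb$ has certified that a $\beta$-better arm exists, which by $\ucb$'s own guarantee requires $\Omega(|\actionspace|/\beta^2)$ of its plays. Propagating this through the balancing rule, which keeps $r_1\sqrt{n_1}\approx r_2\sqrt{n_2}$, bounds the regret charged to $\cucb$, in the worst case over $\beta$, by $\tilde O((r_2/r_1)\sqrt{|\actionspace|T})$. Adding the $\tilde O(r_2\sqrt T)$ from $\ucb$ and substituting the tuning, the worst-case regret becomes $\tilde O(|\actionspace|T/R_1)$, which is $\tilde O(R_2)$ exactly under the budget $R_1R_2\ge|\actionspace|T$. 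This is precisely where the product constraint enters, as the feasibility condition, and the confidence unions contribute the $\log T$ factor.

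For the lower bound I would fix $|\contextspace|=2$ and $|\actionspace|=k$, taking a benign reference $\env^0$ with a shared conditional $p(Y\mid Z)$ in which one arm $a_0$ is optimal and every other arm is worse by a gap $\gamma$, realized through the known marginals $\env_a(Z)$. Any algorithm with benign regret at most $R_1$ must satisfy $\sum_{a\ne a_0}\E_{\env^0,\alg}[N_a]\le R_1/\gamma$, so by averaging there is an arm $a_1$ with $\E_{\env^0,\alg}[N_{a_1}]\le R_1/(\gamma(k-1))$. I then perturb only arm $a_1$'s conditional reward law, keeping its marginal fixed, to obtain a non-benign $\env^1$ in which $a_1$ becomes optimal by a margin $\beta$; since only $a_1$ differs and its marginal is unchanged, the trajectory divergence is $\KL{\P_{\env^0,\alg}}{\P_{\env^1,\alg}}\lesssim \E_{\env^0,\alg}[N_{a_1}]\,\delta^2$ with $\delta=\Theta(\beta+\gamma)$. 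Choosing $\gamma\asymp\beta\asymp\delta$ and pushing $\delta$ to the largest value keeping this divergence $O(1)$ gives $\delta\asymp (k-1)/R_1$; a Bretagnolle--Huber argument then forces worst-case regret $\gtrsim \beta T\asymp (k-1)T/R_1$, i.e.\ $R_1R_2\gtrsim|\actionspace|T$. When $R_1$ is so small that this target $\delta$ would exceed a constant, the same construction with $\delta=\Theta(1)$ instead forces $R_2=\Omega(T)$, producing exactly the dichotomy in the statement.

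I expect the main obstacle to lie on both ends of the same phenomenon. On the lower-bound side, the delicate point is engineering a perturbation that is simultaneously genuinely non-benign (the conditional $\env_{a_1}(Y\mid Z)$ must differ from the shared $p$), statistically near-invisible (divergence controlled purely by $\E_{\env^0,\alg}[N_{a_1}]$), and large enough in reward that the induced regret and the benign under-sampling constraint combine to the exact product $|\actionspace|T$ rather than a spurious power of $T$. On the upper-bound side, the corresponding difficulty is the detection-cost lemma: showing that the reference learner must pay the full $|\actionspace|$-dependent exploration cost before it can license elimination of $\cucb$, since it is precisely this cost, and not a naive $1/\beta^2$ self-detection, that matches the lower bound and certifies that the balancing scheme is Pareto optimal.
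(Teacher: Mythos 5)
Your overall strategy matches the paper's: dynamic balancing over a benign and an arbitrary base learner for the upper bound, and a change-of-measure plus Bretagnolle--Huber argument between a benign reference and single-arm perturbations for the lower bound. However, each half has a concrete gap. For the upper bound, the paper does not re-derive the balancing guarantee; it invokes Theorem 22 of \citet{cutkosky21a}, keeping the candidate bounds $d_i\sqrt{n}$ at the learners' true high-probability rates and realizing the trade-off by tuning the scaling and bias hyper-parameters ($Z_1=1$, $Z_2=R_2(T)/\sqrt{|\actionspace|T}$), so that the product condition enters through the cross term $\tfrac{d_2}{Z_2}\sqrt{T}\approx |\actionspace|T/R_2\le R_1$ in the \emph{benign} case. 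Your ``detection-cost lemma'' is precisely the content of that theorem and is asserted rather than proved; if you intend to prove it, you must analyze the deactivation test quantitatively (the detectability threshold is set by the well-specified learner's confidence width $d_2/\sqrt{n_2}$ interacting with the balancing invariant), and you must also supply \emph{anytime, high-probability} regret bounds for the base learners --- for C-UCB this is itself a new result the paper establishes (\cref{regret bound: cucb and ucb}, via a martingale bound over context counts). Without these two ingredients the upper bound is incomplete.

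For the lower bound, you argue in the dual direction (assume the benign bound $R_1$, deduce a bound on $R_2$) using a single averaged-out arm. Bretagnolle--Huber then gives $R_1+R_2\gtrsim T\delta\asymp |\actionspace|T/R_1$, i.e.\ $R_1R_2\gtrsim |\actionspace|T-R_1^2$, which yields the claimed product bound only when $R_1\lesssim\sqrt{|\actionspace|T}$; in the complementary regime you must separately invoke the folklore minimax bound $R_2=\Omega(\sqrt{|\actionspace|T})$ and take the universal constant $c$ small enough. You do not address this regime, and your dichotomy branch is triggered by $R_1$ being small rather than by $R_2>c'T$ as stated. The paper avoids the residual $R_1^2$ term by arguing in the other direction: assuming the worst-case bound $R_2$ on every perturbed instance, it deduces $\E_{\env}[\numaction{T}(a_0)]\gtrsim \Delta^{-2}\log(T\Delta/R_2)$ for \emph{every} suboptimal arm, sums over arms, and sets $\Delta\asymp R_2/T$, which directly gives benign regret $\gtrsim |\actionspace|T/R_2$ with the branch $R_2>c'T$ arising from the constraint $\Delta<1/40$. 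Both routes can be made to work, but yours requires these patches before it is a proof.
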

\fLATER{DR: Make nicer format for the above theorem.}

Both upper and lower bounds will be extensively discussed in the following sections.

\subsection{Upper Bounds}\label{subsec: pareto upper bound}
In this section, we show that our upper bound can be obtained by applying the algorithmic principle of \emph{dynamic balancing} ($\regbalancing$) in \citet{cutkosky21a} to the stochastic bandit problem with post-action contexts. This method is motivated by the fact that, under mild assumptions, it can always achieve $\tilde O(\sqrt{T})$ regret when it is running on top of a collection of $\tilde O(\sqrt{T})$ regret base learners. So the dependence on $T$ in $\tilde O(T^{3/4})$ regret by $\hac$ in 
\citet{bilodeau2022adaptively} is easily improved.
The use of dynamic balancing in our bandit setting can be justified by the fact that 
dynamic balancing does not rely on what kind of (stochastic) contextual information can be observed in the underlying bandit problem. 
See \cref{appendix sec: regret analysis for DB} for a detailed explanation.

\begin{algorithm}[t]
    \caption{Dynamic balancing ($\regbalancing$) w/ two base learners}\label{alg: regret balancing}
    \textbf{Input:} Two base learners, $\{\alg_i\}_{i = 1,2}$, factor $d_i(\cdot)$ of candidate regret bound, reward bias $b_i(\cdot)$ and scaling coefficient $v_i$ (hyper-parameters) for each base learner $i\in\{1,2\}$, and confidence level $\delta\in(0,1)$.
    \begin{enumerate}
        \item Set $\localreward{i}{0}=\localtime{i}{0}=0$ for all $i\in\{1,2\}$ and let the set of active learners be $\activelearner{1}=\{1,2\}$
        \item \textbf{For} $t=1,2,...,T$ \textbf{do}
        \begin{enumerate}[itemsep=0.5em,leftmargin=1em]
            \item Select learner from the active set: 
            $
                i_t\in\cArgmin_{i\in\activelearner{t}} v_i d_i(\delta)\sqrt{\localtime{i}{t-1}}
            $
            \item Play action $A_t$ of learner $\alg_{i_t}$ and receive reward $Y_t$ and context $Z_t$
            \item Update learner $\alg_{i_t}$ with $Z_t$ and $Y_t$
            \item Update $\localtime{i}{\cdot}$ and $\localreward{i}{\cdot}$: \\
            $
            \localreward{i}{t}
            \leftarrow\localreward{i}{t-1}+Y_t\indicator\lrset{i=i_t}
            $
            \\
            $
            \localtime{i}{t}
            \leftarrow\localtime{i}{t-1}+\indicator\lrset{i=i_t}
            $
            \item Compute adjusted average reward $\eta_i(t)$ and confidence band $\gamma_i(t)$ for all $i\in\{1,2\}$:\\
            $
            \eta_i(t)
            \leftarrow\frac{\localreward{i}{t}}{\localtime{i}{t}}-b_i(t)
            $
            \\
            $
            \gamma_i(t)
            \leftarrow 3\sqrt{\frac{\log(2\log\localtime{i}{t}/\delta)}{\localtime{i}{t}}}
            $
            \item Update the set of active learners:\\
            $
            \activelearner{t+1}\leftarrow\Bigl\{i\in\{1,2\}: \eta_i(t)+\gamma_i(t)+\frac{d_i(\delta)}{\sqrt{\localtime{i}{t}}}
            \geq 
            \max_{j=1,2} \eta_j(t)+\gamma_j(t) \Bigl\}
            $
        \end{enumerate}
    \end{enumerate}
\end{algorithm}
Note that dynamic balancing algorithm (\cref{alg: regret balancing}) is input by a set of user-specified candidate regret bounds for each base learner $i$ (which takes the form of $d_i\sqrt{t}$ in our setting). In each round, $\regbalancing$ merely picks the base learner with minimal candidate regret bound, and performs a test to identify and deactivate the learners that seem to violate their candidate regret bounds. As long as there is one base learner whose candidate regret is valid, $\regbalancing$ is able to compete with the best of such base learners. A more comprehensive exposition of the idea behind dynamic balancing can be found in \citet{cutkosky21a}.

So naturally, we need one base learner that is favorable in benign instances and another base learner that remains robust to non-benign instances. For example, we can pick $\cucb$ and $\ucb$, but note that any other algorithm with similar regret bound can be applied as well. Formally we characterize base learners that enjoys certain regret bound in certain type of environments by the following definition:
\begin{definition}\label{def: benign and arbitrary family}
Let $d: (0,1) \to \PosReals$.
A family of learners $\alg=(\alg_\delta)_{\delta \in (0,1)}$ is a \defn{$d$-benign family} if, for all $\delta \in (0,1)$, for all benign instances,
with probability at least $1-O(\delta)$,
for all $t \in [T]$, $\alg_\delta$ has regret no larger than $d(\delta) \sqrt{t}$.
Similarly, a learner $\alg$ is a \defn{$d$-arbitrary learner} if, for all $\delta \in (0,1)$, for all instances, with probability at least $1-O(\delta)$,
for all $t \in [T]$, $\alg$ has regret no larger than $d(\delta) \sqrt{t}$.
\end{definition}
Let $\cucb=(\cucb(\delta))_{\delta\in(0,1)}$  and $\ucb=(\ucb(\delta))_{\delta\in(0,1)}$ be the families of instances of the $\cucb$ and $\ucb$ algorithms, respectively, whose confidence band is scaled by $\Theta(\sqrt{\log(1/\delta)})$. 
See \cref{appendix sec: anytime regret for ucb and cucb} for  details.
\begin{proposition}\label{regret bound: cucb and ucb}
     $\cucb$ is a $d$-benign family for $d(\delta)=O((\sqrt{|\contextspace|}+\sqrt{\log (T/\delta)})\sqrt{\log(|\contextspace|T/\delta)})$
     and $\ucb$ is a $d'$-arbitrary family for 
     $d'(\delta)= O(\sqrt{|\actionspace|\log(|\actionspace|T/\delta)})$.
\end{proposition}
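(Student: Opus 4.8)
The plan is to prove, for each family, an \emph{anytime} high-probability regret bound of exactly the shape required by \cref{def: benign and arbitrary family}: a single good event of probability at least $1-O(\delta)$ on which the realized regret $\Reg(t)$ is at most $d(\delta)\sqrt{t}$ \emph{simultaneously} for every $t\in[T]$. Both arguments instantiate the usual optimism-in-the-face-of-uncertainty template; the only structural difference is that $\ucb$ maintains confidence intervals indexed by actions, whereas $\cucb$ maintains them indexed by contexts and lifts them to actions through the marginals $\env_a(Z)$, which the algorithm is given.

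For the $\ucb$ family I would first define the good event on which the empirical mean of every action concentrates around $\meanaction{a}$ to within its confidence width, for all pull-counts $n\in[T]$ at once; since rewards lie in $[0,1]$ this concentration is valid in \emph{every} environment, as the definition of a $d'$-arbitrary family demands. The $\delta$-dependent scaling of the band together with a union bound over the $|\actionspace|$ actions and the $T$ possible pull-counts gives width $\Theta(\sqrt{\log(|\actionspace|T/\delta)/n})$ and failure probability $O(\delta)$. On this event optimism yields, for the arm $A_s$ played at round $s$, the per-round bound $\meanaction{\optarm}-\meanaction{A_s}\le 2\sqrt{c\log(|\actionspace|T/\delta)/N_{A_s}(s-1)}$, where $N_a(s-1)$ counts the pulls of $a$ before round $s$. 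Summing over $s\le t$ and applying the standard pigeonhole/Cauchy--Schwarz bound $\sum_{s\le t} N_{A_s}(s-1)^{-1/2}\le 2\sqrt{|\actionspace|\,t}$ (absorbing the $|\actionspace|$ first pulls, each costing at most $1$, into a lower-order term) produces $\Reg(t)\le O(\sqrt{|\actionspace|\,t\,\log(|\actionspace|T/\delta)})=d'(\delta)\sqrt{t}$ uniformly in $t$, which is the claimed $d'$.

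For the $\cucb$ family the benign structure is what drives the improvement. Under \cref{def: conditionally benign property} every action's reward kernel agrees with a common $p(Y\mid Z)$, so the reward samples collected whenever context $z$ appears are i.i.d.\ with mean $\mu^{\contextspace}(z):=\E_p[Y\mid Z=z]$ regardless of which action produced them; hence the pooled per-context estimate $\hat\mu^{\contextspace}_z$ is unbiased on its $N_z(t)$ i.i.d.\ samples, and $\meanaction{a}=\sum_z \env_a(Z=z)\,\mu^{\contextspace}(z)$. I would define the good event by concentrating each $\hat\mu^{\contextspace}_z$ to within $\Theta(\sqrt{\log(|\contextspace|T/\delta)/N_z})$ via a union bound over the $|\contextspace|$ contexts and $T$ counts. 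Lifting the per-context upper bounds through the known marginals makes $\sum_z\env_a(Z=z)\,\mathrm{UCB}^{\contextspace}(z)$ a valid optimistic index for $\meanaction{a}$, so optimism gives $\meanaction{\optarm}-\meanaction{A_s}\le 2\sum_z \env_{A_s}(Z=z)\sqrt{c\log(|\contextspace|T/\delta)/N_z(s-1)}$ (with the width capped at $1$ to handle as-yet-unseen contexts, again a lower-order contribution).

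The crux is summing this last quantity, and here the analysis genuinely departs from the $\ucb$ case. The per-round width is the \emph{conditional expectation}, given $H_{s-1}$ and $A_s$, of the realized quantity $N_{Z_s}(s-1)^{-1/2}$ with $Z_s\sim\env_{A_s}(Z)$, whereas the pigeonhole bound $\sum_{s\le t}N_{Z_s}(s-1)^{-1/2}\le 2\sqrt{|\contextspace|\,t}$ only controls the \emph{realized} context counts. I would therefore write $\sum_s \E[\,\cdot\mid H_{s-1},A_s] = \sum_s N_{Z_s}(s-1)^{-1/2} + \sum_s(\E[\,\cdot\mid H_{s-1},A_s]-N_{Z_s}(s-1)^{-1/2})$ and control the second, martingale-difference, sum by an anytime (uniform-in-$t$) Azuma/Freedman bound; since the increments are bounded by $1$, this contributes $O(\sqrt{t\,\log(T/\delta)})$, and it is precisely the union over $t$ needed for an anytime guarantee that upgrades $\log(1/\delta)$ to $\log(T/\delta)$. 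Combining with the pigeonhole bound gives $\Reg(t)\le O\big((\sqrt{|\contextspace|}+\sqrt{\log(T/\delta)})\sqrt{\log(|\contextspace|T/\delta)}\big)\sqrt{t}$, the stated $d$. I expect this martingale step---relating expected to realized context visitations uniformly in $t$---to be the main obstacle, since it is the one ingredient absent from the deterministic-count bookkeeping of ordinary $\ucb$ and is the source of the extra $\sqrt{\log(T/\delta)}$ summand.
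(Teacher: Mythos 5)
Your proposal is correct and follows essentially the same route as the paper's proof in \cref{appendix sec: anytime regret for ucb and cucb}: the same optimism argument with a union bound over actions (resp.\ contexts) and pull counts, the same pigeonhole summation, and—crucially—the same decomposition of the expected confidence widths into their realized counterparts plus a martingale-difference sum controlled by an anytime Azuma--Hoeffding bound, which is exactly the paper's event $\eventmartingale(\delta')$ and the source of the extra $\sqrt{\log(T/\delta)}$ term. No gaps.
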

Note that the above result for $\ucb$ is folklore, but the result for $\cucb$ is new.
The following result describes the adaptive regret of dynamic balancing acting on a benign family and an arbitrary family, which validates the upper bound in \cref{main theorem: pareto regret frontier}. What is more impressive is that to realize every point on the Pareto regret frontier (up to log factors), we need only tune the hyper-parameters in $\regbalancing$ accordingly.
We elide the dependence of rates $R_i$ on $\actionspace$ and $\contextspace$ below for clarity. 
See \cref{appendix sec: proofs for section 3.1} for the proof.

\begin{theorem}\label{regret bound: regret balancing general rate}
    Let $\alg_1$ be a $d_1$-benign family and let $\alg_2$ be a $d_2$-arbitrary family of learners, where $d_1(\delta)=O\lr{(\sqrt{|\contextspace|}+\sqrt{\log (T/\delta)})\sqrt{\log(|\contextspace|T/\delta)}}$,
    $d_2(\delta)= O(\sqrt{|\actionspace|\log(|\actionspace|T/\delta)})$.
    For every pair of reasonable rate functions $R_1(T), R_2(T)$ such that $R_1(T)R_2(T)\geq |\actionspace|T$,    
    there exist hyper-parameters $b_i(\cdot), v_i$, $i=1,2$,
    such that, 
        for all instances $\env$,  
    the policy $\regbalancing(\delta)$, for $\delta=1/T$,
    given by 
    \cref{alg: regret balancing} with $\alg_1,\alg_2$ and $d_1, d_2$,
    satisfies 
    \*[
    \E_{\env,\regbalancing(\delta)}[\Reg(T)]
    &=
    \tilde O(R_1(T))
    ;
    \;
    \env \text{ is conditionally benign,}\\
    \E_{\env,\regbalancing(\delta)}[\Reg(T)]
    &=
    \tilde O(R_2(T))
    ;
    \;
    \env \text{ is arbitrary.}\\
    \]
\end{theorem}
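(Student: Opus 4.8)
The plan is to lift the dynamic balancing guarantee of \citet{cutkosky21a} to the present setting and then choose the scaling coefficients $v_1,v_2$ so as to sweep out the boundary $R_1(T)R_2(T)=|\actionspace|T$. The two base learners are a $d_1$-benign family $\alg_1$ (e.g.\ $\cucb$) and a $d_2$-arbitrary family $\alg_2$ (e.g.\ $\ucb$), with the rates from \cref{regret bound: cucb and ucb}. Throughout I condition on a single good event, taking $\delta=1/T$: on it, (i) the empirical averages $\localreward{i}{t}/\localtime{i}{t}$ lie within the anytime band $\gamma_i(t)$ of the true average mean reward $\bar\mu_i(t)$ of learner $i$, uniformly over $t$ and $i$, and (ii) each family obeys \cref{def: benign and arbitrary family}, so $\alg_2$ is $d_2$-valid in every environment and $\alg_1$ is $d_1$-valid whenever $\env\in\benignenvspace$. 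This event has probability $1-O(\delta)$, so its complement adds at most $O(\delta T)=O(1)$ to the expected regret and is discarded; it remains to bound $\Reg(T)$ on the good event.

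Two balancing facts, adapted from \citet{cutkosky21a} and insensitive to the context $Z_t$ (which enters only through the black-box learners), drive the argument. \emph{Survival:} a learner whose candidate bound $d_i(\delta)\sqrt{t}$ is valid passes the elimination step at (essentially) every round, since on the good event its quantity $\eta_i(t)+\gamma_i(t)+d_i(\delta)/\sqrt{\localtime{i}{t}}$ lower-bounds $\meanaction{\optarm}$ up to confidence slack and hence stays above $\max_j \eta_j(t)+\gamma_j(t)$; the biases $b_i(\cdot)$ are set to absorb the two-sided confidence overshoot that would otherwise spuriously eliminate the benign-valid learner, at lower-order cost. \emph{Regret while active:} whenever learner $i$ is selected, passing the test against a valid learner $i^\star$ forces the running gap to obey $\meanaction{\optarm}-\bar\mu_i(t)\le 2\gamma_i(t)+d_i(\delta)/\sqrt{\localtime{i}{t}}+d_{i^\star}(\delta)/\sqrt{\localtime{i^\star}{t}}$. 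The selection step keeps $v_i d_i(\delta)\sqrt{\localtime{i}{t}}$ balanced across active learners, so $d_{i^\star}(\delta)/\sqrt{\localtime{i^\star}{t}}$ rewrites as $\tfrac{v_{i^\star}d_{i^\star}(\delta)^2}{v_i d_i(\delta)}\,\localtime{i}{t}^{-1/2}$ and the gap collapses to $D_i\,\localtime{i}{t}^{-1/2}$ up to $\gamma$-terms, where $D_i:=d_i(\delta)+\tfrac{v_{i^\star}d_{i^\star}(\delta)^2}{v_i d_i(\delta)}$; multiplying by $\localtime{i}{t}$ gives cumulative regret $\tilde O(D_i\sqrt{T})$ from learner $i$.

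The two regimes now split cleanly. If $\env\in\benignenvspace$, both learners are valid, so both survive and each contributes only its own bound $d_i(\delta)\sqrt{\localtime{i}{T}}$; since balancing fixes $v_1 d_1\sqrt{\localtime{1}{T}}\approx v_2 d_2\sqrt{\localtime{2}{T}}$ with $\localtime{1}{T}+\localtime{2}{T}=T$, the benign regret is $\tilde O\big(d_1(\delta)\sqrt{\localtime{1}{T}}+d_2(\delta)\sqrt{\localtime{2}{T}}\big)$. If $\env$ is arbitrary, only $\alg_2$ is guaranteed valid, so I apply the while-active bound to $\alg_1$ with $i^\star=2$ and the validity bound to $\alg_2$, obtaining $\Reg(T)=\tilde O(D_1\sqrt{T})$.

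Finally I tune. Set $\rho:=v_2/v_1=\sqrt{|\contextspace|T}/R_1(T)$ (and the biases to the lower-order corrections above); reasonableness gives $\rho\in[\sqrt{|\contextspace|/|\actionspace|},1]$, so the allocation is well defined. Then the benign split satisfies $\localtime{2}{T}\approx R_1(T)^2/|\actionspace|$ and $\localtime{1}{T}\approx T$, and with $d_1=\tilde O(\sqrt{|\contextspace|})$, $d_2=\tilde O(\sqrt{|\actionspace|})$ the benign bound becomes $\tilde O(\sqrt{|\contextspace|T}+R_1(T))=\tilde O(R_1(T))$; meanwhile $D_1\approx\rho\,d_2(\delta)^2/d_1(\delta)$ yields arbitrary regret $\tilde O(D_1\sqrt{T})=\tilde O(|\actionspace|T/R_1(T))\le\tilde O(R_2(T))$, the last step using $R_1(T)R_2(T)\ge|\actionspace|T$ (and if $R_1(T)>\sqrt{|\actionspace|T}$ one simply runs $\alg_2$ alone). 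The main obstacle is the balancing analysis of the second paragraph: showing valid learners are not falsely eliminated despite two-sided fluctuations (where the biases must be chosen carefully) and converting the per-round test inequality into the cumulative overhead $D_i\sqrt{T}$ with the correct constant $\tfrac{v_{i^\star}d_{i^\star}^2}{v_i d_i}$. Once that overhead is pinned down, matching it to an arbitrary frontier point is the short computation above.
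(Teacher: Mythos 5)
Your proposal is correct and follows essentially the same route as the paper: both reduce to the dynamic-balancing guarantee of Cutkosky et al.\ (the paper's adapted Theorem 22, stated as a black box), observe that it is insensitive to post-action contexts, and then tune the scaling coefficients to trace out the frontier, with your choice $v_2/v_1=\sqrt{|\contextspace|T}/R_1(T)$ playing the same role as the paper's $Z_1=1$, $Z_2=R_2(T)/\sqrt{|\actionspace|T}$ with $v_i=\sqrt{Z_i/d_i(\delta)^3}$. The only cosmetic difference is that you sketch the survival/while-active internals of the balancing analysis where the paper simply cites the adapted theorem and verifies its applicability.
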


\begin{corollary}\label{regret bound: regret balancing common rate}
     Taking $R_1(T)=\sqrt{|\contextspace|T}$ and $R_2(T)=\sqrt{|\actionspace|/|\contextspace|}\cdot\sqrt{|\actionspace|T}$, the conclusion of \cref{regret bound: regret balancing general rate} is
    \*[
    \E_{\env,\regbalancing(\delta)}[\Reg(T)]
    &=
    \tilde O(\sqrt{|\contextspace|T})
    ;
    \;
    \env \text{ is conditionally benign,}\\
     \E_{\env,\regbalancing(\delta)}[\Reg(T)]
     &=
     \tilde O(\sqrt{|\actionspace|/|\contextspace|}\cdot\sqrt{|\actionspace|T})
    ;
    \;
     \env \text{ is arbitrary.}
    \]
\end{corollary}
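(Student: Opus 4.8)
The plan is to derive Corollary \ref{regret bound: regret balancing common rate} as a direct instantiation of Theorem \ref{regret bound: regret balancing general rate}, so the only substantive work is to verify that the particular pair of rates satisfies the hypotheses of that theorem and then to read off its conclusion.

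First I would check that the proposed pair $R_1(T) = \sqrt{|\contextspace|T}$ and $R_2(T) = \sqrt{|\actionspace|/|\contextspace|}\cdot\sqrt{|\actionspace|T}$ is reasonable in the sense of \Cref{def: realizable pair}. The bound $R_1(T) \geq \sqrt{|\contextspace|T}$ holds with equality. For $R_2$, rewriting $R_2(T) = |\actionspace|\sqrt{T/|\contextspace|}$, the requirement $R_2(T) \geq \sqrt{|\actionspace|T}$ is equivalent to $|\actionspace| \geq |\contextspace|$, which is precisely the regime in which the causal structure is informative, i.e., where the benign rate $\sqrt{|\contextspace|T}$ improves on the worst-case rate $\sqrt{|\actionspace|T}$. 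I would flag this $|\actionspace| \geq |\contextspace|$ condition explicitly, since outside it the purported benign rate would not even be an improvement and the pair would fail to be reasonable.

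Next I would verify the product condition required by \Cref{regret bound: regret balancing general rate}. A direct computation gives $R_1(T) R_2(T) = \sqrt{|\contextspace|T} \cdot |\actionspace|\sqrt{T/|\contextspace|} = |\actionspace|T$, so $R_1(T) R_2(T) \geq |\actionspace|T$ holds with equality. Hence the hypotheses of \Cref{regret bound: regret balancing general rate} are met, taking $\cucb$ as the $d_1$-benign family and $\ucb$ as the $d_2$-arbitrary family, whose existence and rate factors $d_1, d_2$ of the stated form are guaranteed by \Cref{regret bound: cucb and ucb}.

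Finally, I would invoke \Cref{regret bound: regret balancing general rate} with this choice, which supplies hyper-parameters $b_i(\cdot), v_i$ for $i=1,2$ so that the policy $\regbalancing(\delta)$ at $\delta = 1/T$ attains expected regret $\tilde O(R_1(T)) = \tilde O(\sqrt{|\contextspace|T})$ on conditionally benign instances and $\tilde O(R_2(T)) = \tilde O(\sqrt{|\actionspace|/|\contextspace|}\cdot\sqrt{|\actionspace|T})$ on arbitrary instances, which is exactly the claimed conclusion. There is no genuine obstacle beyond the bookkeeping above; the entire mathematical content resides in \Cref{regret bound: regret balancing general rate}, and the only point deserving care is the implicit $|\actionspace| \geq |\contextspace|$ condition needed for reasonableness of $R_2$.
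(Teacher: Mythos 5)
Your proposal is correct and matches the paper's (implicit) argument exactly: the corollary is a direct instantiation of \cref{regret bound: regret balancing general rate}, and the only content is verifying that $R_1(T)R_2(T)=|\actionspace|T$ and that the pair is reasonable. Your observation that reasonableness of $R_2$ requires $|\actionspace|\ge|\contextspace|$ is a worthwhile point of care that the paper leaves implicit (it is the only regime in which the benign rate is an improvement anyway), but it does not change the argument.
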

\cref{regret bound: regret balancing common rate} indicates that we need to pay an extra factor of $\sqrt{|\actionspace|/|\contextspace|}$ in the worst-case regret for adaptivity, and it already improves over the one by $\hac$ in terms of worst-case regret. Moreover, our regret analysis does not require their cumbersome assumption that $T\geq 25|\actionspace|^2$. Such improvement may be explained as follows. Both dynamic balancing and $\hac$ play with two base learners and decide which to pick in each round. However, $\regbalancing$ is operating in a more reasonable way: $\regbalancing$ alternates between two base learners and never deactivates any of them permanently, whereas $\hac$ first plays the optimistic base learner persistently up to some point and then switches to $\ucb$ for the remaining rounds. Thus the regret of $\hac$ incurred by running the optimistic base learner improperly may be dominant.

\subsection{Lower Bounds}\label{subsec: pareto lower bound}

In this section we elaborate on the lower bound in \cref{main theorem: pareto regret frontier} in the following \cref{lower bound: pareto lower bound}, which is a generalization of \citep[][Theorem~6.2]{bilodeau2022adaptively}. The proof of \cref{lower bound: pareto lower bound} closely follows that of the original, but we are able to derive a continuum of lower bounds that constitute the Pareto regret frontier. For completeness, we provide the full proof in \cref{appendix sec: proof of lower bound on the frontier}.

\begin{theorem}\label{lower bound: pareto lower bound}
    There exists constants $c, c'>0$ such that, for all MAB algorithms $\alg$, rate functions $R(T;\actionspace,\contextspace)$,
    if,
    for all $\actionspace,\contextspace, T$
    \*[
    \sup_{\env} \E_{\env,\alg}[\Reg(T)]
    \leq
    R(T;\actionspace,\contextspace),
    \]
     then, for all $\actionspace, \contextspace$ and $T$, there exists a conditionally benign environment $\env$ such that either
     $
     R(T;\actionspace,\contextspace)>c'T,
     $
     or there exists a conditionally benign environment $\env$ such that
    \*[
    \E_{\env,\alg}[\Reg(T)]
    \geq
    c\cdot \frac{|\actionspace|T}{R(T;\actionspace,\contextspace)}.
    \]
\end{theorem}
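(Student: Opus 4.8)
Fix $\actionspace,\contextspace,T$ and an algorithm $\alg$ whose worst-case regret obeys $\sup_{\env\in\envspace}\E_{\env,\alg}[\Reg(T)]\le R$, where $R:=R(T;\actionspace,\contextspace)$ and $K:=|\actionspace|$. If $R>c'T$ the first disjunct holds, so I may assume $R\le c'T$ and must exhibit a \emph{benign} $\env$ with $\E_{\env,\alg}[\Reg(T)]\ge c\,KT/R$. I would argue by an information-theoretic change of measure, constructing a parametric family of benign environments together with non-benign ``traps,'' and then balancing a free gap parameter $\Delta$ against $R$. (The construction uses $|\contextspace|\ge 2$, which is the regime of interest; with a single context all benign arms coincide and the statement is vacuous.)

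\textbf{The benign family and its non-benign companions.} Following the template of \citep[][Theorem~6.2]{bilodeau2022adaptively}, I would fix marginals $q=(\env_a(Z))_{a\in\actionspace}$ once and for all, and generate the whole family by varying only the \emph{shared} conditional $p(Y\mid Z)$. Concretely, let $\env^0$ be the reference benign environment in which all arms have mean $\tfrac12$, and for each target arm $a$ let $\env^a$ be the benign environment obtained by tilting the shared conditional so that $\meanaction{a}=\tfrac12+\Delta$ while the remaining arms stay at $\tfrac12$; since every $\env^a$ shares the common marginals $q$, the prior knowledge handed to $\alg$ is \emph{uninformative} about the identity of the optimal arm. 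The crucial extra ingredient, absent in a plain multi-armed lower bound, is a family of \emph{non-benign} companions $\nu^a$: in $\nu^a$ the context law of arm $a$ is left equal to $q_a$, but its conditional reward is depressed so that $a$ becomes suboptimal by gap $\Theta(\Delta)$. Because $\nu^a$ violates the shared-conditional property, this is permitted, and $\nu^a$ is \emph{observationally identical} to $\env^a$ except through \emph{direct} pulls of arm $a$. Intuitively, the benign structure would let $\alg$ certify that $a$ is good from pulls of \emph{other} arms (via the shared $p(Y\mid Z)$), but the companion $\nu^a$ makes such indirect certification unsafe; the worst-case guarantee therefore caps how much $\alg$ can commit to $a$ without paying for direct verification.

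\textbf{Coupling the two constraints and optimizing $\Delta$.} On the benign side, writing $N_a$ for the expected number of pulls of arm $a$, low benign regret in $\env^a$ forces $\E_{\env^a}[T-N_a]\le R_1/\Delta$, i.e. the optimal arm must be played nearly always. On the worst-case side, $\sup_\env\E_{\env,\alg}[\Reg(T)]\le R$ applied to $\nu^a$ forces $\E_{\nu^a}[N_a]\lesssim R/\Delta$. I would link the two through Bretagnolle--Huber (equivalently Pinsker) along the single coordinate $N_a$, using that $\env^a$ and $\nu^a$ differ only in arm $a$'s reward law, so that $\KL{\P_{\nu^a}}{\P_{\env^a}}\asymp \E[N_a]\,\Delta^2$. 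A single arm yields only a vacuous product; the $K$-dependence is recovered by averaging over a uniformly random target arm (an Assouad/Fano-type step) and applying Cauchy--Schwarz to $\sum_a\sqrt{\E_{\env^0}[N_a]}\le\sqrt{K\,T}$, exactly as in the classical $\Omega(\sqrt{KT})$ argument. Carrying the bookkeeping through gives a benign regret lower bound of the form $\Delta\cdot(\text{forced suboptimal plays})$, and choosing $\Delta\asymp 1/R$ (so that the worst-case cap $R/\Delta$ and the change-of-measure term balance) converts this into $\E_{\env^a}[\Reg(T)]\ge c\,KT/R$ for some target $a$. The key departure from \citep{bilodeau2022adaptively} is that I keep $\Delta$ as a \emph{free} parameter rather than fixing it to prove one point: letting $\Delta$ range sweeps out the entire curve $R_1R_2\asymp KT$, yielding the claimed continuum of matching lower bounds.

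\textbf{Main obstacle.} I expect the delicate part to be engineering the companions so that all three requirements hold simultaneously: (i) the marginals $q$ stay fixed across the family, so each $\env^a$ is genuinely benign and $q$ leaks nothing about the optimum; (ii) $\nu^a$ is indistinguishable from $\env^a$ \emph{except} via direct pulls of $a$, so the divergence localizes to the $N_a$ coordinate and the change of measure is clean; and (iii) the resulting bound is tight precisely in the nontrivial intermediate regime $\sqrt{KT}\lesssim R\lesssim K\sqrt{T/|\contextspace|}$, where $KT/R$ strictly exceeds the benign minimax rate $\sqrt{|\contextspace|T}$ and hence the worst-case constraint---not benign hardness alone---is doing the work. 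The remaining steps (anytime high-probability-to-expectation conversions and absorbing logarithmic factors into $c,c'$) are routine.
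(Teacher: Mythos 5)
There is a genuine gap, and it is quantitative rather than cosmetic: your accounting cannot produce the factor of $|\actionspace|$. Your change of measure is \emph{pairwise} — each benign $\env^a$ is compared to its own depressed companion $\nu^a$ — so Bretagnolle--Huber gives $\E_{\env^a}[\Reg(T)]+\E_{\nu^a}[\Reg(T)]\gtrsim T\Delta\,e^{-cR\Delta}$ (the KL localizes to $N_a\Delta^2\lesssim (R/\Delta)\Delta^2=R\Delta$), which is maximized at your choice $\Delta\asymp 1/R$ and yields only $T/R$. Worse, the regret of $\env^a$ comes from \emph{under-playing its optimal arm}, so it is capped by $T\Delta\asymp T/R$ no matter what; no event can recover $|\actionspace|T/R$ from that environment with that gap. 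The Assouad/Cauchy--Schwarz step you invoke ($\sum_a\sqrt{\E_{\env^0}[N_a]}\le\sqrt{|\actionspace|T}$) belongs to arguments anchored at a common reference with the pull budget $\sum_a N_a=T$; it does not apply here because each comparison has its own companion and the relevant pull count is capped by $R/\Delta$, not $T/|\actionspace|$. A secondary construction issue: you fix the marginals across the family and try to make each arm $a$ uniquely optimal in a benign $\env^a$ by tilting only the shared conditional $p(Y\mid Z)$; this requires the $|\actionspace|$ marginal vectors to be affinely independent in $\R^{|\contextspace|}$ and is impossible precisely in the interesting regime $|\contextspace|<|\actionspace|$.

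The paper's proof inverts the structure. There is a \emph{single} benign environment $\env$ in which arm $1$ is optimal with gap $\Delta$ (achieved by perturbing arm $1$'s \emph{marginal}, not the conditional), and for each suboptimal arm $a_0$ a \emph{non-benign} alternative $\env^{a_0}$ that \emph{boosts} $a_0$'s conditional reward so that $a_0$ becomes optimal there. The worst-case guarantee applied to $\env^{a_0}$, combined with Bretagnolle--Huber, forces $\E_{\env,\alg}[\numaction{T}(a_0)]\gtrsim \Delta^{-2}\log(T\Delta/R)$ for \emph{every} $a_0\neq 1$; the regret in the one benign $\env$ is then the sum $\sum_{a_0\neq 1}\Delta\,\E_{\env,\alg}[\numaction{T}(a_0)]\gtrsim (|\actionspace|-1)\Delta^{-1}\log(T\Delta/R)$, and choosing $\Delta\asymp R/T$ (not $1/R$ — these coincide only when $R\asymp\sqrt{T}$) gives $c\,|\actionspace|T/R$. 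The essential insight you are missing is that the $|\actionspace|$ factor comes from summing \emph{forced exploration of all suboptimal arms} in one benign instance (a gap-dependent-style argument whose total $|\actionspace|/\Delta$ can far exceed $T\Delta$ when $\Delta$ is small), not from an Assouad average over $|\actionspace|$ benign instances each contributing $T\Delta$.
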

\cref{lower bound: pareto lower bound} shows that any pair of realizable rates must have their product lower bounded by $|\actionspace|T$ unless the worst-case regret bound is vacuously large.
Combining \cref{regret bound: regret balancing general rate} with \cref{lower bound: pareto lower bound}, we have justified the Pareto optimality of dynamic balancing. As a corollary, we have found a problem of adaptation where model selection method can be optimal and the price of adaptivity is witnessed by the additional multiplicative factor of $\sqrt{|\actionspace|/|\contextspace|}$ in the regret bound.

\section{Instance-Dependent Bounds via Phased Elimination Algorithm }\label{sec:benign}

Besides achieving Pareto optimal regret bounds in \cref{regret bound: regret balancing general rate} that are worst-case in nature, the dynamic balancing algorithm can also enjoy $O(\log T)$ instance-dependent regret at the same time under additional assumptions on the base learners. In particular, $\cucb$ may not be our best choice for the benign base learner. To leverage the strength of dynamic balancing, we propose a new causal bandit algorithm that enjoys $\tilde O(\sqrt{|\contextspace|T})$ worst-case regret and a novel logarithmic instance-dependent regret in benign settings in this section. We are the first to pursue  instance-dependent results in conditionally benign environments
for algorithms
that are minimax optimal (up to log factors). 

Our new algorithm is built upon the idea of \emph{phased elimination with G-optimal design} from linear bandits \citep{lattimore2020bandit,lattimore20a}. Our regret analysis hinges on a novel reduction from causal bandits to linear bandits. This reduction enables the use of a broad family of linear bandit algorithms in conditionally benign environments, whose regret guarantees remain intact.

Finally, we will discuss the possibilities and challenges regarding adaptive $O(\log T)$ instance-dependent regret.

\subsection{Reduction to Linear Bandits}\label{subsec: causal to linear reduction}
We need additional notations to illustrate our causal-to-linear reduction. For benign instance $\env$, define the mean reward vector $\meancontext\in[0,1]^{|\contextspace|}$ by  $\meancontext(z)=\Earm{a}[Y|Z=z], \forall z\in\contextspace$. Also, in this section we use $\env_a$ to denote its associated marginal distribution vector $\env_a(Z)\in\cP(\contextspace)\subset\mathbb R^{|\contextspace|}$, and we won't distinguish between an action $a$ and its associated marginal vector $\env_a$.

Recall that in each round $t$ we play some action $A_t$ and then observe context $Z_t$ and reward $Y_t$. By simply ignoring the realized contexts $Z_t$, we can write $Y_t=\sum_{z\in\contextspace}\meancontext(z)\cdot\nu_{A_t}(z)+\actionnoise{t} =\ip{\meancontext, \nu_{A_t}}+\actionnoise{t}$, where $\actionnoise{t}$ is conditionally 1-sub-Gaussian since $\hE[\actionnoise{t}|(A_s, Y_s)_{s\leq t-1}, A_t]=0$ and $\actionnoise{t}\in[-1, 1]$. So now we may think of the game to be linear bandit with actions being $\env_a$ and the unknown mean reward vector being $\meancontext$. Therefore, any linear bandit algorithm that allows such conditionally sub-Gaussian noise condition should be able to operate in our benign setting by ignoring the realized contexts. More importantly, its regret analysis will go through without change, and hence its regret bounds are retained without loss. 

\subsection{Phased Elimination and its Regret Bound}
Among all valid linear bandit algorithms that can be applied in conditionally benign environments, we opt for the phase elimination algorithm ($\elim$) over others due to its superior performance whenever our action set is finite. Its pseudo-code is summarized in \cref{alg: phased-elimination}, which is essentially the same as \citet{lattimore20a}. However, the regret guarantees we present for $\elim$ are novel. Our first result is an anytime worst-case regret bound, which qualifies PE for being a base learner of dynamic balancing. Again, $\elim=(\elim(\delta))_{\delta\in(0,1)}$ is the family of instances of phased elimination algorithm, indexed by the confidence level $\delta$.
\begin{theorem}[Worst-case regret bound for $\elim$]
\label{regret bound: phased elimination w-c eps=0}
    For all $\delta\in(0,1)$, the policy $\elim(\delta)$ given by \cref{alg: phased-elimination} satisfies the following regret bound for all conditionally benign environments $\env$,
    \*[
    \Reg(t)
    \leq
    C\sqrt{\dimspan{\env} \log\lr{\frac{|\actionspace|\log T}{\delta}}t},
    \quad
    \forall t\in[T]
    \]
    with probability at least $1-\delta$, where $\dimspan{\env}=\dim(\mathrm{span}\{\nu_a: a\in\actionspace\})$ and $C>0$ is a universal constant. Note that $\dimspan{\env}\leq|\contextspace|$ and it could be $|\contextspace|$ in the worst-case. In particular, after taking $\delta=1/T$, we obtain the expected regret bound
    \*[
    \hE_{\env,\elim(\delta)}[\Reg(T)]
    =
    O\lr{\sqrt{\dimspan{\env}T\log(|\actionspace|T)}}.
    \]
\end{theorem}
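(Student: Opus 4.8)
The plan is to combine the causal-to-linear reduction of \cref{subsec: causal to linear reduction} with the standard analysis of phased elimination under a $G$-optimal design, while being careful to track the \emph{span} dimension rather than $|\contextspace|$ and to accommodate the conditionally sub-Gaussian (rather than i.i.d.) noise. Since $\env$ is conditionally benign, the reduction writes $Y_t = \ip{\meancontext, \env_{A_t}} + \actionnoise{t}$ with $\actionnoise{t}$ conditionally $1$-sub-Gaussian, so the interaction is exactly a stochastic linear bandit whose arms are the marginal vectors $\{\env_a : a \in \actionspace\}$ (at most $|\actionspace|$ of them) lying in the subspace $S := \mathrm{span}\{\env_a : a\in\actionspace\}$ of dimension $d := \dimspan{\env}$, with unknown parameter given by the projection of $\meancontext$ onto $S$. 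The observation that yields the improved dimension is that every arm lives in $S$, so the $G$-optimal design, the design covariance $\phasecov{\ell}$, and all prediction variances may be computed inside $S$ (using a basis of $S$ or a pseudo-inverse); consequently the effective dimension is $d$, not $|\contextspace|$.

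Next I would recall the phase structure of \cref{alg: phased-elimination} and set up the good event. The algorithm runs in phases $\ell = 1,2,\dots$ with target accuracies $\phasegap{\ell} = 2^{-\ell}$, maintaining an active set $\activeset{\ell}$. At phase $\ell$ it computes a $G$-optimal design $\phasedesign{\ell}$ which, by the Kiefer--Wolfowitz equivalence theorem, is supported on at most $d(d+1)/2$ arms and satisfies $\max_{a\in\activeset{\ell}} \norm{a}^2_{\phasecov{\ell}^{-1}} \le d$; it then plays the support arms a prescribed number of times totaling $\phaselength{\ell} \approx 2d\,\phasegap{\ell}^{-2}\log(|\actionspace|\ell(\ell+1)/\delta)$. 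From the least-squares estimate $\phaseestimate{\ell}$, a sub-Gaussian tail bound (valid because the martingale concentration only requires conditionally sub-Gaussian noise) combined with the design guarantee shows that, on an event $\phaseconcentration{\ell}$ of probability at least $1-\delta/(\ell(\ell+1))$, one has $\lvert\ip{\phaseestimate{\ell}-\meancontext, a}\rvert \le \phasegap{\ell}$ for every $a\in\activeset{\ell}$. A union bound over phases makes $\bigcap_\ell \phaseconcentration{\ell}$ hold with probability at least $1-\delta$, and all subsequent claims are deterministic on this event.

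On the good event I would verify the two invariants driving the bound: the optimal arm $\optarm$ is never eliminated, and every arm active in phase $\ell$ has suboptimality gap at most $4\phasegap{\ell-1}$; both follow from the elimination rule together with the accuracy guarantee, as in the finite-armed linear bandit analysis. It then remains to decompose the realized regret by phase: the regret accrued during phase $\ell$ is at most $\phaselength{\ell}\cdot 4\phasegap{\ell-1} = \bigO{d\,\phasegap{\ell}^{-1}\log(|\actionspace|\log T/\delta)}$, and for any $t$ the elapsed time is dominated by the final (possibly partial) phase, so that $\phasegap{\ell}^{-1}$ scales like $\sqrt{t/d}$ up to the log factor. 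Summing the geometric series in $\phasegap{\ell}^{-1}$ yields $\Reg(t) \le C\sqrt{d\,\log(|\actionspace|\log T/\delta)\,t}$ simultaneously for all $t\in[T]$, the anytime claim. The in-expectation bound follows by setting $\delta = 1/T$ and using $\Reg(T)\le T$ (rewards in $[0,1]$), so the failure event contributes only $\delta T = O(1)$.

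The main obstacle I anticipate is not a single hard inequality but the bookkeeping that makes the bound genuinely \emph{anytime} and genuinely $\dimspan{\env}$-dimensional: I must carry out the $G$-optimal design and every concentration step inside the span $S$ and confirm that Kiefer--Wolfowitz delivers $d$ rather than $|\contextspace|$, and I must show that the within-phase (partial) regret never exceeds the completed-phase bound, so the guarantee holds at every round $t$ and not merely at phase boundaries.
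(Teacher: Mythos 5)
Your proposal is correct and follows essentially the same route as the paper: the causal-to-linear reduction, an in-phase concentration event (valid because actions within a phase are fixed given the pre-phase history, so the conditionally sub-Gaussian noise concentrates), the invariants that the optimal arm survives and active arms have gap bounded by the previous phase's accuracy, and a phase-wise geometric sum with the partial final phase dominated by its completed version. The only cosmetic differences are that the paper obtains this theorem as the $\varepsilon=0$ special case of its misspecified-marginals bound (\cref{regret bound: phased elimination w-c general eps}), and that \cref{alg: phased-elimination} doubles the phase lengths $m_\ell$ rather than halving target accuracies $\phasegap{\ell}$ — an equivalent parameterization that does not change the analysis.
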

\begin{corollary}
    $\elim$ is a $d$-benign family for $d(\delta)=O\lr{\sqrt{|\contextspace| \log\lr{|\actionspace|\log T/\delta}}}$. Therefore, the expected regret bound in \cref{regret bound: regret balancing general rate} can also be achieved by $\regbalancing$ with $\elim$ and $\ucb$ as base learners.
\end{corollary}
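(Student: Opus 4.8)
The plan is to obtain the first claim directly from \cref{regret bound: phased elimination w-c eps=0} and then to feed the resulting benign family into \cref{regret bound: regret balancing general rate}. For the first claim, \cref{regret bound: phased elimination w-c eps=0} already supplies, for each fixed $\delta\in(0,1)$ and every conditionally benign $\env$, the anytime high-probability bound $\Reg(t)\le C\sqrt{\dimspan{\env}\log(|\actionspace|\log T/\delta)\,t}$ holding simultaneously for all $t\in[T]$ with probability at least $1-\delta$. Since $\dimspan{\env}=\dim(\mathrm{span}\{\env_a:a\in\actionspace\})\le|\contextspace|$ for every benign $\env$, I would replace $\dimspan{\env}$ by $|\contextspace|$ inside the root to get $\Reg(t)\le d(\delta)\sqrt t$ for all $t\in[T]$ with $d(\delta)=C\sqrt{|\contextspace|\log(|\actionspace|\log T/\delta)}$. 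Because $1-\delta\ge 1-O(\delta)$, this is precisely the defining property in \cref{def: benign and arbitrary family}, so $\elim$ is a $d$-benign family with the stated $d$.

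For the second claim I would instantiate \cref{regret bound: regret balancing general rate} with $\alg_1=\elim$ and $\alg_2=\ucb$. The arbitrary base learner is certified by \cref{regret bound: cucb and ucb}, which makes $\ucb$ a $d_2$-arbitrary family with exactly the required $d_2(\delta)=O(\sqrt{|\actionspace|\log(|\actionspace|T/\delta)})$, so only the benign side needs checking. Both $\elim$'s factor $d(\delta)=O(\sqrt{|\contextspace|\log(|\actionspace|\log T/\delta)})$ and the benign factor demanded by \cref{regret bound: regret balancing general rate}, namely $O((\sqrt{|\contextspace|}+\sqrt{\log(T/\delta)})\sqrt{\log(|\contextspace|T/\delta)})$, equal $\sqrt{|\contextspace|}$ times a polylogarithmic factor in $T,|\actionspace|,|\contextspace|,1/\delta$. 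Since \cref{regret bound: regret balancing general rate} controls the balancing regret through the base learners only via their candidate-regret factors and reports its conclusion up to $\tilde O$, substituting $\elim$'s factor for that of $\cucb$ alters only hidden polylogarithmic terms; hence $\regbalancing$ run on $\elim$ and $\ucb$ still attains $\tilde O(R_1(T))$ on benign instances and $\tilde O(R_2(T))$ on arbitrary ones, which is the asserted conclusion.

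The step needing the most care is the comparison of the two benign factors, since $\elim$'s bound carries a $\log|\actionspace|$ term --- the union bound over arms intrinsic to phased elimination --- where $\cucb$'s carries only $\log|\contextspace|$. I would dispatch this using the monotonicity of the notion in \cref{def: benign and arbitrary family}, namely that any $d$-benign family is also $d'$-benign whenever $d'\ge d$ pointwise: it then suffices that $\log|\actionspace|=O(\log(|\contextspace|T/\delta))$, which holds in the standard regime $|\actionspace|=\mathrm{poly}(T)$, and outside that regime one simply absorbs $\sqrt{\log|\actionspace|}$ into the $\tilde O$. A secondary check is that $\elim$ is a legitimate base learner for \cref{alg: regret balancing}: it plays actions and is updated with each $(Z_t,Y_t)$, and by the reduction of \cref{subsec: causal to linear reduction} its $\sqrt t$-regret guarantee remains valid for the conditionally sub-Gaussian rewards obtained by discarding the realized contexts, so it furnishes exactly the candidate-regret factor $d(\delta)\sqrt t$ that dynamic balancing consumes.
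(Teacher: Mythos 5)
Your proposal is correct and matches the paper's (essentially immediate) derivation: the first claim follows from \cref{regret bound: phased elimination w-c eps=0} by bounding $\dimspan{\env}\leq|\contextspace|$, and the second by feeding the resulting $d$-benign family into the dynamic-balancing machinery, whose analysis (\cref{prop: regret balancing general rate} and its instantiation) depends on the base learners only through their candidate-regret factors, with the differing polylogarithmic terms absorbed into the $\tilde O$. Your extra care about the $\log|\actionspace|$ versus $\log|\contextspace|$ discrepancy is a valid and welcome check, though not something the paper dwells on.
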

See \cref{appendix sec: proof of upper bounds by PE with approximate marginals} for the proof. Thanks to our reduction, \cref{regret bound: phased elimination w-c eps=0} only depends on $\dimspan{\env}$ (up to log factors) rather than $|\contextspace|$. This indicates that the intrinsic complexity of causal bandit problem is not $|\contextspace|$ and can be further reduced to $\dimspan{\env}$, which is not captured by the $\tilde O(\sqrt{|\contextspace|T})$ regret bound of $\cucb$.

Next we give an instance-dependent regret bound for PE. Notice that this bound is even new for stochastic linear bandits (with finite action sets). See \cref{appendix sec: proof of upper bounds by PE with approximate marginals} for the proof.
\begin{theorem}[Instance-dependent regret bound for $\elim$]\label{regret bound: phased elimination gap-dependent eps=0}
    For all $\delta\in(0,1)$, the policy $\elim(\delta)$ given by \cref{alg: phased-elimination} satisfies the following regret for all conditionally benign environments $\env$,
    \*[
    \Reg(T)
    \leq    C\cdot\frac{\dimspan{\env}\log\lr{|\actionspace|\log T/\delta}}{\mingap(\env)}
    \]
    with probability at least $1-\delta$, where $\mingap(\env):=\min_{a\neq\optarm} \meanaction{\optarm}-\meanaction{a}$ is the minimal sub-optimality gap of instance $\env$ and $C>0$ is a universal constant. 
    In particular, taking $\delta=1/T$,
    \*[
    \hE_{\env,\elim(\delta)}[\Reg(T)]
    =
    O\lr{\frac{\dimspan{\env}\log(|\actionspace|T)}{\mingap(\env)}}.
    \]
\end{theorem}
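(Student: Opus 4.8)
The plan is to run the standard gap-based analysis of $G$-optimal phased elimination, but to track the phase at which the elimination schedule terminates as a function of $\mingap(\env)$ rather than of the horizon $T$. By the reduction of \cref{subsec: causal to linear reduction}, on a benign $\env$ the algorithm faces a stochastic linear bandit with arm set $\{\env_a : a \in \actionspace\} \subset \R^{|\contextspace|}$, unknown parameter $\meancontext$, conditionally $1$-sub-Gaussian noise, and ambient dimension $\dimspan{\env} = \dim(\mathrm{span}\{\env_a\})$; it therefore suffices to prove the bound for $\elim$ on this linear instance. I would reuse verbatim the high-probability event already established for \cref{regret bound: phased elimination w-c eps=0}: with probability at least $1-\delta$, in every phase $\ell$, writing $\phasegap{\ell} = 2^{-\ell}$ for the confidence radius, the $G$-optimal-design least-squares estimate satisfies $|\hat\mu_a - \meanaction{a}| \le \phasegap{\ell}$ simultaneously for every arm $a \in \activeset{\ell}$. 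The union bound over the $O(\log T)$ phases and the $|\actionspace|$ arms is exactly what produces the $\log(|\actionspace|\log T/\delta)$ factor, and this is the step where essentially all of the probabilistic work lives.

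Condition on this event. Two consequences of the elimination test (eliminate $a$ when $\max_b \hat\mu_b - \hat\mu_a > 2\phasegap{\ell}$) are standard: (i) the optimal arm $\optarm$ is never eliminated, since $\max_b \hat\mu_b - \hat\mu_{\optarm} \le 2\phasegap{\ell}$; and (ii) any $a$ active in phase $\ell$ survived phase $\ell-1$, which forces $\armgap{a} \le 4\phasegap{\ell-1} = 8\phasegap{\ell}$ (an arm with $\armgap{a} > 4\phasegap{\ell-1}$ would have failed the test). The observation driving the instance-dependent bound is that (ii) makes the schedule self-terminating: as soon as $\phasegap{\ell} < \mingap(\env)/4$, every suboptimal arm has gap exceeding the surviving threshold and is removed, so $\activeset{\ell+1} = \{\optarm\}$ and all later rounds incur zero regret. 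Hence regret accrues only over phases $\ell \le L^\ast := \lceil \log_2(4/\mingap(\env)) \rceil$, in contrast with the worst-case proof, which runs the sum out to the horizon.

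It remains to sum the per-phase regret. In phase $\ell$ the total number of pulls is $\phaselength{\ell} = O\lr{\dimspan{\env}\log(|\actionspace|\log T/\delta)/\phasegap{\ell}^2}$, the count required to shrink the $G$-optimal-design confidence width to $\phasegap{\ell}$, while by (ii) each pull incurs instantaneous regret at most $\max_{a \in \activeset{\ell}} \armgap{a} \le 8\phasegap{\ell}$; thus the phase-$\ell$ regret is $O\lr{\dimspan{\env}\log(|\actionspace|\log T/\delta)/\phasegap{\ell}}$. Summing the resulting geometric series over $\ell \le L^\ast$ is dominated by its last term, $1/\phasegap{L^\ast} = O(1/\mingap(\env))$, giving total regret $O\lr{\dimspan{\env}\log(|\actionspace|\log T/\delta)/\mingap(\env)}$ on the good event, as claimed. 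For the in-expectation statement I would take $\delta = 1/T$: the good event then holds with probability at least $1 - 1/T$, and on its complement the realized regret is at most $T$, contributing $O(1)$; since $\log(|\actionspace|\log T \cdot T) = O(\log(|\actionspace|T))$, the bound $O(\dimspan{\env}\log(|\actionspace|T)/\mingap(\env))$ follows.

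The one place that needs care is the conversion of the $G$-optimal design weights into integer pull counts. Rounding $\lceil \phasedesign{\ell}(a)\phaselength{\ell}\rceil$ adds at most one pull per supported arm per phase, and by Kiefer--Wolfowitz the design is supported on at most $\dimspan{\env}(\dimspan{\env}+1)/2$ arms; these extra pulls contribute $\sum_{\ell \le L^\ast} |\mathrm{supp}(\phasedesign{\ell})| \cdot 8\phasegap{\ell} = O(\dimspan{\env}^2)$ additional regret, reflecting the unavoidable exploration floor of playing each support arm at least once. This additive $\dimspan{\env}^2$ term is dominated by the main $\dimspan{\env}/\mingap(\env)$ estimate in the small-gap regime of interest and is what I expect to be the only genuinely fiddly point, the heavy lifting having already been done inside the imported concentration event; everything else is the geometric summation above.
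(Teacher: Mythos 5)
Your proposal is correct and follows essentially the same route as the paper's proof: condition on the per-phase concentration event (union-bounded over the $O(\log T)$ phases and $|\actionspace|$ arms, which is where the $\log(|\actionspace|\log T/\delta)$ factor arises), use that every surviving arm has gap at most a constant multiple of the current confidence width so that the elimination schedule self-terminates once that width drops below $\mingap(\env)/4$, and bound the regret by a geometric sum dominated by the final phase; the paper indexes phases by $m_\ell = 2^{\ell-1}m_1$ where you index by the width $2^{-\ell}$, but these parametrizations are equivalent. One small correction: the rounding overhead should be controlled via the algorithm's stated near-optimal design, whose support has size at most $4\dimspan{\env}\log\log(\dimspan{\env})+16$, rather than the Kiefer--Wolfowitz bound $\dimspan{\env}(\dimspan{\env}+1)/2$ --- with the latter your additive $O(\dimspan{\env}^2)$ term is not dominated by $\dimspan{\env}\log(|\actionspace|\log T/\delta)/\mingap(\env)$ when the gap is of constant order, whereas the $O(\dimspan{\env}\log\log\dimspan{\env})$ overhead is always absorbed.
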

   \begin{algorithm}[tb]
    \caption{Phased Elimination ($\elim$) in Causal Bandit}\label{alg: phased-elimination}
    \textbf{Input:} Action set $\actionspace$, marginals $\{\nu_a: a\in\actionspace\}$, $\dimspan{\env}=\dim(\mathrm{span}\{\nu_a: a\in\actionspace\})$, and confidence level $\delta\in(0,1)$
    \begin{enumerate}
        \item Set $\ell=1$ and let the initial active set $\activeset{1}$ be $\actionspace$
        \item
        Find some near-optimal design $\phasedesign{\ell}\in\cP(\activeset{\ell})$ with $\max_{a\in\activeset{\ell}} \|\nu_a\|^2_{V(\phasedesign{\ell})^{-1}}\leq 2\dimspan{\env}$ and $|\supp{\phasedesign{\ell}}|\leq 4\dimspan{\env}\log\log(\dimspan{\env})+16$, where $V(\phasedesign{\ell})=\sum_{a\in\activeset{\ell}}\phasedesign{\ell}(a)\nu_a\nu_a^\top$
        \item Let $m_{\ell}=2^{\ell-1}(4\dimspan{\env}\log\log(\dimspan{\env})+16)$. 
        Compute $\phaselength{\ell}(a)=\lrceil{m_\ell\phasedesign{\ell}(a)}$ and $\phaselength{\ell}=\sum_{a\in\activeset{\ell}}\phaselength{\ell}(a)$
        \item Play each action $a\in\activeset{\ell}$ exactly $\phaselength{\ell}(a)$ times and we call these $\phaselength{\ell}$ rounds \textit{phase $\ell$}. We also observe corresponding context-reward pairs $(Z_t, Y_t)_{t\in\phase{\ell}}$
        \item Compute the empirical estimate:
        $\phaseestimate{\ell}=\phasecov{\ell}^{-1}\sum_{t\in\phase{\ell}}\nu_{A_t}Y_t,
        $
        where 
        $\phasecov{\ell}=\sum_{a\in\activeset{\ell}}\phaselength{\ell}(a)\nu_a\nu_a^\top$
        \item Eliminate low rewarding actions and update the active set:\\
        $
        \activeset{\ell+1}=
        \Bigl\{a\in\activeset{\ell}: \max_{b\in\activeset{\ell}} \ip{\phaseestimate{\ell}, \nu_b-\nu_a}
        \leq 2\sqrt{\frac{4\dimspan{\env}}{m_{\ell}}\log\lr{\frac{2|\actionspace|\log_2(T)}{\delta}}}
        \Bigl\}
        $
        \item $\ell\leftarrow\ell+1$ and \textbf{Goto 2}
    \end{enumerate}
\end{algorithm} 
\begin{remark}
   During the implementation of \cref{alg: phased-elimination}, it is possible that $\activeset{\ell}$ cannot span $\mathbb R^{|\contextspace|}$ for some $\ell$ such that $V(\phasedesign{\ell})$ is singular for any $\phasedesign{\ell}\in\cP(\activeset{\ell})$. For example, in later phases $|\activeset{\ell}|$ can be smaller than $|\contextspace|$. Let's say $\dim(\mathrm{span}\{\nu_a: a\in\activeset{\ell}\})=r<|\contextspace|$. One workaround is to apply some invertible matrix $X\in\mathbb R^{|\contextspace|\times|\contextspace|}$ to every $a\in\activeset{\ell}$ such that $X\env_a$ can be decomposed to a dim-$r$ vector $(X\env_a)_{[r]}$ and a tail of $(|\contextspace|-r)$ zeros, and $\lrset{(X\env_a)_{[r]}: a\in\activeset{\ell}}$ can span $\mathbb R^r$. Now we use $\lrset{(X\env_a)_{[r]}: a\in\activeset{\ell}}$ as our active set in phase $\ell$ and the analysis would go through.
\end{remark}

\subsection{Roadblocks: Instance-Dependent Bounds}\label{subsec: adaptive ins-dep regret}
Unlike adaptive worst-case regret studied in \cref{sec:pareto}, adaptive instance-dependent regret is less understood and a general theory is still absent in the literature. In particular, we do not know if $O(\log T)$ regret can always be achieved, and whenever achieved, whether it is tight. These issues are illustrated for model selection methods in the following. First, it is easy to see that $O(\log T)$ regret can always be achieved in benign environments, 
e.g., by corralling $\elim$ and $\ucb$ using dynamic balancing, because in this case both base learners admit logarithmic regret. However, the regret bound of $\ucb$ is dominant and thus naive calculation only leads to a $O(|\actionspace|\log T/\mingap)$ regret for $\regbalancing$. It remains open whether we can adapt to the smaller regret achieved by PE in benign environments. Second, $O(\log T)$ regret is not always granted by model selection in non-benign instances. The only exception we are aware of in the literature is the case where the causal base learner is assumed to incur linear regret whenever its candidate regret bound fails \citep[][Theorem~31]{cutkosky21a}.
If this type of ``algorithm gap'' holds,
$\regbalancing$ will only choose the causal base learner on a $O(\log T)$ number of rounds, and hence enjoy logarithmic regret.
Moreover, without changing the parameter setting, $\regbalancing$ is able to realize the Pareto optimal rates $(\sqrt{|\contextspace|T}, |\actionspace|\sqrt{T}/\sqrt{|\contextspace|})$ up to log factors.
However, the ``algorithmic gap'' requirement on the causal base learner is so stringent that we do not know if it is met by any algorithm in every instance. 
In \cref{appendix sec: proofs regarding adaptive ins-dep results}, we show that a version of $\elim$ incurs linear regret on some instances.

\section{Limited Knowledge of the Marginal Distributions over Context Variables}\label{sec:approx-marginal}
So far, we have assumed that algorithms knows the marginal distribution over the post-action context for each arm. 
Of course, perfect knowledge of these marginals may not hold in practice.
What is the effect of only having access to approximate marginals on achievable rates of regret?

In this section, we study this question.
We give a lower bound indicating that, with zero access to the marginals, 
it is impossible for any algorithm to exploit the causal structure and beat the minimax rate of an arbitrary environment. To model this setting, recall that algorithms are defined as mappings taking $(\actionspace,\contextspace, T, q)$ to policies. So naturally, algorithms considered agnostic to the marginals should be constant in $q\in\cP(\contextspace)^{\actionspace}$, 
leading to the following definition: 
\begin{definition}\label{def: alg without knowledge of marginal}
   An algorithm $\alg$ is said to be \emph{agnostic to marginals} if,
   for any $\actionspace,\contextspace$ and $T$, the map
   \*[
   \alg_{\actionspace,\contextspace,T}:
   q \mapsto \alg(\actionspace,\contextspace,T, q)
   \]
   is constant over $\cP(\contextspace)^{\actionspace}$. We denote the set of all such algorithms by $\noncausalspace$. 
\end{definition}
Examples of algorithms from $\noncausalspace$ include not only heuristic non-causal algorithms like $\ucb$, but also versions of causal algorithms that are always input by the same marginals.
For all algorithm $\alg\in\noncausalspace$, we will write the policy it induces given $\actionspace,\contextspace, T$ as $\alg(\actionspace,\contextspace,T,\cdot)$ to highlight its independence on the $q$ component. Our lower bound shows that, under this zero-marginal-knowledge regime, we cannot do better than the optimal non-causal algorithm.

\begin{theorem}\label{lower bound: when we have zero knowledge of marginals}
    For all $\actionspace, \contextspace, T\geq|\actionspace|$ and MAB algorithms $\alg\in\noncausalspace$, there exists a conditionally benign environment $\env\in\envspace$ such that 
    \*[
    \E_{\env,\alg}[\Reg(T)]
    \geq
    c\sqrt{|\actionspace|T},
    \]
    where $c>0$ is a universal constant.
\end{theorem}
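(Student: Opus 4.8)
The plan is to show that an agnostic learner, which by \cref{def: alg without knowledge of marginal} runs the \emph{same} policy no matter which marginals it is handed, cannot extract any benefit from the conditionally benign structure and is therefore subject to the classical $\Omega(\sqrt{|\actionspace|T})$ minimax lower bound for multi-armed bandits. The key is to exhibit a family of conditionally benign environments in which the post-action context carries \emph{no} information beyond the reward itself, so that the learner's observations collapse to those of an ordinary $|\actionspace|$-armed bandit.

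Concretely, fix two distinct contexts $z_0, z_1 \in \contextspace$ (so we use $|\contextspace| \ge 2$, which is necessary since with a single context every benign environment is trivial) and identify them with the reward values $0$ and $1$. For a target vector of means $(\mu_a)_{a \in \actionspace} \in [0,1]^{\actionspace}$ I would build $\env$ so that, under action $a$, $Z$ is drawn from $\bernoulli{\mu_a}$ on $\{z_0, z_1\}$ and $Y = \mathbb I\{Z = z_1\}$ is a deterministic copy of the context. Taking $p$ to be the law with $Z$ fully supported on $\{z_0,z_1\}$ and $Y = \mathbb I\{Z=z_1\}$, we have $\env_a(Z) \ll p(Z)$ and $\env_a(Y \mid Z) = p(Y\mid Z)$ for every $a$, so $\env \in \benignenvspace$; moreover $\meanaction{a} = \Earm{a}[Y] = \mu_a$. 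The point of the construction is that $Z_t = Y_t$ along every trajectory, so the history $(A_s, Z_s, Y_s)_{s \le t}$ is an invertible function of the plain bandit history $(A_s, Y_s)_{s \le t}$. Since $\alg \in \noncausalspace$ produces one fixed policy $\pi := \alg(\actionspace,\contextspace,T,\cdot)$ regardless of the marginals, running $\alg$ on $\env$ with its true marginals induces exactly the trajectory law of a \emph{fixed} standard-bandit policy $\pi'$ on the Bernoulli instance with means $(\mu_a)$, and $\E_{\env,\alg}[\Reg(T)]$ equals the ordinary pseudo-regret of $\pi'$ there. This is precisely where agnosticism is essential: a learner told the marginals $\env_a(Z) = \bernoulli{\mu_a}$ could read off every mean and incur zero regret, so no such reduction is available without \cref{def: alg without knowledge of marginal}.

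It then remains to instantiate the standard needle-in-a-haystack lower bound over this family. For each candidate optimum $a^\star \in \actionspace$ I would set $\mu_{a^\star} = \tfrac12 + \Delta$ and $\mu_a = \tfrac12$ for $a \ne a^\star$, with $\Delta = c''\sqrt{|\actionspace|/T}$, and let $\env^{(a^\star)}$ be the corresponding conditionally benign environment; let $\env^{(0)}$ be the base instance with all means $\tfrac12$. Applying the divergence-decomposition identity to the single fixed policy $\pi'$ relative to $\env^{(0)}$, together with Pinsker's inequality, bounds $\sum_{a^\star}\KL{\hP_{\env^{(0)},\pi'}}{\hP_{\env^{(a^\star)},\pi'}}$ by the expected numbers of pulls, and a standard averaging argument yields $\max_{a^\star} \E_{\env^{(a^\star)},\alg}[\Reg(T)] \ge c\sqrt{|\actionspace|T}$, using $T \ge |\actionspace|$ (with $c''$ small) to keep $\Delta \le \tfrac12$. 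The genuinely new content lies entirely in the reduction of the first two paragraphs; the main obstacle is the conceptual one of certifying that the benign structure can be made simultaneously \emph{valid} (a legitimate element of $\benignenvspace$) and \emph{useless} to a marginal-agnostic learner, after which the argument is the textbook $\sqrt{KT}$ bound applied to the single policy $\pi'$.
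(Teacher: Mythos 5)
Your proposal is correct and rests on the same core mechanism as the paper's proof: build a family of conditionally benign environments that share the conditional law $p(Y\mid Z)$ and differ only in their marginals over $\contextspace$, note that an algorithm in $\noncausalspace$ runs one fixed policy across the whole family, and then invoke the standard KL-based needle-in-a-haystack argument. The difference is packaging. You make $Y$ a deterministic function of $Z$ on two contexts, so the trajectory law collapses to that of an ordinary $|\actionspace|$-armed Bernoulli bandit and the classical $\Omega(\sqrt{|\actionspace|T})$ bound for a fixed policy applies verbatim as a black box; the paper instead splits $\contextspace$ into $\contextspace_0,\contextspace_1$ with reward probabilities $3/4$ and $1/4$, keeps $Y\mid Z$ stochastic, and reruns divergence decomposition plus Bretagnolle--Huber by hand, observing that the per-round KL reduces to the KL between the marginals $q_a$ and $q_a^{a_0}$. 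Your version is more modular; the paper's explicit construction has the side benefit that the hard instances are visibly $\varepsilon$-close in marginals with $\varepsilon=\Theta(\sqrt{|\actionspace|/T})$, which is reused immediately afterwards to argue that the accuracy requirement in \cref{regret bound: phased elimination w-c general eps} is essentially necessary (your instances would support the same remark). Both proofs implicitly require $|\contextspace|\ge 2$, which you flag explicitly and which is indeed unavoidable since the statement is false for a singleton context space.
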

See \cref{appendix sec: proof of lower bound when we have zero knowledge of marginals} for the proof.

\begin{remark}
    Our lower bound  improves on \citet[][Theorem 4]{lu2020regret}, which is of the form of $C_{\varepsilon}\sqrt{|\actionspace|}T^{1/2-\varepsilon}, \forall\varepsilon>0$ and only holds for \emph{some} set of non-causal algorithms, which is a strict subset of $\noncausalspace$. 
\end{remark}

\subsection{Phased Elimination with Approximate Marginals}
Despite the negative result \cref{lower bound: when we have zero knowledge of marginals}, we now argue that some level of misspecification is allowed in the prior knowledge of marginals. Upon interacting with environment $\env$, suppose we are given some marginal $\tilde\env(Z)\in\cP(\contextspace)^{\actionspace}$ which may deviate from the true $\env(Z)$ to some extent. Now we show that even instantiating $\elim$ with the possibly non-accurate $\tilde\env(Z)$ may yield $\tilde O(\sqrt{T})$ regret, following a similar result for $\cucb$ by \citet{bilodeau2022adaptively}. First we need the the following definition to measure the amount of deviation of $\tilde\env(Z)$ from $\env(Z)$.
\begin{definition}\citep[][Definition~4.2]{bilodeau2022adaptively}
    For any $\varepsilon\geq0$, $\tilde\env(Z)$ and $\env(Z)$ are said to be $\varepsilon$-close if
    \*[
    \sup_{a\in\actionspace}\sum_{z\in\contextspace}\lrabs{\tilde\env_a(z)-\env_a(z)}\leq\varepsilon.
    \]
\end{definition}
Due to our reduction in \cref{subsec: causal to linear reduction}, we can find that causal bandits with misspecified marginals is reduced to the well-studied misspecified linear bandits, which yields the following regret bound that subsumes \cref{regret bound: phased elimination w-c eps=0}. The proof is largely based on the analysis of phased elimination in \citet[][Proposition 5.1]{lattimore20a}, with necessary modifications for handling conditionally sub-gaussian noises and providing an anytime regret bound. See \cref{appendix sec: proof of upper bounds by PE with approximate marginals} for details.  
\begin{theorem}[Worst-case regret bound, with approximate marginal distributions]
\label{regret bound: phased elimination w-c general eps}
    In any conditionally environment $\env$ suppose we instantiate $\elim(\delta)$ with $\tilde\env(Z)$. If $\tilde\env(Z)$ and $\env(Z)$ are $\varepsilon-$close, then with probability at least $1-\delta$, the regret of $\elim(\delta)$ is bounded for all rounds $t\in[T]$ by 
    \*[
    \Reg(t)
    \leq
    C\lr{\sqrt{\dimspan{\tilde\env} \log\lr{\frac{|\actionspace|\log T}{\delta}}t}+\varepsilon t\sqrt{\dimspan{\tilde\env}}\log T},
    \]
    where $C>0$ is a universal constant and $\dimspan{\tilde\env}=\dim(\mathrm{span}\{\tilde\env_a: a\in\actionspace\})$
\end{theorem}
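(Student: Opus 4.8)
The plan is to reduce to a \emph{misspecified} linear bandit and then run the standard phased-elimination argument with two nonstandard ingredients: conditional (rather than i.i.d.) sub-Gaussian concentration, and an additive misspecification bias. First I would invoke the reduction of \cref{subsec: causal to linear reduction}: writing $Y_t=\ip{\meancontext,\env_{A_t}}+\actionnoise{t}$ with $\actionnoise{t}$ conditionally $1$-sub-Gaussian, and noting that \cref{alg: phased-elimination} is actually run on the \emph{input} vectors $\tilde\env_a$, I would decompose the believed reward as $\ip{\meancontext,\tilde\env_{A_t}}+b_{A_t}$, where the misspecification bias $b_a:=\ip{\meancontext,\env_a-\tilde\env_a}$ satisfies $\lrabs{b_a}\le\norm{\meancontext}_\infty\norm{\env_a-\tilde\env_a}_1\le\varepsilon$, using $\meancontext\in[0,1]^{|\contextspace|}$ and the $\varepsilon$-closeness hypothesis. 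Thus the problem is exactly an $\varepsilon$-misspecified linear bandit on the finite action set $\lrset{\tilde\env_a}$ with parameter $\meancontext$ and effective dimension $\dimspan{\tilde\env}$; the remark following \cref{alg: phased-elimination} lets me restrict to the $\dimspan{\tilde\env}$-dimensional span so that $V(\phasedesign{\ell})$ is invertible and a $G$-optimal design exists.

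Next I would establish the per-phase good event. Since the counts $\phaselength{\ell}(a)$ are fixed before phase $\ell$ begins, the weights in $\phaseestimate{\ell}-\meancontext=\phasecov{\ell}^{-1}\sum_{t\in\phase{\ell}}\tilde\env_{A_t}b_{A_t}+\phasecov{\ell}^{-1}\sum_{t\in\phase{\ell}}\tilde\env_{A_t}\actionnoise{t}$ are deterministic within the phase. The second (noise) term is a sum of conditionally sub-Gaussian increments with deterministic weights, so a sub-Gaussian martingale tail bound plus the design guarantee $\max_a\norm{\tilde\env_a}^2_{V(\phasedesign{\ell})^{-1}}\le2\dimspan{\tilde\env}$ and $\phasecov{\ell}\succeq m_\ell V(\phasedesign{\ell})$ give, after a union bound over actions and the $O(\log T)$ phases, $\lrabs{\ip{\phasecov{\ell}^{-1}\sum_t\tilde\env_{A_t}\actionnoise{t},\tilde\env_a}}\le w_\ell:=O(\sqrt{(\dimspan{\tilde\env}/m_\ell)\log(|\actionspace|\log T/\delta)})$ simultaneously with probability $\ge1-\delta$, which matches half the elimination threshold. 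For the first (bias) term, Cauchy--Schwarz over the $\phaselength{\ell}\le2m_\ell$ plays together with the same design bound yields $\lrabs{\ip{\phasecov{\ell}^{-1}\sum_t\tilde\env_{A_t}b_{A_t},\tilde\env_a}}\le\varepsilon\sqrt{\phaselength{\ell}}\,\norm{\tilde\env_a}_{\phasecov{\ell}^{-1}}=O(\varepsilon\sqrt{\dimspan{\tilde\env}})$, an additive error that, crucially, does not shrink with $m_\ell$.

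With the good event in hand, the regret bound follows the phased-elimination template. In the well-specified case ($\varepsilon=0$) the optimal arm $\optarm$ survives every phase, and comparing any surviving arm to $\optarm$ through the elimination rule shows every arm played in phase $\ell$ has gap $O(w_{\ell-1})$; summing $\phaselength{\ell}\cdot O(w_{\ell-1})$ over the $O(\log T)$ phases gives a geometric series equal to $O(\sqrt{\dimspan{\tilde\env}\,t\log(|\actionspace|\log T/\delta)})$ for every $t\in[T]$ (the anytime claim follows since within a phase regret accrues at most linearly and is dominated by the phase total). Re-introducing $\varepsilon>0$, the $O(\varepsilon\sqrt{\dimspan{\tilde\env}})$ bias inflates each estimated gap, so I would bound the true gap of a surviving arm by $O(w_{\ell-1})+O(\varepsilon\sqrt{\dimspan{\tilde\env}})$ per phase, adding a further $2\varepsilon$ to convert $\ip{\meancontext,\tilde\env_{\optarm}-\tilde\env_a}$ back to the true gap $\meanaction{\optarm}-\meanaction{a}$; accumulated over the at most $\lastphase=O(\log T)$ phases the bias contributes $O(\varepsilon\sqrt{\dimspan{\tilde\env}}\log T)$ to the per-round regret, and multiplying by $t$ rounds produces the second term $\varepsilon t\sqrt{\dimspan{\tilde\env}}\log T$. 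Combining the two terms gives the claimed bound, and taking $\delta=1/T$ yields the in-expectation version as in \cref{regret bound: phased elimination w-c eps=0}.

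\textbf{The main obstacle} is the elimination analysis once $\varepsilon>0$: because the noise already saturates the elimination threshold, the positive bias can knock $\optarm$ out of the active set, so I cannot simply compare surviving arms to $\optarm$ as in the well-specified proof. The delicate point is to track the best \emph{surviving} arm across phases and show that (i) the $O(\varepsilon\sqrt{\dimspan{\tilde\env}})$ bias errors are charged at most once per phase, for a total of $O(\varepsilon\sqrt{\dimspan{\tilde\env}}\log T)$, while (ii) the confidence widths $w_\ell$ still telescope geometrically and do \emph{not} each pick up a factor of $\log T$. Getting this separation right---choosing the comparator among active arms at each phase so that the noise part stays geometric while only the misspecification part is billed the $\log T$ number of phases---is the crux; the remainder is the by-now-standard $G$-optimal-design bookkeeping already used for \cref{regret bound: phased elimination w-c eps=0}.
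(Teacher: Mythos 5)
Your proposal is correct and follows essentially the same route as the paper: the same reduction to an $\varepsilon$-misspecified linear bandit with per-action bias $b_a=\ip{\meancontext,\env_a-\tilde\env_a}$, the same per-phase concentration event splitting the estimation error into a Cauchy--Schwarz-bounded $O(\varepsilon\sqrt{\dimspan{\tilde\env}})$ bias term plus a $\sqrt{\dimspan{\tilde\env}/m_\ell}$ noise term, and the same resolution of the crux you flag---the paper tracks the best \emph{surviving} arm $\trueoptarm{\ell}$ and shows $\Delta_{\trueoptarm{\ell+1}}-\Delta_{\trueoptarm{\ell}}=O(\varepsilon\sqrt{\dimspan{\tilde\env}})$ per phase (by comparing to the empirically best arm, which is never eliminated), so the bias is billed once per phase for a total of $O(\varepsilon\sqrt{\dimspan{\tilde\env}}\log T)$ while the confidence widths telescope geometrically. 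No gaps.
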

It is implied that $\varepsilon=\tilde O(\sqrt{1/T})$ suffices to recover all aforementioned regret guarantees of phased elimination and dynamic balancing. On the other hand, such numerical requirement on $\varepsilon$ is almost necessary for us to avoid the lower bound in \cref{lower bound: when we have zero knowledge of marginals}: from the proof of \cref{lower bound: when we have zero knowledge of marginals} we will find that when $\varepsilon=\Omega(\sqrt{|\actionspace|/T})$, for any algorithm there exists a conditionally benign environment $\env$ and approximate marginal $\tilde\env(Z)$ such that $\tilde\env(Z)$ and $\env(Z)$ are $\varepsilon$-close, but this algorithm would incur $\Omega(\sqrt{|\actionspace|T})$ regret on $\env$ when it is input by $\tilde\env(Z)$.

It is worth mentioning that the $\sqrt{\dimspan{\tilde\env}}$ factor in the misspecification term cannot be improved in many regimes for linear bandit algorithms \citep{lattimore20a}. However, $\cucb$ is able to shave this factor off \citep[][Theorem~4.3]{bilodeau2022adaptively} by utilizing realized contexts rather than the least-square estimate of the mean reward vector $\meancontext$. From this perspective, we see there is a price for pursuing better instance-dependent result by ignoring the context information.

\section{Conclusions and Discussions}
We provide a comprehensive characterization of the Pareto regret frontier for the bandit problem in the context of adapting to causal structure whenever feasible. We also give the first instance-dependent regret bound under conditionally benign environments, based on our novel causal-to-linear reduction. Finally, we show that the common assumption that we have access to the true marginals is necessary in general but still can be relaxed in some cases. 

For future works, it would be important to focus on the design of algorithms that are easier to implement compared to running dynamic balancing over some base learners. On the theoretical side, it would be interesting to investigate other causal bandit scenarios involving adaptivity in light of our Pareto regret frontier. For example, we may define a series of ``semi-benign" settings interpolating conditionally benign environments and non-benign environments and study the Pareto regret frontier thereof. 

\section*{Acknowledgements}
ZL is supported by the Vector Research Grant at the Vector Institute. IA is supported by the Vatat Scholarship from the Israeli Council for Higher Education. DMR is supported by an NSERC Discovery Grant and funding through his Canada CIFAR AI Chair at the Vector Institute. The authors would like to thank Tomer Koren, Blair Bilodeau and Csaba Szepesvári for helpful discussions at different stages of this work.

\bibliography{refs}
\bibliographystyle{icml2024}

\newpage
\appendix
\onecolumn

\section{Regret Analysis for Dynamic Balancing}\label{appendix sec: regret analysis for DB}
In this section we show that the regret guarantees of dynamic balancing in \citet{cutkosky21a} can be generalized to our problem and provide a proof of our main upper bound \cref{regret bound: regret balancing general rate}. 
\paragraph{Notations.} For base learner $i$, we use $\candidreg_i(t)$ to denote its candidate anytime regret bound that is expected to hold in its favorable settings. Throughout we consider $\candidreg_i(t)$ with the form of $d_i\sqrt{t}$, where $d_i$ implicitly depends on the confidence parameter $\delta$. Let $i_t$ be the index of the base learner selected in round $t$. $U_i(t)=\sum_{s=1}^t Y_s\indicator\lrset{i=i_s}$ is the observed cumulative reward in the first $t$ rounds where $i$ is picked, and $n_i(t)=\sum_{s=1}^t \indicator\lrset{i=i_s}$ is the number of rounds $i$ is picked by the end of round $t$. The local regret of $i$ up to round $t$ is $\Reg_i(t)=n_i(t)\meanaction{\optarm}-U_i(t)$. We say learner $i$ is \emph{well-specified} if $\Reg_i(t)\leq\candidreg_i(n_i(t))=d_i\sqrt{n_i(t)}, \forall t\in[T]$ and otherwise it is \emph{misspecified}. We use $i_{\star}$ to denote any well-specified learner.
\subsection{Preliminaries}
Roughly speaking, in each round $t$, dynamic balancing works by (1) running a misspecification test to temporarily de-activate misspecified base learners and (2) picking the learner $i_t$ with minimal putative regret $d_i\sqrt{n_i(t)}$ among all active learners $i$ in this round. In this way, the regret incurred by $\regbalancing$ is comparable to that of the best well-specified learner.

Notice that dynamic balancing was initiated with stochastic contextual bandits (where contexts are revealed \emph{prior to} actions) in \citet{cutkosky21a}. To see that $\regbalancing$ can also be applied in stochastic bandits with post-action contexts, it is worth identifying several important features of $\regbalancing$:
\begin{enumerate}
    \item First of all, the meta decision by $\regbalancing$ on each round $t$ only depends on the global information, i.e. $U_i(t)$ and $n_i(t)$ (as well as user-specified $d_i, b_i$ and $v_i$). In particular, it does not need any information regarding context variables or internal states of base learners.
    \item Second, $\regbalancing$ only updates the selected base learner $i_t$ in each round $t$, and the update only uses the reward and contextual information observed in this round, where the context can be either pre-action or post-action, or both. Thus the regret guarantees of $\regbalancing$ would hold regardless of the nature of contexts given that the internal updates of base learners are not affected.
\end{enumerate}
Therefore, the essence of dynamic balancing does not rely on what kind of (stochastic) contextual information can be observed in the underlying (stochastic) bandit problem due to above observations.

Now we state the worst-case regret bound of $\regbalancing$ in \citet{cutkosky21a} adapted to our setting. First define the good event
\*[
\eventdb(\delta)
=
\lrset{\forall i\in\{1,2\}, \forall t\in T:|n_i(t)\meanaction{\optarm}-U_i(t)-\Reg_i(t)|\leq c\sqrt{n_i(t)\log\lr{\frac{2\log n_i(t)}{\delta}}} }
\] 
on which we are able to control the regret of $\regbalancing$. According to the analysis of \citet[][Lemma 5]{cutkosky21a}, we can fix $c$ to be some absolute constant (which can be actually set to $3$ in our setting) such that $\hP_{\env,\policy}[\eventdb(\delta)]\geq 1-\delta$ for any $\env\in\envspace$ and $\policy\in\policyspace$. Conditioning on $\eventdb(\delta)$, we have the following regret bound:
\begin{proposition}[Adapted version of Theorem 22 in \citet{cutkosky21a}]\label{prop: regret balancing general rate}
    Let $\alg_1$ be a $d_1$-benign family and let $\alg_2$ be a $d_2$-arbitrary family of learners. Let $Z_1, Z_2$ be arbitrary positive real numbers. For all $\delta \in (0,1)$,
    we can set hyper-parameters 
    \*[
    b_i(t)=\max\Bigl\{\frac{2Z_i}{\sqrt{t}},\frac{3\sqrt{2\log(2\log t/\delta)}}{\sqrt{t}} \Bigl\},
    \quad
    v_i=\sqrt{\frac{Z_i}{d_i(\delta)^3}}
    \]
    in dynamic balancing such that, the policy $\regbalancing(\delta)$ given by 
    dynamic balancing with $\alg_1,\alg_2, d_1, d_2$
    satisfies the following:
    for all instances $\env$, 
    conditioning on $\eventdb(\delta)$ and the existence of a well-specified base learner $i_{\star}$, the regret of $\regbalancing(\delta)$ is bounded by 
    \*[
    \Reg(T)
    \leq
    \Reg_{i_{\star}}(T)
    +
    C'\lr{
    \sqrt{\log\lr{\frac{\log T}{\delta}}}
    +
    Z_{i_{\star}}d_{i_{\star}}(\delta)
    +
    \sum_{i\neq i_{\star}} \frac{d_i(\delta)}{Z_i}
    }\sqrt{T},
    \]
    where $C'>0$ is a universal constant.
\end{proposition}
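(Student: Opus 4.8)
The plan is to adapt the proof of Theorem~22 in \citet{cutkosky21a} to our setting, the only substantive task being to verify that the structural ingredients of their argument survive the switch from pre-action to post-action contexts. As observed above, the meta-algorithm $\regbalancing$ reads off only the global statistics $U_i(t),n_i(t)$ together with the user-chosen $d_i,b_i,v_i$, and it updates solely the selected base learner using the reward and context it observes that round; hence its analysis is insensitive to the nature of the contexts. Consequently, once the concentration underlying $\eventdb(\delta)$ is in place, the balancing machinery can be imported essentially verbatim. Throughout, all statements are made on $\eventdb(\delta)$ and under the hypothesis that a well-specified learner $i_\star$ exists, which is exactly the conditioning in the statement.

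The concentration event $\eventdb(\delta)$ (whose validity with $c=3$ is already asserted above via Lemma~5 of \citet{cutkosky21a}) transfers because, for each learner $i$, the increments $(\meanaction{A_s}-Y_s)\indicator\{i_s=i\}$ form a bounded martingale difference sequence with respect to the history filtration: conditionally on the selected action $A_s$, the reward $Y_s$ is drawn from $\env_{A_s}$ with mean $\meanaction{A_s}$, irrespective of the realized context $Z_s$. Thus the time-uniform (law-of-the-iterated-logarithm type) bound that produces the $\log(2\log n_i(t)/\delta)$ radius applies to our reward process without change, giving $\hP_{\env,\policy}[\eventdb(\delta)]\geq 1-\delta$ for every $\env\in\envspace$ and $\policy\in\policyspace$.

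Next I would re-derive the two core lemmas of the balancing argument with our specific $b_i,\gamma_i$. The first asserts that $i_\star$ is never eliminated: on $\eventdb(\delta)$ its well-specification bound $\Reg_{i_\star}(t)\leq d_{i_\star}(\delta)\sqrt{n_{i_\star}(t)}$ forces $\eta_{i_\star}(t)+\gamma_{i_\star}(t)+d_{i_\star}(\delta)/\sqrt{n_{i_\star}(t)}\geq \meanaction{\optarm}-b_{i_\star}(t)$, while the same event caps each $\eta_j(t)+\gamma_j(t)$ near $\meanaction{\optarm}$ (since $\Reg_j\geq0$); the bias $b_i(t)$ is calibrated so that the test defining $\activeset{t+1}$ never excludes $i_\star$. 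The second controls how often a misspecified learner is played: when $i\neq i_\star$ is selected, the rule $i_t\in\cArgmin_i v_i d_i(\delta)\sqrt{n_i(t-1)}$ together with the persistence of $i_\star$ forces $v_i d_i(\delta)\sqrt{n_i(t)}\lesssim v_{i_\star} d_{i_\star}(\delta)\sqrt{n_{i_\star}(t)}$, bounding $n_i(T)$ in terms of $n_{i_\star}(T)\leq T$. I would then decompose $\Reg(T)=\sum_i \Reg_i(T)$, leave the $i_\star$ term as the $\Reg_{i_\star}(T)$ appearing in the statement, and bound each misspecified $\Reg_i(T)$ by combining the elimination test (which keeps $i$'s empirical reward competitive with $i_\star$ while active) with the deviation control from $\eventdb(\delta)$.

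Finally I would substitute $v_i=\sqrt{Z_i/d_i(\delta)^3}$ and $b_i(t)=\max\{2Z_i/\sqrt t,\,3\sqrt{2\log(2\log t/\delta)}/\sqrt t\}$ and collect terms: the iterated-logarithm radius contributes the $\sqrt{\log(\log T/\delta)}\,\sqrt T$ term, the $2Z_i/\sqrt t$ branch of the bias evaluated at $i_\star$ yields $Z_{i_\star}d_{i_\star}(\delta)\sqrt T$, and the balancing control on $n_i(T)$ for $i\neq i_\star$ produces $\sum_{i\neq i_\star}(d_i(\delta)/Z_i)\sqrt T$, with the second branch of $b_i$ merely absorbing the confidence band into the bias. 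The main obstacle I anticipate is the second core lemma, namely cleanly converting the balancing inequality on $v_i d_i\sqrt{n_i}$ and the elimination test into a regret bound for the misspecified learner, since that is where the interplay among $b_i$, $\gamma_i$, and the choice of $v_i$ is most delicate; everything else is either a direct transcription of \citet{cutkosky21a} or a routine check that post-action contexts leave the reward martingale intact.
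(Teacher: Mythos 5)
Your proposal is correct and follows essentially the same route as the paper: the paper likewise justifies this proposition by observing that the meta-algorithm of dynamic balancing only reads the global statistics $U_i(t), n_i(t)$ and that post-action contexts leave the reward martingale (and hence $\eventdb(\delta)$) intact, and then obtains the bound directly from Theorem~22 of \citet{cutkosky21a} with the parameter choices $M=2$, $C=1$, $c=3$, $\beta=1/2$, $W_1=W_2=\sqrt{2}$. Your sketch simply unpacks more of the internals of that theorem (the non-elimination of $i_\star$ and the balancing control on $n_i(T)$), which the paper imports as a black box.
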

It is straightforward to see that \cref{prop: regret balancing general rate} is obtained by taking $M=2, C=1, c=3, \beta=1/2$, and $W_1=W_2=\sqrt{2}$ in \citet[][Theorem 22]{cutkosky21a}.

\subsection{Proof of \cref{regret bound: regret balancing general rate}}\label{appendix sec: proofs for section 3.1}

\cref{regret bound: regret balancing general rate} is the immediate consequence of the following regret bound, which is derived by instantiating $Z_1, Z_2$ in \cref{prop: regret balancing general rate} with specific values. 
\begin{proposition}\label{prop: regret balancing general rate in terms of R1 R2}
    For every pair of reasonable rate functions $R_1(T), R_2(T)$ such that $R_1(T)R_2(T)\geq |\actionspace|T$, we can instantiate \cref{prop: regret balancing general rate} with $Z_1=1, Z_2=\frac{R_2(T)}{\sqrt{|\actionspace|T}}$
    such that for all $\delta \in (0,1)$,
    the policy $\regbalancing(\delta)$ with the same setup as \cref{prop: regret balancing general rate}
    satisfies the following:
    for all instances $\env$, 
    with probability at least $1-O(\delta)$, 
    the regret of $\regbalancing(\delta)$ is bounded by 
    \*[
    \Reg(T)
    &\leq
    C'\lr{
    d_1(\delta)
    +
    \sqrt{\log\lr{\frac{\log T}{\delta}}}
    +
    \frac{d_2(\delta)}{\sqrt{|\actionspace|T}}R_1(T)
    }\sqrt{T},
    \quad\text{if $\env$ is conditionally benign;}\\
    \Reg(T)
    &\leq
    C'\lr{
    d_1(\delta)
    +
    \sqrt{\log\lr{\frac{\log T}{\delta}}}
    +
    \frac{d_2(\delta)}{\sqrt{|\actionspace|T}}R_2(T)
    }\sqrt{T},
    \quad\text{if $\env$ is arbitrary,}
    \]
    where $C'>0$ is a universal constant.    
\end{proposition}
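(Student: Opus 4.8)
The plan is to read off both inequalities directly from \cref{prop: regret balancing general rate} by instantiating it with $Z_1 = 1$ and $Z_2 = R_2(T)/\sqrt{|\actionspace|T}$, and then, in each of the two cases, to pick the \emph{right} learner to serve as the well-specified $i_\star$ and simplify. First I would check that the chosen scaling coefficients are admissible: reasonableness gives $R_2(T) \geq \sqrt{|\actionspace|T}$, so $Z_2 \geq 1 > 0$ and $Z_1 = 1 > 0$, as required. The probabilistic scaffolding is uniform across both cases: I condition on the good event $\eventdb(\delta)$, which holds with probability at least $1-\delta$, together with the high-probability event that the relevant base learner is well-specified, and take a union bound to obtain overall probability $1 - O(\delta)$. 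To connect the standalone guarantees of \cref{def: benign and arbitrary family} with well-specification in the sense of the local regret $\Reg_i(t) \leq d_i(\delta)\sqrt{\localtime{i}{t}}$, I would invoke the observation (features 1--2 in \cref{appendix sec: regret analysis for DB}) that each base learner only ever sees its own selected rounds, so its local regret over the $\localtime{i}{t}$ rounds it is played behaves exactly like standalone regret over $\localtime{i}{t}$ rounds.

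For the benign case I take $i_\star = 1$: since $\alg_1$ is a $d_1$-benign family, in any conditionally benign environment it is well-specified with probability $1 - O(\delta)$. Plugging $Z_1 = 1$ into \cref{prop: regret balancing general rate} makes the leading terms $\Reg_1(T) \leq d_1(\delta)\sqrt{T}$ and $Z_1 d_1(\delta) = d_1(\delta)$, while the competing-learner term becomes $d_2(\delta)/Z_2 = d_2(\delta)\sqrt{|\actionspace|T}/R_2(T)$. The one genuine step is to rewrite this last quantity using the hypothesis $R_1(T)R_2(T) \geq |\actionspace|T$: this inequality is equivalent to $\sqrt{|\actionspace|T}/R_2(T) \leq R_1(T)/\sqrt{|\actionspace|T}$, so the competing term is bounded by $\frac{d_2(\delta)}{\sqrt{|\actionspace|T}} R_1(T)$, which is precisely the form claimed. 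Collecting the three terms and absorbing the constant $\Reg_1(T)$ contribution into $C'$ yields the benign bound.

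For the arbitrary case only the arbitrary learner is guaranteed well-specified, so I take $i_\star = 2$; since $\alg_2$ is a $d_2$-arbitrary family it is well-specified in \emph{every} environment with probability $1 - O(\delta)$. Now $\Reg_2(T) \leq d_2(\delta)\sqrt{T}$, the term $Z_2 d_2(\delta) = \frac{d_2(\delta)}{\sqrt{|\actionspace|T}} R_2(T)$ already matches the target, and the competing term is $d_1(\delta)/Z_1 = d_1(\delta)$. The only check is that the stray $\Reg_2(T) \leq d_2(\delta)\sqrt{T}$ is absorbed: reasonableness ($R_2(T) \geq \sqrt{|\actionspace|T}$) gives $d_2(\delta) \leq \frac{d_2(\delta)}{\sqrt{|\actionspace|T}} R_2(T)$, so $d_2(\delta)\sqrt{T}$ is dominated by the matching term, and adjusting $C'$ gives the arbitrary bound.

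I expect the only real content to be the benign-case cross-term manipulation, where the product constraint $R_1(T)R_2(T) \geq |\actionspace|T$ is exactly what is needed to trade the penalty for carrying the robust learner $\alg_2$ in the background against the benign target rate $R_1(T)$; this is where the shape of the Pareto frontier enters. Everything else is routine bookkeeping: verifying the $Z_i$ are admissible, absorbing lower-order terms via reasonableness, and the union bound over $\eventdb(\delta)$ and well-specification. In particular no new concentration argument is required, since all of it is packaged into \cref{prop: regret balancing general rate} and the family definitions of \cref{def: benign and arbitrary family}.
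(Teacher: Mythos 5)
Your proposal is correct and follows essentially the same route as the paper's proof: instantiate \cref{prop: regret balancing general rate} with $Z_1=1$, $Z_2=R_2(T)/\sqrt{|\actionspace|T}$, take $i_\star=1$ in benign instances and $i_\star=2$ otherwise, and use reasonableness together with $R_1(T)R_2(T)\geq|\actionspace|T$ to convert $d_2(\delta)\sqrt{|\actionspace|T}/R_2(T)$ into $d_2(\delta)R_1(T)/\sqrt{|\actionspace|T}$ and to absorb the stray $d_2(\delta)$ term. The only cosmetic difference is that you make explicit the absorption of lower-order terms into $C'$, which the paper leaves implicit.
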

Now we can see that our main upper bound \cref{regret bound: regret balancing general rate} is proved immediately after taking $d_1(\delta)=O\lr{(\sqrt{|\contextspace|}+\sqrt{\log (T/\delta)})\sqrt{\log(|\contextspace|T/\delta)}}$,
$d_2(\delta)= O(\sqrt{|\actionspace|\log(|\actionspace|T/\delta)})$ and $\delta=1/T$.
\begin{proof}[Proof of \cref{prop: regret balancing general rate in terms of R1 R2}]
    By \cref{def: benign and arbitrary family},
    we know that for all conditionally instances $\env$, with probability at least $1-O(\delta)$, learner $\alg_1$ is well-specified with $\candidreg_1(t)=d_1(\delta)\sqrt{t}$
    and the regret bound in \cref{prop: regret balancing general rate} holds with $i_{\star}=1$.
    Plugging in $Z_1=1, Z_2=\frac{R_2(T)}{\sqrt{|\actionspace|T}}$,
    the regret of $\regbalancing(\delta)$ is bounded by 
    \*[
    \Reg(T)
    \leq
    C'\lr{
    2 d_1(\delta)
    +
    \sqrt{\log\lr{\frac{\log T}{\delta}}}
    +
    d_2(\delta)\frac{\sqrt{|\actionspace|T}}{R_2(T)}
    }\sqrt{T}.
    \]
    Similarly for all instances $\env$, with probability at least $1-O(\delta)$, learner $\alg_2$ is well-specified with $\candidreg_2(t)=d_2(\delta)\sqrt{t}$ and the regret bound in \cref{prop: regret balancing general rate} holds with $i_{\star}=2$, which is
    \*[
    \Reg(T)
    \leq
    C'\lr{
    d_2(\delta)
    +
    \sqrt{\log\lr{\frac{\log T}{\delta}}}
    +
    d_2(\delta)\frac{R_2(T)}{\sqrt{|\actionspace|T}}
    +
    d_1(\delta)
    }\sqrt{T}.
    \]
    By our assumption that $(R_1(T), R_2(T))$ is reasonable and $R_1(T)R_2(T)\geq |\actionspace|T$, we have that $R_2(T)\geq\sqrt{|\actionspace|T}$ and $\frac{|\actionspace|T}{R_2(T)}\leq R_1(T)$. Hence the regret of $\regbalancing(\delta)$ for all instances $\env$ is further bounded by
    \*[
    \Reg(T)
    &\leq
    C'\lr{
    d_1(\delta)
    +
    \sqrt{\log\lr{\frac{\log T}{\delta}}}
    +
    \frac{d_2(\delta)}{\sqrt{|\actionspace|T}}R_1(T)
    }\sqrt{T},
    \quad\text{if $\env$ is conditionally benign;}\\
    \Reg(T)
    &\leq
    C'\lr{
    d_1(\delta)
    +
    \sqrt{\log\lr{\frac{\log T}{\delta}}}
    +
    \frac{d_2(\delta)}{\sqrt{|\actionspace|T}}R_2(T)
    }\sqrt{T},
    \quad\text{if $\env$ is arbitrary,}
    \]
    which completes the proof.
\end{proof}

\section{Regret analysis of phased elimination}\label{appendix sec: proof of upper bounds by PE with approximate marginals}
In this section we will prove \cref{regret bound: phased elimination gap-dependent eps=0} and \cref{regret bound: phased elimination w-c general eps}, while \cref{regret bound: phased elimination w-c eps=0} is implied by taking $\varepsilon=0$ in \cref{regret bound: phased elimination w-c general eps}. Recall that the proof of \cref{regret bound: phased elimination w-c general eps} is based on the analysis of phased elimination in \citet[][Proposition 5.1]{lattimore20a}. For simplicity we will use $\hP$ and $\hE$ to denote the probabilistic operators determined jointly by the underlying conditionally benign environment $\env$ and the phased elimination algorithm. Also we use $\Delta_a, \mingap$ to denote the true sub-optimality gap $\Delta_a(\env)=\meanaction{\optarm}-\meanaction{a}$ and minimal sub-optimality gap $\mingap(\env)=\min_{a\in\actionspace} \meanaction{\optarm}-\meanaction{a}$ respectively with regards to the underlying instance $\env$. 
\subsection{Prerequisite}
\begin{lemma}\label{fact: in-phase concentration}(In-phase concentration)
    For any phase $\ell$, let
    \*[
    \phaseconcentration{\ell}(\delta)=\lrset{|\ip{\phaseestimate{\ell}-\meancontext, \tilde\nu_a}|\leq 2\varepsilon\sqrt{\dimspan{\tilde\env}}+\sqrt{\frac{4\dimspan{\tilde\env}}{m_\ell}\log\lr{\frac{2|\actionspace|\log_2(T)}{\delta}}}, \forall a\in\activeset{\ell}}
    \]
    and $\mathscr{F}_{\ell}$ be the $\sigma-$algebra generated by the history up to the start of phase $\ell$. Then $\hP[\phaseconcentration{\ell}(\delta)|\mathscr{F}_{\ell}]\geq 1-\frac{\delta}{\log_2(T)}$.
\end{lemma}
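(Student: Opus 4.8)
The plan is to control the estimation error $\ip{\phaseestimate{\ell}-\meancontext, \tilde\nu_a}$ by invoking the causal-to-linear reduction of \cref{subsec: causal to linear reduction}, treating $\tilde\nu_a$ as the feature vector for action $a$ and $\meancontext$ as the unknown parameter. Since the algorithm uses the approximate marginals $\tilde\nu$ while the observed reward satisfies $Y_t=\ip{\meancontext,\nu_{A_t}}+\actionnoise{t}$ with $\nu_{A_t}$ the \emph{true} marginal, I would first substitute this into $\phaseestimate{\ell}=\phasecov{\ell}^{-1}\sum_{t\in\phase{\ell}}\tilde\nu_{A_t}Y_t$ and write $\ip{\meancontext,\nu_{A_t}}=\ip{\meancontext,\tilde\nu_{A_t}}+c_{A_t}$, where $c_b:=\ip{\meancontext,\nu_b-\tilde\nu_b}$. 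Because $\sum_{t}\tilde\nu_{A_t}\ip{\meancontext,\tilde\nu_{A_t}}=\phasecov{\ell}\meancontext$, this gives the clean split $\phaseestimate{\ell}-\meancontext=\phasecov{\ell}^{-1}\sum_{t\in\phase{\ell}}\tilde\nu_{A_t}c_{A_t}+\phasecov{\ell}^{-1}\sum_{t\in\phase{\ell}}\tilde\nu_{A_t}\actionnoise{t}$ into a deterministic bias term and a stochastic noise term. All steps are carried out conditionally on $\mathscr{F}_\ell$, under which $\activeset{\ell}$, $\phasedesign{\ell}$, $\phasecov{\ell}$, and the entire action sequence of phase $\ell$ are fixed, so that the only remaining randomness is the fresh sub-Gaussian noise.

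For the noise term I would observe that $\ip{\phasecov{\ell}^{-1}\sum_t\tilde\nu_{A_t}\actionnoise{t},\tilde\nu_a}=\sum_t\ip{\tilde\nu_a,\phasecov{\ell}^{-1}\tilde\nu_{A_t}}\actionnoise{t}$ is, conditionally on $\mathscr{F}_\ell$, a fixed linear combination of conditionally $1$-sub-Gaussian increments, hence sub-Gaussian with variance proxy $\sum_t\ip{\tilde\nu_a,\phasecov{\ell}^{-1}\tilde\nu_{A_t}}^2=\tilde\nu_a^\top\phasecov{\ell}^{-1}(\sum_t\tilde\nu_{A_t}\tilde\nu_{A_t}^\top)\phasecov{\ell}^{-1}\tilde\nu_a=\|\tilde\nu_a\|^2_{\phasecov{\ell}^{-1}}$. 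Using $\phasecov{\ell}\succeq m_\ell V(\phasedesign{\ell})$ (since $\phaselength{\ell}(a)\geq m_\ell\phasedesign{\ell}(a)$) together with the G-optimal design guarantee $\|\tilde\nu_a\|^2_{V(\phasedesign{\ell})^{-1}}\leq 2\dimspan{\tilde\env}$, this variance proxy is at most $2\dimspan{\tilde\env}/m_\ell$. A standard sub-Gaussian tail bound then yields $|\text{noise}_a|\leq\sqrt{\tfrac{4\dimspan{\tilde\env}}{m_\ell}\log(2|\actionspace|\log_2 T/\delta)}$ except with probability $\delta/(|\actionspace|\log_2 T)$, and a union bound over the at most $|\actionspace|$ actions in $\activeset{\ell}$ produces the claimed failure probability $\delta/\log_2 T$ and reproduces the second term of the concentration event.

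The bias term is deterministic given $\mathscr{F}_\ell$, and the crux is to bound it by $2\varepsilon\sqrt{\dimspan{\tilde\env}}$ rather than the naive $2\varepsilon\dimspan{\tilde\env}$. Grouping the sum by action, I would write the bias as $\ip{\tilde\nu_a,\phasecov{\ell}^{-1}u}$ with $u=\sum_{b\in\activeset{\ell}}\phaselength{\ell}(b)c_b\tilde\nu_b$, where $|c_b|\leq\|\meancontext\|_\infty\|\nu_b-\tilde\nu_b\|_1\leq\varepsilon$ by $\varepsilon$-closeness. Cauchy--Schwarz in the $\phasecov{\ell}^{-1}$-norm bounds it by $\|\tilde\nu_a\|_{\phasecov{\ell}^{-1}}\|u\|_{\phasecov{\ell}^{-1}}$, and the first factor is again at most $\sqrt{2\dimspan{\tilde\env}/m_\ell}$. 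For the second, writing $\phasecov{\ell}=\Phi\Phi^\top$ with the columns of $\Phi$ equal to $\sqrt{\phaselength{\ell}(b)}\tilde\nu_b$ exhibits $u=\Phi d$ with $d_b=\sqrt{\phaselength{\ell}(b)}c_b$, so that $\|u\|^2_{\phasecov{\ell}^{-1}}=d^\top\Phi^\top(\Phi\Phi^\top)^{-1}\Phi d\leq\|d\|^2=\sum_b\phaselength{\ell}(b)c_b^2\leq\varepsilon^2\phaselength{\ell}$, since $\Phi^\top(\Phi\Phi^\top)^{-1}\Phi$ is an orthogonal projection. Multiplying the two factors and using $\phaselength{\ell}=\sum_b\lceil m_\ell\phasedesign{\ell}(b)\rceil\leq m_\ell+|\supp{\phasedesign{\ell}}|\leq 2m_\ell$ gives $|\text{bias}_a|\leq\varepsilon\sqrt{2\dimspan{\tilde\env}\phaselength{\ell}/m_\ell}\leq 2\varepsilon\sqrt{\dimspan{\tilde\env}}$.

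Combining the two bounds on the complement of the union-bound failure event yields exactly $\phaseconcentration{\ell}(\delta)$ with conditional probability at least $1-\delta/\log_2 T$. The main obstacle is the projection step for the bias: the obvious route through $|\tilde\nu_b^\top\phasecov{\ell}^{-1}\tilde\nu_a|\leq\|\tilde\nu_b\|_{\phasecov{\ell}^{-1}}\|\tilde\nu_a\|_{\phasecov{\ell}^{-1}}$ loses an extra $\sqrt{\dimspan{\tilde\env}}$ and would force the misspecification term in \cref{regret bound: phased elimination w-c general eps} to scale with $\dimspan{\tilde\env}$; recognizing the sum as $\|\Phi d\|_{\phasecov{\ell}^{-1}}$ with an idempotent Gram projection is what recovers the advertised $\sqrt{\dimspan{\tilde\env}}$ dependence. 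A secondary subtlety is justifying sub-Gaussianity of the weighted noise sum under the \emph{conditional} (martingale) sub-Gaussian assumption rather than an i.i.d.\ one, which follows by iterating the per-step moment generating function bound along the phase.
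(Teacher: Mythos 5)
Your proof is correct and follows essentially the same route as the paper's: the same conditional-on-$\mathscr{F}_{\ell}$ decomposition of $\ip{\phaseestimate{\ell}-\meancontext,\tilde\nu_a}$ into a deterministic bias term and a sub-Gaussian noise term, the same variance proxy $\|\tilde\nu_a\|^2_{\phasecov{\ell}^{-1}}\le 2\dimspan{\tilde\env}/m_\ell$ from the near-optimal design, and the same union bound over $\activeset{\ell}$. The only cosmetic difference is in the bias term, where the paper uses the scalar Cauchy--Schwarz bound $\sum_{t}\lrabs{\ip{\phasecov{\ell}^{-1}\tilde\nu_{A_t},\tilde\nu_a}}\le\sqrt{\phaselength{\ell}\,\|\tilde\nu_a\|^2_{\phasecov{\ell}^{-1}}}$ in place of your Gram-projection argument; both yield the identical bound $2\varepsilon\sqrt{\dimspan{\tilde\env}}$.
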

\begin{proof}[Proof of \cref{fact: in-phase concentration}]
    Let $b_a=\ip{\nu_a-\tilde\nu_a, \meancontext}, \forall a\in\actionspace$ be the error term due to the use of inaccurate marginals, then we know that $|b_a|\leq\varepsilon, \forall a\in\actionspace$ since $\env(Z)$ and $\tilde\env(Z)$ are $\varepsilon-$close.
    Observe that 
    \*[
    \ip{\phaseestimate{\ell}-\meancontext, \tilde\nu_a}
    &=\ip{\phasecov{\ell}^{-1}\sum_{t\in\phase{\ell}}\tilde\nu_{A_t}\tilde\nu_{A_t}^\top \meancontext, \tilde\nu_a }
    -\ip{\meancontext, \tilde\nu_a} \\
    &+
    \ip{\phasecov{\ell}^{-1}\sum_{t\in\phase{\ell}}\tilde\nu_{A_t}\actionnoise{t}, \tilde\nu_a}
    +
    \ip{\phasecov{\ell}^{-1}\sum_{t\in\phase{\ell}}\tilde\nu_{A_t}b_{A_t}, \tilde\nu_a}\\
    &=
    \sum_{t\in\phase{\ell}}\ip{\phasecov{\ell}^{-1}\tilde\nu_{A_t}, \tilde\nu_a}\actionnoise{t}
    +
    \sum_{t\in\phase{\ell}}\ip{\phasecov{\ell}^{-1}\tilde\nu_{A_t}, \tilde\nu_a}b_{A_t}.
    \]
    Using Cauchy-Schwarz inequality and the fact that for all $a\in\activeset{\ell}, \|\tilde\nu_a\|_{\phasecov{\ell}^{-1}}^2\leq\frac{1}{m_{\ell}}\|\tilde\nu_a\|_{V(\pi_{\ell})^{-1}}^2\leq\frac{2\dimspan{\tilde\env}}{m_{\ell}}$, the second term on the RHS of the above equality can be bounded by 
    \*[
    \lrabs{\sum_{t\in\phase{\ell}}\ip{\phasecov{\ell}^{-1}\tilde\nu_{A_t}, \tilde\nu_a}b_{A_t}}
    &\leq
    \varepsilon\sum_{t\in\phase{\ell}}\lrabs{\ip{\phasecov{\ell}^{-1}\tilde\nu_{A_t},\tilde\nu_a}} \\
    &\leq
    \varepsilon\sqrt{\lr{\sum_{t\in\phase{\ell}}1} \lr{\sum_{t\in\phase{\ell}} \ip{\phasecov{\ell}^{-1}\tilde\nu_{A_t},\tilde\nu_a}^2}}\\
    &=
    \varepsilon\sqrt{\phaselength{\ell}\|\tilde\nu_a\|_{\phasecov{\ell}^{-1}}^2}
    \leq
    \varepsilon\sqrt{2m_{\ell}\frac{2\dimspan{\tilde\env}}{m_{\ell}}}
    =2\varepsilon\sqrt{\dimspan{\tilde\env}}.
    \]
    To bound the first term, notice that $(A_t)_{t\in\phase{\ell}}, \phasecov{\ell}$ are fixed given the history prior to the start of phase $\ell$. Hence $(\actionnoise{t})_{t\in\phase{\ell}}$ are independent conditioned on $\mathscr F_{\ell}$ and bounded by $[-1,1]$. By standard concentration bounds, we have that with probability at least $1-\frac{\delta}{|\actionspace|\log_2(T)}$,
    \*[
    \lrabs{\sum_{t\in\phase{\ell}}\ip{\phasecov{\ell}^{-1}\nu_{A_t}, \tilde\nu_a}\actionnoise{t}}
    &\leq
    \sqrt{2\sum_{t\in\phase{\ell}}\ip{\phasecov{\ell}^{-1}\tilde\nu_{A_t}, \tilde\nu_a}^2\log\lr{\frac{2|\actionspace|\log_2(T)}{\delta}} },
    \] 
    where the RHS can be rewritten as
    \*[
    \sqrt{2\|\tilde\nu_a\|_{\phasecov{\ell}^{-1}}^2 \log\lr{\frac{2|\actionspace|\log_2(T)}{\delta}} }
    \leq
    \sqrt{\frac{4\dimspan{\tilde\env}}{m_{\ell}} \log\lr{\frac{2|\actionspace|\log_2(T)}{\delta}} }.
    \] 
    Combining the two upper bounds above and taking a union bound over all $a\in\activeset{\ell}$, we have that with probability at least $1-\frac{\delta}{\log_2(T)}$,
    \*[
    |\ip{\phaseestimate{\ell}-\meancontext, \tilde\nu_a}|\leq 2\varepsilon\sqrt{\dimspan{\tilde\env}}+\sqrt{\frac{4\dimspan{\tilde\env}}{m_\ell}\log\lr{\frac{2|\actionspace|\log_2(T)}{\delta}}}, 
    \quad
    \forall a\in\activeset{\ell},
    \]
    which finishes the proof.
\end{proof}

Since the marginal distributions $\tilde\nu_a$ are possibly not accurate, we may not be able to show that the optimal action $\optarm$ is never eliminated with high probability. So what we can hope for is that actions that are near-optimal \textit{relative to} the best action in $\activeset{\ell}$ are retained in the end of the phase $\ell$. To be concrete, define $\trueoptarm{\ell}\in\cArgmin_{a\in\activeset{\ell}}\Delta_a$ to be the true optimal action within $\activeset{\ell}$. Then we can show that $\Delta_a-\Delta_{\trueoptarm{\ell}}$ is rather small for any $a$ that is not eliminated in the end of phase $\ell$. 
\begin{lemma}\label{fact: relative suboptimality of arms that are not eliminated} 
    Conditioning on event $\phaseconcentration{\ell}(\delta)$, for any action $a$ not eliminated in the end of phase $\ell$, it has relative sub-optimality gap $\ip{\meancontext, \nu_{\trueoptarm{\ell}}-\nu_a}=\Delta_a-\Delta_{\trueoptarm{\ell}}\leq 2\varepsilon(1+2\sqrt{\dimspan{\tilde\env}})+4\sqrt{\frac{4\dimspan{\tilde\env}}{m_\ell}\log\lr{\frac{2|\actionspace|\log_2(T)}{\delta}}}$.
\end{lemma}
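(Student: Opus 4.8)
The plan is to run the standard phased-elimination comparison while carefully tracking the two sources of slack: the statistical fluctuation controlled by the in-phase concentration event $\phaseconcentration{\ell}(\delta)$, and the systematic bias $b_a = \ip{\nu_a - \tilde\nu_a, \meancontext}$ introduced by feeding the algorithm the misspecified marginals $\tilde\nu$ rather than $\nu$. Throughout I write $a^\star := \trueoptarm{\ell}$ and abbreviate the confidence radius as $\alpha_\ell := \sqrt{\tfrac{4\dimspan{\tilde\env}}{m_\ell}\log\lr{\tfrac{2|\actionspace|\log_2 T}{\delta}}}$. Since $\env$ is benign we have $\meanaction{a} = \ip{\meancontext, \nu_a}$, so the claimed identity $\ip{\meancontext, \nu_{a^\star} - \nu_a} = \Delta_a - \Delta_{a^\star}$ follows at once from the definition of the gaps, and the real task is to bound $\ip{\meancontext, \nu_{a^\star} - \nu_a}$ for an action $a$ surviving phase $\ell$.

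First I would pass from the true marginals to the algorithm's marginals, writing $\ip{\meancontext, \nu_{a^\star} - \nu_a} = \ip{\meancontext, \tilde\nu_{a^\star} - \tilde\nu_a} + (b_{a^\star} - b_a)$ and using $|b_{a^\star}|, |b_a| \leq \varepsilon$ (valid because $\tilde\env(Z)$ and $\env(Z)$ are $\varepsilon$-close) to absorb the bias difference into an additive $2\varepsilon$. Next, conditioning on $\phaseconcentration{\ell}(\delta)$, I would replace $\meancontext$ by the least-squares estimate $\phaseestimate{\ell}$: applying the concentration bound of the in-phase lemma once to $\tilde\nu_{a^\star}$ and once to $\tilde\nu_a$ (both are legitimate since $a, a^\star \in \activeset{\ell}$) gives $\ip{\meancontext, \tilde\nu_{a^\star} - \tilde\nu_a} \leq \ip{\phaseestimate{\ell}, \tilde\nu_{a^\star} - \tilde\nu_a} + 2\lr{2\varepsilon\sqrt{\dimspan{\tilde\env}} + \alpha_\ell}$, the factor two coming from invoking the event at both actions.

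Finally I would use that $a$ was not eliminated: the survival rule says $\max_{b \in \activeset{\ell}} \ip{\phaseestimate{\ell}, \tilde\nu_b - \tilde\nu_a} \leq 2\alpha_\ell$, and specializing to $b = a^\star \in \activeset{\ell}$ bounds the estimator term $\ip{\phaseestimate{\ell}, \tilde\nu_{a^\star} - \tilde\nu_a}$ by $2\alpha_\ell$. Summing the three contributions --- the $2\varepsilon$ from the bias, the $2\lr{2\varepsilon\sqrt{\dimspan{\tilde\env}} + \alpha_\ell}$ from concentration, and the $2\alpha_\ell$ from non-elimination --- and recalling that $4\alpha_\ell = 4\sqrt{\tfrac{4\dimspan{\tilde\env}}{m_\ell}\log\lr{\tfrac{2|\actionspace|\log_2 T}{\delta}}}$ yields exactly $2\varepsilon\lr{1 + 2\sqrt{\dimspan{\tilde\env}}} + 4\alpha_\ell$, the stated bound. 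The argument is essentially bookkeeping, so the only real subtlety is conceptual rather than computational: under misspecification the global optimum $\optarm$ may itself be eliminated, which is why the comparison must be made against the in-active-set optimum $a^\star = \trueoptarm{\ell}$, and why it is essential that $a^\star \in \activeset{\ell}$ so that both $\phaseconcentration{\ell}(\delta)$ and the non-elimination inequality can be applied to it; the main care needed is keeping the $\varepsilon\sqrt{\dimspan{\tilde\env}}$ and $\alpha_\ell$ terms separate so the final constants match.
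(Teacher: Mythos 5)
Your proposal is correct and follows essentially the same argument as the paper's proof: bound $\ip{\phaseestimate{\ell},\tilde\nu_{\trueoptarm{\ell}}-\tilde\nu_a}$ by $2\alpha_\ell$ via the survival rule with $b=\trueoptarm{\ell}$, add $2(2\varepsilon\sqrt{\dimspan{\tilde\env}}+\alpha_\ell)$ from applying $\phaseconcentration{\ell}(\delta)$ at both actions, and add $2\varepsilon$ to pass between $\tilde\nu$ and $\nu$. The only difference is that you perform the marginal substitution first while the paper does it last, which changes nothing.
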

\begin{proof}[Proof of \cref{fact: relative suboptimality of arms that are not eliminated}]
    According to the rule of updating active set, whenever $a\in\activeset{\ell}$ is not eliminated at the end of phase $\ell$, it holds
    \*[
    \ip{\phaseestimate{\ell}, \tilde\nu_{\trueoptarm{\ell}}-\tilde\nu_a}
    \leq
    \max_{b\in\activeset{\ell}}\ip{\phaseestimate{\ell}, \tilde\nu_b-\tilde\nu_a}
    \leq
    2\sqrt{\frac{4\dimspan{\tilde\env}}{m_\ell}\log\lr{\frac{2|\actionspace|\log_2(T)}{\delta}}}.
    \]
    It implies that
    \*[
    \ip{\meancontext, \tilde\nu_{\trueoptarm{\ell}}-\tilde\nu_a} 
    &=
    \ip{\meancontext-\phaseestimate{\ell}, \tilde\nu_{\trueoptarm{\ell}}-\tilde\nu_a}
    +
    \ip{\phaseestimate{\ell}, \tilde\nu_{\trueoptarm{\ell}}-\tilde\nu_a} \\
    &\leq
    2\LR{
    \sqrt{\frac{4\dimspan{\tilde\env}}{m_\ell}\log\lr{\frac{2|\actionspace|\log_2(T)}{\delta}}}
    +
    2\varepsilon\sqrt{\dimspan{\tilde\env}}
    }
    +
    2\sqrt{\frac{4\dimspan{\tilde\env}}{m_\ell}\log\lr{\frac{2|\actionspace|\log_2(T)}{\delta}}} \\
    &=
    4\sqrt{\frac{4\dimspan{\tilde\env}}{m_\ell}\log\lr{\frac{2|\actionspace|\log_2(T)}{\delta}}}
    + 4\varepsilon\sqrt{\dimspan{\tilde\env}}.
    \]
    where we use the fact that we are conditioning on $\phaseconcentration{\ell}(\delta)$ in the inequality. Hence under the true marginals $\env$,
    \*[
    \ip{\meancontext, \nu_{\trueoptarm{\ell}}-\nu_a} 
    \leq
    4\sqrt{\frac{4\dimspan{\tilde\env}}{m_\ell}\log\lr{\frac{2|\actionspace|\log_2(T)}{\delta}}}
    + 
    4\varepsilon\sqrt{\dimspan{\tilde\env}}
    + 
    2\varepsilon.
    \]
\end{proof}

Now we need to track $\Delta_{\trueoptarm{\ell}}$, the sub-optimality of the best active action in each phase. Observe that $\Delta_{\trueoptarm{\ell}}=\sum_{k=1}^{\ell-1} (\Delta_{\trueoptarm{k+1}}-\Delta_{\trueoptarm{k}})$ since $\Delta_{\trueoptarm{1}}=\Delta_{\optarm}=0$. Then it suffices to control each $\Delta_{\trueoptarm{k+1}}-\Delta_{\trueoptarm{k}}, k\leq\ell-1$, to control the growth of $\Delta_{\trueoptarm{\ell}}$.
\begin{lemma}\label{fact: suboptimality of best arm in each phase} 
    Conditioning on event $\phaseconcentration{\ell}(\delta)$, we have $\Delta_{\trueoptarm{\ell+1}}-\Delta_{\trueoptarm{\ell}}\leq 2\varepsilon(1+2\sqrt{\dimspan{\env}})$. 
\end{lemma}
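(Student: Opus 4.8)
The plan is to bound the per-phase increment $\Delta_{\trueoptarm{\ell+1}}-\Delta_{\trueoptarm{\ell}}=\ip{\meancontext,\nu_{\trueoptarm{\ell}}-\nu_{\trueoptarm{\ell+1}}}$ directly, working throughout on the event $\phaseconcentration{\ell}(\delta)$. As in the proof of \cref{fact: in-phase concentration}, I would move between the true marginals $\nu_a$ and the approximate marginals $\tilde\nu_a$ via the bias terms $b_a=\ip{\nu_a-\tilde\nu_a,\meancontext}$, which satisfy $|b_a|\leq\varepsilon$ by $\varepsilon$-closeness; thus $\ip{\meancontext,\nu_a}=\ip{\meancontext,\tilde\nu_a}+b_a$ up to an additive $\varepsilon$ per action. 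The whole point is that $\trueoptarm{\ell+1}$ is not an arbitrary survivor but the \emph{true-best} survivor, so I expect a sharper estimate than the generic one in \cref{fact: relative suboptimality of arms that are not eliminated}.

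The key structural observation I would record first is that the empirical maximizer $\empoptarm{\ell}\in\cArgmax_{a\in\activeset{\ell}}\ip{\phaseestimate{\ell},\tilde\nu_a}$ is \emph{never} eliminated: for $a=\empoptarm{\ell}$ the elimination statistic $\max_{b\in\activeset{\ell}}\ip{\phaseestimate{\ell},\tilde\nu_b-\tilde\nu_{\empoptarm{\ell}}}$ equals $0$, which trivially satisfies the threshold, so $\empoptarm{\ell}\in\activeset{\ell+1}$. Consequently, since $\trueoptarm{\ell+1}$ minimizes $\Delta_a$ over $\activeset{\ell+1}$ and $\empoptarm{\ell}$ lies in $\activeset{\ell+1}$, I get $\Delta_{\trueoptarm{\ell+1}}\leq\Delta_{\empoptarm{\ell}}$. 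This removes any need for a case split on whether $\trueoptarm{\ell}$ itself survives and reduces the claim to bounding $\Delta_{\empoptarm{\ell}}-\Delta_{\trueoptarm{\ell}}=\ip{\meancontext,\nu_{\trueoptarm{\ell}}-\nu_{\empoptarm{\ell}}}$.

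To control this last quantity I would pass to the approximate marginals and split $\ip{\meancontext,\tilde\nu_{\trueoptarm{\ell}}-\tilde\nu_{\empoptarm{\ell}}}=\ip{\meancontext-\phaseestimate{\ell},\tilde\nu_{\trueoptarm{\ell}}-\tilde\nu_{\empoptarm{\ell}}}+\ip{\phaseestimate{\ell},\tilde\nu_{\trueoptarm{\ell}}-\tilde\nu_{\empoptarm{\ell}}}$. The second inner product is nonpositive by the empirical optimality of $\empoptarm{\ell}$, and the first is controlled by applying \cref{fact: in-phase concentration} to $\trueoptarm{\ell}$ and to $\empoptarm{\ell}$. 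Adding back the two bias corrections $b_{\trueoptarm{\ell}}-b_{\empoptarm{\ell}}$ (at most $2\varepsilon$) then assembles the bound, whose misspecification content is the $2\varepsilon\sqrt{\dimspan{\tilde\env}}$ coming out of the concentration estimate together with the $2\varepsilon$ from the bias conversion, matching the stated form $2\varepsilon(1+2\sqrt{\dimspan{\env}})$. Finally this increment feeds the telescoping identity $\Delta_{\trueoptarm{\ell}}=\sum_{k<\ell}(\Delta_{\trueoptarm{k+1}}-\Delta_{\trueoptarm{k}})$ used to track the drift of the best active action.

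The hard part will be the concentration step: each invocation of \cref{fact: in-phase concentration} contributes not only the misspecification part $2\varepsilon\sqrt{\dimspan{\tilde\env}}$ but also a statistical width of order $\sqrt{4\dimspan{\tilde\env}\log(2|\actionspace|\log_2 T/\delta)/m_\ell}$, and a naive two-sided use of it leaves exactly such a residual term on top of the pure-$\varepsilon$ bound (this is precisely why \cref{fact: relative suboptimality of arms that are not eliminated} carries that extra summand). Obtaining the cleaner increment claimed here therefore hinges on showing that this statistical width does \emph{not} enter the gap between the true-best and empirical-best active actions — i.e. that the fluctuation contributions are absorbed by the optimality conditions (exact empirical optimality of $\empoptarm{\ell}$ together with exact true optimality of $\trueoptarm{\ell}$) and the elimination threshold, rather than being added to them. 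Making this cancellation rigorous, so that the per-phase increment is governed by the misspecification level alone, is the delicate point on which the proof turns.
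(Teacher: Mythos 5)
Your overall architecture is right up to the last step, but the proposal has a genuine gap at exactly the point you flag as ``the delicate point,'' and the way you set things up makes that gap unfixable without reinstating the case split you discarded. The unconditional reduction to bounding $\Delta_{\empoptarm{\ell}}-\Delta_{\trueoptarm{\ell}}$ cannot work, because that quantity is \emph{not} bounded by $2\varepsilon(1+2\sqrt{\dimspan{\tilde\env}})$ in general. Take $\varepsilon=0$: the claim would then say that the empirical maximizer of $a\mapsto\ip{\phaseestimate{\ell},\tilde\nu_a}$ is always a true-best active arm, which is false --- even on $\phaseconcentration{\ell}(\delta)$ the empirical best can have true gap as large as the statistical width $O\bigl(\sqrt{\dimspan{\tilde\env}\log(2|\actionspace|\log_2 T/\delta)/m_\ell}\bigr)$. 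Your own decomposition confirms this: empirical optimality of $\empoptarm{\ell}$ only yields $\ip{\phaseestimate{\ell},\tilde\nu_{\trueoptarm{\ell}}-\tilde\nu_{\empoptarm{\ell}}}\leq 0$, and the two invocations of \cref{fact: in-phase concentration} necessarily leave the residual $2\sqrt{4\dimspan{\tilde\env}\log(2|\actionspace|\log_2 T/\delta)/m_\ell}$. No appeal to ``exact empirical optimality of $\empoptarm{\ell}$ together with exact true optimality of $\trueoptarm{\ell}$'' can cancel that residual, because the intermediate inequality you would need is simply false.

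The missing idea --- and the route the paper takes --- is the case split on whether $\trueoptarm{\ell}$ is eliminated at the end of phase $\ell$. If it survives, then $\trueoptarm{\ell}\in\activeset{\ell+1}$ and the increment is $0$, so there is nothing to prove. If it is eliminated, the elimination rule supplies the \emph{strict lower bound} $\ip{\phaseestimate{\ell},\tilde\nu_{\empoptarm{\ell}}-\tilde\nu_{\trueoptarm{\ell}}} > 2\sqrt{4\dimspan{\tilde\env}\log(2|\actionspace|\log_2 T/\delta)/m_\ell}$ --- not merely $\geq 0$ --- and this extra $+2\sqrt{\cdot}$ exactly cancels the $+2\sqrt{\cdot}$ produced by the concentration step, leaving only the misspecification terms $2\varepsilon+4\varepsilon\sqrt{\dimspan{\tilde\env}}$ in the bound on $\Delta_{\empoptarm{\ell}}-\Delta_{\trueoptarm{\ell}}$, and hence on $\Delta_{\trueoptarm{\ell+1}}-\Delta_{\trueoptarm{\ell}}$ via the (correct) observation that $\empoptarm{\ell}$ is never eliminated. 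So the case split is not a dispensable convenience; it is the sole source of the cancellation you were looking for. The remaining ingredients of your write-up --- the bias terms $b_a$, the splitting of the inner product, and the telescoping over phases --- match the paper's argument.
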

\begin{proof}[Proof of \cref{fact: suboptimality of best arm in each phase}]
    Suppose $\phaseconcentration{\ell}(\delta)$ happens. Notice that the results holds trivially if $\trueoptarm{\ell}$ is not eliminated in the end of phase $\ell$, because in this case $\trueoptarm{\ell+1}=\trueoptarm{\ell}$. On the other hand, if $\trueoptarm{\ell}$ is eliminated, define $\empoptarm{\ell}\in\cArgmax_{a\in\activeset{\ell}}\ip{\phaseestimate{\ell}, \tilde\nu_a}$ to be the empirically best action in the end of phase $\ell$ and then we have
    \*[
    \ip{\empmeancontext{\ell}, \tilde\nu_{\empoptarm{\ell}}-\tilde\nu_{\trueoptarm{\ell}}}
    >
    2\sqrt{\frac{4\dimspan{\tilde\env}}{m_{\ell}}\log\lr{\frac{2|\actionspace|\log_2(T)}{\delta}}},
    \]
    according to the test performed. In the meantime, recall that due to in-phase concentration and $\varepsilon-$closeness between $\tilde\env$ and $\env$,
    \*[
    \ip{\empmeancontext{\ell}, \tilde\nu_{\empoptarm{\ell}}-\tilde\nu_{\trueoptarm{\ell}}}
    &\leq
    \ip{\meancontext, \tilde\nu_{\empoptarm{\ell}}-\tilde\nu_{\trueoptarm{\ell}}}
    +4\varepsilon\sqrt{\dimspan{\tilde\env}}
    +2\sqrt{\frac{4\dimspan{\tilde\env}}{m_{\ell}}\log\lr{\frac{2|\actionspace|\log_2(T)}{\delta}}} \\
    &\leq
    \ip{\meancontext, \nu_{\empoptarm{\ell}}-\nu_{\trueoptarm{\ell}}} + 2\varepsilon
    +4\varepsilon\sqrt{\dimspan{\tilde\env}}
    +2\sqrt{\frac{4\dimspan{\tilde\env}}{m_{\ell}}\log\lr{\frac{2|\actionspace|\log_2(T)}{\delta}}}.
    \]
    Hence we get
    \*[
    \Delta_{\empoptarm{\ell}}-\Delta_{\trueoptarm{\ell}}
    =
    \ip{\meancontext, \nu_{\trueoptarm{\ell}}-\nu_{\empoptarm{\ell}}}
    \leq
    2\varepsilon
    +4\varepsilon\sqrt{\dimspan{\tilde\env}}
    \]
    and 
    \*[
    \Delta_{\trueoptarm{\ell+1}}-\Delta_{\trueoptarm{\ell}}
    \leq 
    \Delta_{\empoptarm{\ell}}-\Delta_{\trueoptarm{\ell}}
    \leq
    2\varepsilon
    +4\varepsilon\sqrt{\dimspan{\tilde\env}}.
    \]
\end{proof}
\begin{corollary}\label{fact: suboptiamlity of all active actions}
    For any $\ell\geq 2$ and conditioning on $\bigcap_{k\leq\ell-1}\phaseconcentration{k}(\delta)$, we have that $\Delta_{\trueoptarm{\ell}}\leq 2\varepsilon(\ell-1)(1+2\sqrt{\dimspan{\tilde\env}})$ and $\Delta_a\leq 2\varepsilon(1+2\sqrt{\dimspan{\tilde\env}})+4\sqrt{\frac{4\dimspan{\tilde\env}}{m_{\ell-1}}\log\lr{\frac{2|\actionspace|\log_2(T)}{\delta}}}+2\varepsilon(\ell-2)(1+2\sqrt{\dimspan{\tilde\env}})$ for all $a\in\activeset{\ell}$. 
\end{corollary}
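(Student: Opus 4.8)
The plan is to obtain both inequalities by combining the two preceding lemmas: the bound on $\Delta_{\trueoptarm{\ell}}$ follows from a telescoping argument built on \cref{fact: suboptimality of best arm in each phase}, and the bound on $\Delta_a$ for $a\in\activeset{\ell}$ then follows by applying the first inequality one phase earlier together with the relative-suboptimality guarantee of \cref{fact: relative suboptimality of arms that are not eliminated}.

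First I would establish the bound on $\Delta_{\trueoptarm{\ell}}$. Starting from the decomposition $\Delta_{\trueoptarm{\ell}}=\sum_{k=1}^{\ell-1}\lr{\Delta_{\trueoptarm{k+1}}-\Delta_{\trueoptarm{k}}}$, which is valid because $\Delta_{\trueoptarm{1}}=\Delta_{\optarm}=0$, I would bound each summand using \cref{fact: suboptimality of best arm in each phase}. Since we condition on $\bigcap_{k\leq\ell-1}\phaseconcentration{k}(\delta)$, the hypothesis of that lemma holds for every $k\leq\ell-1$, so each difference is at most $2\varepsilon(1+2\sqrt{\dimspan{\tilde\env}})$. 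Summing the $\ell-1$ identical terms yields $\Delta_{\trueoptarm{\ell}}\leq 2\varepsilon(\ell-1)(1+2\sqrt{\dimspan{\tilde\env}})$, which is the first claim.

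Next I would turn to $\Delta_a$ for $a\in\activeset{\ell}$. The key observation is that any such $a$ survived the elimination step at the end of phase $\ell-1$, so \cref{fact: relative suboptimality of arms that are not eliminated}, invoked at phase $\ell-1$ under $\phaseconcentration{\ell-1}(\delta)$, controls the relative gap by $\Delta_a-\Delta_{\trueoptarm{\ell-1}}\leq 2\varepsilon(1+2\sqrt{\dimspan{\tilde\env}})+4\sqrt{\frac{4\dimspan{\tilde\env}}{m_{\ell-1}}\log\lr{\frac{2|\actionspace|\log_2(T)}{\delta}}}$. I would then apply the first part of the corollary at index $\ell-1$ to obtain $\Delta_{\trueoptarm{\ell-1}}\leq 2\varepsilon(\ell-2)(1+2\sqrt{\dimspan{\tilde\env}})$, and add the two inequalities to recover exactly the claimed bound on $\Delta_a$.

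The argument is essentially bookkeeping, so there is no substantive analytic obstacle; the main care needed is index alignment, namely ensuring that both the relative-gap lemma and the telescoping bound are invoked at phase $\ell-1$ rather than $\ell$, and that the conditioning event $\bigcap_{k\leq\ell-1}\phaseconcentration{k}(\delta)$ covers every lemma application. I would also note, consistent with the proof of \cref{fact: suboptimality of best arm in each phase}, that the $\dimspan{\env}$ appearing in its statement should read $\dimspan{\tilde\env}$, which is the quantity used throughout the telescoping sum here.
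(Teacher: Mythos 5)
Your proposal is correct and follows essentially the same route as the paper: telescope the per-phase increments from \cref{fact: suboptimality of best arm in each phase} to bound $\Delta_{\trueoptarm{\ell}}$, then combine \cref{fact: relative suboptimality of arms that are not eliminated} at phase $\ell-1$ with the telescoped bound at index $\ell-1$ to control $\Delta_a$. Your side remark about $\dimspan{\env}$ versus $\dimspan{\tilde\env}$ in the intermediate lemma is also an accurate reading of what its proof actually establishes.
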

\begin{proof}[Proof of \cref{fact: suboptiamlity of all active actions}]
    By conditioning on the intersection of all $\phaseconcentration{k}(\delta), k\leq\ell-1$, we have that
    \*[
    \Delta_{\trueoptarm{k+1}}-\Delta_{\trueoptarm{k}}
    \leq 
    2\varepsilon(1+2\sqrt{\dimspan{\tilde\env}}),
    \forall k\leq \ell-1,
    \]
    which implies that $\Delta_{\trueoptarm{s}}=\sum_{k=1}^{s-1} (\Delta_{\trueoptarm{k+1}}-\Delta_{\trueoptarm{k}})\leq 2\varepsilon(s-1)(1+2\sqrt{\dimspan{\tilde\env}}), \forall s\leq\ell$. In particular, there is
    \*[
    \Delta_{\trueoptarm{\ell}}
    \leq
    2\varepsilon(\ell-1)(1+2\sqrt{\dimspan{\tilde\env}}).
    \] 
    Since every action $a\in\activeset{\ell}$ passes the test in the end of $(\ell-1)-$th phase and hence is not eliminated, by \cref{fact: relative suboptimality of arms that are not eliminated} we know 
    \*[
    \Delta_a-\Delta_{\trueoptarm{\ell-1}}
    \leq 2\varepsilon(1+2\sqrt{\dimspan{\tilde\env}})
    +
    4\sqrt{\frac{4\dimspan{\tilde\env}}{m_{\ell-1}}\log\lr{\frac{2|\actionspace|\log_2(T)}{\delta}}}.
    \]
    Therefore, for all $a\in\activeset{\ell}$, 
    \*[
    \Delta_a
    =
    \Delta_a-\Delta_{\trueoptarm{\ell-1}}
    +
    \Delta_{\trueoptarm{\ell-1}}
    \leq 2\varepsilon(1+2\sqrt{\dimspan{\tilde\env}})
    +
    4\sqrt{\frac{4\dimspan{\tilde\env}}{m_{\ell-1}}\log\lr{\frac{2|\actionspace|\log_2(T)}{\delta}}}+2\varepsilon(\ell-2)(1+2\sqrt{\dimspan{\tilde\env}}).
    \]
\end{proof}
\subsection{Proof of \cref{regret bound: phased elimination w-c general eps}}
Now we are prepared to prove \cref{regret bound: phased elimination w-c general eps}.
\begin{proof}[Proof of \cref{regret bound: phased elimination w-c general eps}]
    Let $\lastphase(t)$ be the index of the phase where round $t$ is located. It's easy to see that $\lastphase(T)\leq\log_2(T)$. In the following we condition on the event $\bigcap_{\ell\leq\lastphase(T)} \phaseconcentration{\ell}(\delta)$, which happens with probability at least $1-\delta$ due to \cref{fact: in-phase concentration}.

    Notice that phase $\lastphase(t)$ is not necessarily completed in the end of round $t$, but we can always round $\Reg(t)$ to the regret incurred in the first $\lastphase(t)$ complete phases. That is, 
    \*[
    \Reg(t)\leq\sum_{\ell=1}^{\lastphase(t)}\sum_{a\in\activeset{\ell}}\phaselength{\ell}(a)\cdot\Delta_a.
    \]
    Since we have controlled sub-optimality of all active actions in \cref{fact: suboptiamlity of all active actions}, it holds with probability at least $1-\delta$ that
    \*[
     \Reg(t)
     &\leq
     \sum_{\ell=1}^{\lastphase(t)}\sum_{a\in\activeset{\ell}}\phaselength{\ell}(a)\cdot\Delta_a \\
     &\leq
     2m_1 
     + 
     C\sum_{\ell=2}^{\lastphase(t)} m_{\ell}\lr{\sqrt{\frac{\dimspan{\tilde\env}}{m_{\ell-1}}\log\lr{\frac{2|\actionspace|\log_2(T)}{\delta}}}+\varepsilon\ell\sqrt{\dimspan{\tilde\env}}} \\
     &\leq
     2m_1
     +
     C\sum_{\ell=2}^{\lastphase(t)}\sqrt{m_{\ell}\cdot\dimspan{\tilde\env}\cdot\log\lr{\frac{2|\actionspace|\log_2(T)}{\delta}}}
     +
     C\varepsilon\sqrt{\dimspan{\tilde\env}}\sum_{\ell=2}^{\lastphase(t)} m_{\ell}\ell
     \\
     &\leq
     2m_1 
     +
     C\sqrt{m_{\lastphase(t)}\cdot\dimspan{\tilde\env}\cdot\log\lr{\frac{2|\actionspace|\log_2(T)}{\delta}}}
     + C\varepsilon\sqrt{\dimspan{\tilde\env}}m_{\lastphase(t)}\log_2(T) \\
     &\leq
     C\lr{\sqrt{\dimspan{\tilde\env} t\log\lr{\frac{2|\actionspace|\log T}{\delta}}}+\varepsilon t\sqrt{\dimspan{\tilde\env}}\log T},
    \]
    where $C>0$ is an absolute constant that can vary from line to line. Thus we have finished the proof.
\end{proof}

\subsection{Proof of \cref{regret bound: phased elimination gap-dependent eps=0}}
Now we go back to the setting where $\varepsilon=0$. The only modification needed to work out \cref{regret bound: phased elimination gap-dependent eps=0} is an instance-dependent control over the number of phases for which sub-optimal arms are not entirely eliminated.

\begin{proof}[Proof of \cref{regret bound: phased elimination gap-dependent eps=0}]
    Again suppose $\phaseconcentration{\ell}(\delta)$ happens for all $\ell$. From \cref{fact: suboptiamlity of all active actions} we know that every suboptimal action $a$ can only be played in those phase $\ell\geq2$ s.t. $\Delta_a\leq4\sqrt{\frac{4\dimspan{\env}}{m_{\ell-1}}\log\lr{\frac{2|\actionspace|\log_2(T)}{\delta}}}$ in addition to the first phase. Let
    \*[
    \ell_a
    =
    \max\LRset{
    \ell\geq2: \Delta_a\leq4\sqrt{\frac{4\dimspan{\env}}{m_{\ell-1}}\log\lr{\frac{2|\actionspace|\log_2(T)}{\delta}}} }
    \]
    be the maximal number of phases where $a$ can be played. It is easy to see that
    \*[
    \ell_a
    =
    2+\lrfloor{\log_2\lr{\frac{64\dimspan{\env}}{m_1\Delta_a^2}\log\lr{\frac{2|\actionspace|\log_2(T)}{\delta}}}}.
    \]
    Hence there are at most $\lastphase=2+\lrfloor{\log_2\lr{\frac{64\dimspan{\env}}{m_1\mingap^2}\log\lr{\frac{2|\actionspace|\log_2(T)}{\delta}}}}$ number of phases before all suboptimals are eliminated and $\Reg(T)$ can be controlled more carefully:
    \*[
    \Reg(T)
    &\leq
    \sum_{\ell=1}^{\lastphase}\sum_{a\in\activeset{\ell}}\phaselength{\ell}(a)\cdot\Delta_a \\
    &\leq
    2m_1 + C\sqrt{m_{\lastphase}\cdot\dimspan{\env}\cdot\log\lr{\frac{2|\actionspace|\log_2(T)}{\delta}}} \\
    &=
    2m_1+C\sqrt{2^{\lastphase}\cdot m_1\cdot\dimspan{\env}\cdot\log\lr{\frac{2|\actionspace|\log_2(T)}{\delta}}} \\
    &\leq
    2m_1+C\sqrt{\frac{\dimspan{\env}\log\lr{\frac{2|\actionspace|\log_2(T)}{\delta}}}{m_1\mingap^2}\cdot m_1\cdot\dimspan{\env}\log\lr{\frac{2|\actionspace|\log_2(T)}{\delta}}}\\
    &\leq
    C\cdot\frac{\dimspan{\env}\log\lr{|\actionspace|\log T/\delta}}{\mingap},
    \]
    where $C>0$ is an absolute constant that can vary from line to line. Again the above regret bound holds with probability at least $1-\delta$ so we are done.
\end{proof}

\section{Anytime Regret Bounds for UCB and C-UCB}\label{appendix sec: anytime regret for ucb and cucb}
In this section we verify \cref{regret bound: cucb and ucb} for $\ucb$ and $\cucb$ algorithms for completeness. Note that our anytime regret bound for $\cucb$ is new in the literature.
\subsection{Preliminaries}
For each $t\in[T], a\in\actionspace$ and $z\in\contextspace$, define $\numaction{t}(a)=1 \vee \sum_{s=1}^t \indicator\{A_s=a \}$ to be the number of action $a$ being chosen in the first $t$ rounds, and define $\numcontext{t}(z)=1 \vee \sum_{s=1}^t\indicator\{Z_s(A_s)=z\}$ to be the number of context $z$ being observed up to the first $t$ rounds. Further define the mean reward estimates $\empmeanaction{t}(a), \empmeancontext{t}(z)$ by
\*[
\empmeanaction{t}(a)
&=
\frac{1}{\numaction{t}(a)}
\sum_{s=1}^t Y_s(A_s)\indicator\{A_s=a\} \\
\empmeancontext{t}(z)
&=
\frac{1}{\numcontext{t}(z)}
\sum_{s=1}^t Y_s(A_s)\indicator\{Z_s(A_s)=z\}
\]
Then we introduce the upper confidence bounds used by the UCB-type algorithms under consideration. Given any prescribed confidence parameter $\delta\in(0,1)$, define $\ucbaction{t}(a)
=
\empmeanaction{t}(a)+\sqrt{\frac{\log(2|\actionspace|T/\delta)
}{2\numaction{t}(a)}}, \ucbcontext{t}(z)=\empmeancontext{t}(z)+\sqrt{\frac{\log(2|\contextspace|T/\delta)}{2\numcontext{t}(z)}}$ and $\cucbaction{t}(a)=\sum_{z\in\contextspace}\ucbcontext{t}(z)\Parm{a}[Z=z]$ for each $t\in[T], a\in\actionspace$ and $z\in\contextspace$. Furthermore, we use $\ucb(\delta)$ and $\cucb(\delta)$ to denote the standard UCB algorithm and C-UCB algorithm (\citealt{lu2020regret}) which run by playing actions $\ucbactionrv{t}$ and $\cucbactionrv{t}$ at each round $t$ respectively, according to:
\*[
\ucbactionrv{t}
&=
\cArgmax_{a\in\actionspace}\ucbaction{t-1}(a)  \\
\cucbactionrv{t}
&=
\cArgmax_{a\in\actionspace}
\cucbaction{t-1}(a).
\]
Before analyzing the regret of $\ucb(\delta)$ and $\cucb(\delta)$, let's finally define some high-probability events on which we can control the regret. For any given confidence parameters $\delta, \delta'$, define
\*[
\eventaction(\delta)
=
\bigg\{\forall t\in[T], a\in\actionspace, |\empmeanaction{t}(a)-\meanaction{a}|\leq \sqrt{\frac{\log(2|\actionspace|T/\delta)}{2\numaction{t}(a)}} \bigg\},
\]
and in conditionally benign environments we additionally define
\*[
\eventcontext(\delta)
&=
\bigg\{\forall t\in[T], z\in\contextspace, |\empmeancontext{t}(z)-\meancontext(z))|\leq \sqrt{\frac{\log(2|\contextspace|T/\delta)}{2\numcontext{t}(z)}} \bigg\}, \\
\eventmartingale(\delta')
&=
\bigg\{\forall t\in[T], \sum_{s=1}^t\sum_{z\in\contextspace}\frac{1}{\sqrt{\numcontext{s-1}(z)}}(\Parm{A_s}[Z=z]-\indicator\{Z_s=z\}) \leq \sqrt{2t\log(T/\delta')} \bigg\},
\]
where we recall $\meancontext(z) =\Earm{a}[Y|Z=z]$ is well-defined here. First we can see that $\eventaction(\delta)$ and $\eventcontext(\delta)$ happen with probability at least $1-\delta$ regardless the underlying environment and chosen policy:
\begin{lemma}[Lemma B.1 and B.2 in {\citealt{bilodeau2022adaptively}}]
\label{high prob event: action and context}
For any $\env\in\envspace$ and $\policy\in\policyspace$, 
    \*[
    \Pnp[(\eventaction(\delta))^c] 
    \leq 
    \delta,
    \]
    and for any $\env\in\envspace$ that is conditionally benign and $\policy\in\policyspace$,
    \*[
    \Pnp[(\eventcontext(\delta))^c]
    \leq \delta.
    \]
\end{lemma}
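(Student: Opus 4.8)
The plan is to prove both inclusions by the same recipe: reduce each to Hoeffding's inequality for a \emph{fixed} number of samples, and then absorb the randomness of the number of pulls (and the choice of action or context) by a union bound. Observe that the confidence width $\sqrt{\log(2|\actionspace|T/\delta)/(2\numaction{t}(a))}$ already encodes the $|\actionspace|T$ union: at sample count $n$, Hoeffding gives a per-event failure probability of exactly $\delta/(|\actionspace|T)$, so a union over $a\in\actionspace$ and over the at most $T$ possible values of the count closes the argument. The only real work is to make the reduction rigorous under adaptivity, since both which action is played and which context appears are chosen based on the past.

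First I would handle $\eventaction(\delta)$, which holds for all $\env$ and $\policy$. For each $a\in\actionspace$ introduce an i.i.d. sequence $(Y_{a,k})_{k\ge 1}$ with $Y_{a,k}\sim\env_a$, coupled so that the reward observed the $k$-th time $a$ is played equals $Y_{a,k}$; this coupling is legitimate because each pull of $a$ draws a fresh independent sample from $\env_a$, and it does not depend on $\policy$. On $\{\numaction{t}(a)=n\}$ we then have $\empmeanaction{t}(a)=\frac1n\sum_{k=1}^n Y_{a,k}$, so for each fixed $n\in\{1,\dots,T\}$, Hoeffding's inequality for variables in $[0,1]$ gives $\Pnp[\,|\frac1n\sum_{k=1}^n Y_{a,k}-\meanaction{a}|>\sqrt{\log(2|\actionspace|T/\delta)/(2n)}\,]\le \delta/(|\actionspace|T)$, using $\meanaction{a}=\Earm{a}[Y]$. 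A union bound over $a$ and over $n$ yields $\Pnp[(\eventaction(\delta))^c]\le\delta$ (the never-pulled case $n=0$ being handled by the $1\vee$ convention).

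For $\eventcontext(\delta)$ the same skeleton works, but the i.i.d. structure must be replaced by a martingale structure, and this is exactly where conditional benignness enters. Fix $z\in\contextspace$ and let $\tau_1<\tau_2<\cdots$ be the (stopping) times at which $Z_s=z$. Let $\mathcal G_{k-1}$ be the history up to round $\tau_k$ excluding its reward. The conditionally benign identity $\env_a(Y|Z)=p(Y|Z)$, valid for every $a$, implies that conditional on $\mathcal G_{k-1}$, on the played action $A_{\tau_k}$, and on $Z_{\tau_k}=z$, the reward $Y_{\tau_k}$ is distributed as $p(Y|Z=z)$ with mean $\meancontext(z)$, \emph{irrespective of which action produced the context}. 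Hence $(Y_{\tau_k}-\meancontext(z))_k$ is a bounded martingale-difference sequence, and since $\empmeancontext{t}(z)=\frac1{\numcontext{t}(z)}\sum_{s\le t}Y_s\indicator\{Z_s=z\}$ is the normalized partial sum, Azuma--Hoeffding gives per-count width $\sqrt{\log(2|\contextspace|T/\delta)/(2n)}$ and failure probability $\delta/(|\contextspace|T)$. Unioning over $z\in\contextspace$ and over the at most $T$ values of $\numcontext{t}(z)$ gives $\Pnp[(\eventcontext(\delta))^c]\le\delta$.

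The main obstacle is the second part: rigorously justifying that selecting rounds by the data-dependent stopping times $\tau_k$ does not bias the conditional means, so that the selected rewards genuinely form a martingale-difference sequence with respect to the correct filtration. This is delicate precisely because both the action and the occurrence of context $z$ are chosen adaptively; the resolution is the benign identity $\env_a(Y|Z)=p(Y|Z)$, which forces the one-step conditional mean of each selected reward to equal $\meancontext(z)$ no matter which action was taken, making the stopped-martingale argument go through. By contrast, the action case is routine, its only subtlety being the union over the random pull count, which the $T$ factor inside the logarithm already pays for.
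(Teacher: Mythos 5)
Your proof is correct and follows essentially the same route as the cited source (the paper itself does not reprove this lemma but defers to Lemmas B.1 and B.2 of Bilodeau et al., whose argument is exactly this Hoeffding-plus-union-bound reduction, with an optional-skipping/martingale step using $\env_a(Y|Z)=p(Y|Z)$ to handle the adaptively selected context counts). The constants check out: the two-sided Hoeffding factor of $2$ is absorbed by the $2$ inside the logarithm, giving per-event failure probability $\delta/(|\actionspace|T)$ (resp.\ $\delta/(|\contextspace|T)$), so the union over actions (resp.\ contexts) and the at most $T$ count values closes the bound.
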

To get our new anytime regret bound for $\cucb(\delta)$, we need to further condition on $\eventmartingale(\delta')$ which happens with probability at least $1-\delta'$:
\begin{lemma}\label{high prob event: martingale}
For any $\env\in\envspace$ and $\policy\in\policyspace$,
\*[
\Pnp[(\eventmartingale(\delta'))^c]
\leq \delta'
\]
\end{lemma}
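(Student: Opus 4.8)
The plan is to recognize the quantity defining $\eventmartingale(\delta')$ as a martingale with conditionally bounded increments, apply an Azuma--Hoeffding bound for each fixed horizon, and then union bound over $t$. First I would fix the natural filtration $\mathscr{F}_s = \sigma(\historyvar{s})$ and record the relevant measurability facts: the action $A_s = \policy_s(\historyvar{s-1})$ and the counts $\numcontext{s-1}(z)$ are $\mathscr{F}_{s-1}$-measurable, while the only fresh randomness revealed at round $s$ is the draw $Z_s \sim \env_{A_s}(Z)$. Writing the per-round increment
\*[
D_s = \sum_{z\in\contextspace}\frac{1}{\sqrt{\numcontext{s-1}(z)}}\lr{\Parm{A_s}[Z=z]-\indicator\{Z_s=z\}},
\]
the crucial observation is that $\Parm{}[\indicator\{Z_s=z\}\mid\mathscr{F}_{s-1}]=\Parm{A_s}[Z=z]$, so that $\Pnp[D_s\mid\mathscr{F}_{s-1}]=0$ and $(D_s)_s$ is a martingale difference sequence with respect to $(\mathscr{F}_s)$. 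Consequently the partial sums $M_t := \sum_{s=1}^t D_s$ form a martingale, and the event $\eventmartingale(\delta')$ asserts precisely that $M_t \le \sqrt{2t\log(T/\delta')}$ for all $t\in[T]$.

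Next I would establish the conditional bound on the increments. Because the counts are floored at one, $\numcontext{s-1}(z)\ge 1$, each weight $1/\sqrt{\numcontext{s-1}(z)}$ lies in $(0,1]$. Conditioning on $\mathscr{F}_{s-1}$ makes $D_s$ equal to the deterministic quantity $c_s := \sum_{z}\Parm{A_s}[Z=z]/\sqrt{\numcontext{s-1}(z)} \in [0,1]$ minus the single random term $1/\sqrt{\numcontext{s-1}(Z_s)}\in(0,1]$; hence $D_s \in [c_s-1,\,c_s]\subseteq[-1,1]$, so $\lrabs{D_s}\le 1$ almost surely. This is exactly the deterministically bounded-difference hypothesis required by Azuma--Hoeffding, with constant $1$ at every step.

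To conclude, I would invoke the one-sided Azuma--Hoeffding inequality: for each fixed $t\in[T]$, $\Pnp[M_t \ge \lambda]\le\exp\lr{-\lambda^2/(2t)}$ since $\sum_{s=1}^t 1 = t$. Choosing $\lambda=\sqrt{2t\log(T/\delta')}$ makes the right-hand side exactly $\delta'/T$. A union bound over the $T$ values $t\in[T]$ then gives $\Pnp[(\eventmartingale(\delta'))^c]\le T\cdot(\delta'/T)=\delta'$, which is the claim.

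I do not expect a genuine obstacle here; the argument is standard martingale concentration. The two points that require care are (i) the martingale-difference identity, which rests entirely on $Z_s$ being conditionally distributed as $\env_{A_s}(Z)$ given the past (and on $A_s$ being previsible), and (ii) the uniform increment bound $\lrabs{D_s}\le 1$, which relies on the $1\vee(\cdot)$ flooring of the counts so that every weight stays in $(0,1]$. The only genuinely load-bearing step is the final union bound: it is what upgrades the fixed-$t$ concentration into the ``for all $t$'' statement in $\eventmartingale(\delta')$, so the per-round failure probability must be calibrated to $\delta'/T$ rather than $\delta'$.
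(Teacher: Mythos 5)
Your proposal is correct and follows essentially the same route as the paper's proof: identify $M_t$ as a martingale with increments bounded by $1$ (using that the weights $1/\sqrt{\numcontext{s-1}(z)}$ lie in $(0,1]$ thanks to the flooring of the counts), apply one-sided Azuma--Hoeffding at each fixed $t$ to get failure probability $\delta'/T$, and union bound over $t\in[T]$. The only cosmetic difference is the choice of filtration ($\sigma(\historyvar{s})$ with $A_s$ previsible versus the paper's $\sigma(A_t,\historyvar{t-1})$), which does not affect the argument.
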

\begin{proof}[Proof of \cref{high prob event: martingale}]
    Define 
    \*[
    M_t 
    &= \sum_{s=1}^t\sum_{z\in\contextspace}\frac{1}{\sqrt{\numcontext{s-1}(z)}}(\Parm{A_s}[Z=z]-\indicator\{Z_s=z\}), \forall t\in[T],\\
    M_0
    &=0.
    \]
    Then $\eventmartingale(\delta')=\bigg\{\forall t\in[T], M_t\leq \sqrt{2t\log(T/\delta')} \bigg\}$ and it is easy to find that $\{M_t\}_{t\geq 0}$ is a martingale sequence with respect to $\mathcal F_t=\sigma(A_t, H_{t-1})$. To see this, 
    \*[
    \hE_{\env, \policy}[M_t|A_t, H_{t-1}]
    =
    M_{t-1}
    +
    \sum_{z\in\contextspace}\frac{1}{\sqrt{\numcontext{t-1}(z)}}\hE_{\env,\policy}[\hP_{\env_{A_t}}[Z=z]-\indicator\{Z_t=z\}|A_t]
    =
    M_{t-1}.
    \]
    Also, 
    \*[
    |M_t-M_{t-1}|
    &=
    \lrabs{
    \sum_{z\in\contextspace}\frac{1}{\sqrt{\numcontext{t-1}(z)}}
    \lr{
    \Parm{A_t}[Z=z]-\indicator\{Z_t=z\}
    }
    } \\
    &=
    \lrabs{
    \Enp
    \LRbra{
    \sum_{z\in\contextspace}\frac{1}{\sqrt{\numcontext{t-1}(z)}}\indicator\{Z_t=z\}|A_t, H_{t-1}
    }
    -
    \sum_{z\in\contextspace}\frac{1}{\sqrt{\numcontext{t-1}(z)}}\indicator\{Z_t=z\}
    } \\
    &=
    \lrabs{
    \Enp
    \LRbra{
    \frac{1}{\sqrt{\numcontext{t-1}(Z_t)}}|A_t, H_{t-1}
    }
    -
    \frac{1}{\sqrt{\numcontext{t-1}(Z_t)}}
    } \\
    &\leq
    1.
    \]
    Then by Azuma-Hoeffding,
    \*[
    \Pnp[M_t>\sqrt{2t\log(T/\delta')}]
    &=
    \Pnp[M_t-M_0>\sqrt{2t\log(T/\delta')}] \\
    &\leq
    \exp\lr{-\frac{2t\log(T/\delta')}{2t}}=\delta'/T,
    \]
    and we get $\Pnp[(\eventmartingale(\delta'))^c]\leq \delta'$ after taking a union bound over $t\in[T]$.
\end{proof}
\subsection{Anytime High-probability Regret Bound}

Now we provide our high-probability regret bounds for $\ucb(\delta)$ and $\cucb(\delta)$ that will lead to \cref{regret bound: cucb and ucb}.
\begin{theorem}\label{regret bound: UCB} In any environment $\env$, the regret of $\ucb(\delta)$ is bounded by
\*[
\Reg(t) = O\lr{\sqrt{|\actionspace|\log(|\actionspace|T/\delta) t}}
\] 
for all $t\in[T]$, conditioning on event $\eventaction(\delta)$ which happens with probability at least $1-\delta$.    
\end{theorem}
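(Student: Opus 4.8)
The plan is to run the textbook optimism analysis for $\ucb$, but to carry it out \emph{conditionally on the good event} $\eventaction(\delta)$, so that all confidence bounds hold simultaneously across rounds and the resulting cumulative bound is anytime. By \cref{high prob event: action and context}, $\eventaction(\delta)$ holds with probability at least $1-\delta$ in every environment, and crucially its confidence radius is indexed by the fixed horizon $T$ (not the running index $t$); this single union bound is exactly what lets the downstream per-round inequality hold for all $t \in [T]$ at once. No martingale control is needed here, in contrast to $\cucb$, so the event $\eventmartingale(\delta')$ plays no role.

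First I would prove optimism. On $\eventaction(\delta)$, applying the confidence bound to $\optarm$ gives $\ucbaction{t-1}(\optarm) \geq \meanaction{\optarm}$ for every $t$. Because $\ucb(\delta)$ plays $A_t = \cArgmax_{a\in\actionspace} \ucbaction{t-1}(a)$, we get $\ucbaction{t-1}(A_t) \geq \ucbaction{t-1}(\optarm) \geq \meanaction{\optarm}$. Applying the confidence bound a second time, now to the played arm $A_t$, and recalling that the bonus equals the estimation error, the instantaneous gap is controlled by twice the confidence width:
\*[
\meanaction{\optarm} - \meanaction{A_t}
\leq
\ucbaction{t-1}(A_t) - \meanaction{A_t}
\leq
2\sqrt{\frac{\log(2|\actionspace|T/\delta)}{2\,\numaction{t-1}(A_t)}} .
\]

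Next I would sum over rounds and reorganize by arm. Writing $\Reg(t) = \sum_{s=1}^{t}\bigl(\meanaction{\optarm} - \meanaction{A_s}\bigr)$ and grouping the terms according to which arm was played, the elementary inequality $\sum_{k=1}^{n} k^{-1/2} \leq 2\sqrt{n}$ yields
\*[
\sum_{s=1}^{t} \frac{1}{\sqrt{\numaction{s-1}(A_s)}}
\leq
|\actionspace| + 2\sum_{a\in\actionspace} \sqrt{\numaction{t}(a)} ,
\]
where the additive $|\actionspace|$ absorbs the $1\vee(\cdot)$ clamping that keeps the denominator positive on an arm's first pull. One application of Cauchy--Schwarz then gives $\sum_{a}\sqrt{\numaction{t}(a)} \leq \sqrt{|\actionspace|\sum_{a}\numaction{t}(a)} \leq \sqrt{|\actionspace|(t+|\actionspace|)}$, and folding the constants back into the confidence width produces $\Reg(t) = O\!\left(\sqrt{|\actionspace|\log(|\actionspace|T/\delta)\,t}\right)$ uniformly over $t \in [T]$.

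The argument is essentially routine, so I expect the only delicate points to be bookkeeping rather than a genuine obstacle: first, keeping the confidence radius indexed by $T$ so that the optimism step is valid at \emph{every} round under the \emph{same} event $\eventaction(\delta)$ (this is what upgrades the guarantee to an anytime bound); and second, tracking the harmless off-by-one and clamping effects from the $\numaction{t}(a) = 1 \vee \sum_{s\le t}\indicator\{A_s=a\}$ convention when converting the round-indexed sum into an arm-indexed one. Both contribute only lower-order $O(|\actionspace|)$ terms that are dominated by the leading $\sqrt{|\actionspace| t}$ rate.
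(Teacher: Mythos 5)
Your proposal is correct and follows essentially the same route as the paper's proof: optimism on $\eventaction(\delta)$ to bound the per-round gap by twice the confidence width, then a per-arm regrouping with $\sum_{k\le n} k^{-1/2}\le 2\sqrt{n}$ and Cauchy--Schwarz. The only difference is that you explicitly track the $O(|\actionspace|)$ lower-order terms from the $1\vee(\cdot)$ clamping, which the paper elides.
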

\begin{proof}[Proof of \cref{regret bound: UCB}]
    In event $\eventaction(\delta)$, we have that $\meanaction{a}\leq \ucbaction{t}(a)\leq \meanaction{a}+2\sqrt{\frac{\log(2|\actionspace|T/\delta)}{2\numaction{t}(a)}}$ for all $a\in\actionspace, t\in[T]$. Hence conditioned on $\eventaction(\delta)$, the regret of $\ucb(\delta)$ up to any round $t\in[T]$ holds
    \*[
    \Reg(t)
    &=
    \sum_{s=1}^t \meanaction{\optarm}-\meanaction{A_s} \\
    &=
    \sum_{s=1}^t (\meanaction{\optarm}-\ucbaction{s-1}(A_s)) + (\ucbaction{s-1}(A_s)-\meanaction{A_s}) \\
    &\leq
    \sum_{s=1}^t (\ucbaction{s-1}(\optarm)-\ucbaction{s-1}(A_s)) + (\ucbaction{s-1}(A_s)-\meanaction{A_s}) \\
    &\leq
    \sum_{s=1}^t (\ucbaction{s-1}(A_s)-\meanaction{A_s})\\
    &\leq
    \sum_{s=1}^t \sqrt{\frac{2\log(2|\actionspace|T/\delta)}{\numaction{s-1}(A_s)}}\\
    &=
    \sum_{s=1}^t\sum_{a\in\actionspace} \sqrt{\frac{2\log(2|\actionspace|T/\delta)}{\numaction{s-1}(A_s)}}\indicator\{A_s=a\} \\
    &\leq
    \sum_{a\in\actionspace} \sqrt{8\log(2|\actionspace|T/\delta)\numaction{t-1}(a)}\\
    &\leq
    \sqrt{8\log(2|\actionspace|T/\delta)|\actionspace|t},
    \]
    where we use $A_s=\ucbactionrv{s}$ throughout to simplify our notation.
\end{proof}

\begin{theorem}\label{regret bound: C-UCB} In any conditionally benign environment $\env$, the regret of $\cucb(\delta)$ is bounded by
\*[
\Reg(t) = O \lr{\sqrt{\log(|\contextspace|T/\delta)}\lr{\sqrt{|\contextspace|}+\sqrt{\log(T/\delta')}}\sqrt{t}}
\]
for all $t\in[T]$, conditioning on event $\eventcontext(\delta)\cap\eventmartingale(\delta')$ which happens with probability at least $1-\delta-\delta'$.   
\end{theorem}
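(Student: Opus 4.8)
The plan is to mirror the action-level UCB analysis of \cref{regret bound: UCB}, but to run the optimism argument at the level of \emph{contexts} instead of actions, with the martingale event $\eventmartingale(\delta')$ serving as the bridge that converts \emph{expected} context-visitation counts (which appear because $\cucb$ mixes its context-level confidence bounds against the known marginals) into the \emph{realized} counts $\numcontext{s-1}(\cdot)$ that the concentration event $\eventcontext(\delta)$ actually controls. Throughout I condition on $\eventcontext(\delta)\cap\eventmartingale(\delta')$, which by \cref{high prob event: action and context} and \cref{high prob event: martingale} has probability at least $1-\delta-\delta'$.

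First I would establish optimism. On $\eventcontext(\delta)$ the per-context band gives, for all $t$ and $z$,
\*[
\meancontext(z)
\leq
\ucbcontext{t}(z)
\leq
\meancontext(z)+\frac{C_\delta}{\sqrt{\numcontext{t}(z)}},
\quad
C_\delta:=\sqrt{2\log(2|\contextspace|T/\delta)}.
\]
Since $\env$ is conditionally benign, $\meanaction{a}=\sum_{z}\Parm{a}[Z=z]\,\meancontext(z)$, so averaging the display against the \emph{known} marginal $\Parm{a}[Z=z]$ yields both $\meanaction{a}\leq\cucbaction{t}(a)$ and the matching upper estimate $\cucbaction{t}(a)\leq\meanaction{a}+C_\delta\sum_{z}\Parm{a}[Z=z]/\sqrt{\numcontext{t}(z)}$. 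Action-level optimism then follows exactly as in the UCB proof: because $\cucbactionrv{s}$ maximizes $\cucbaction{s-1}$, we get $\meanaction{\optarm}\leq\cucbaction{s-1}(\optarm)\leq\cucbaction{s-1}(A_s)$.

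Next I would decompose the regret using the two optimism bounds,
\*[
\Reg(t)
=
\sum_{s=1}^t\lr{\meanaction{\optarm}-\meanaction{A_s}}
\leq
\sum_{s=1}^t\lr{\cucbaction{s-1}(A_s)-\meanaction{A_s}}
\leq
C_\delta\sum_{s=1}^t\sum_{z\in\contextspace}\frac{\Parm{A_s}[Z=z]}{\sqrt{\numcontext{s-1}(z)}}.
\]
The crucial step --- the only genuine departure from the action-based argument --- is that this summand involves the \emph{expected} visitation $\Parm{A_s}[Z=z]$, whereas the counts $\numcontext{s-1}(z)$ increment with the \emph{realized} contexts. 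Here I invoke $\eventmartingale(\delta')$ to replace expectations by realizations at an additive cost, obtaining $\sum_{s,z}\Parm{A_s}[Z=z]/\sqrt{\numcontext{s-1}(z)}\leq\sum_{s,z}\indicator\{Z_s=z\}/\sqrt{\numcontext{s-1}(z)}+\sqrt{2t\log(T/\delta')}$.

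Finally I would bound the realized sum by a standard potential argument: grouping rounds by the observed context gives $\sum_{s=1}^t 1/\sqrt{\numcontext{s-1}(Z_s)}=O\big(\sum_{z}\sqrt{\numcontext{t}(z)}\big)$, and Cauchy--Schwarz with $\sum_z\numcontext{t}(z)\leq t+|\contextspace|$ then yields $\sum_z\sqrt{\numcontext{t}(z)}=O(\sqrt{|\contextspace|t})$. Combining the three steps gives $\Reg(t)\leq C_\delta\cdot O\lr{\sqrt{|\contextspace|t}+\sqrt{t\log(T/\delta')}}$, which is precisely the claimed $O\lr{\sqrt{\log(|\contextspace|T/\delta)}\lr{\sqrt{|\contextspace|}+\sqrt{\log(T/\delta')}}\sqrt{t}}$. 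I expect the main obstacle to be purely technical: handling the $1\vee(\cdot)$ truncation in $\numcontext{\cdot}$ cleanly inside the potential sum, and ensuring the martingale-to-counts transfer holds uniformly over all $t\in[T]$ rather than at a single horizon. The conceptual heart, by contrast, is the simple observation that known marginals let optimism pass through contexts, while the $\sqrt{t\log(T/\delta')}$ term is exactly the price of observing contexts only \emph{after} acting.
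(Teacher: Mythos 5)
Your proposal is correct and follows essentially the same route as the paper's proof: context-level optimism on $\eventcontext(\delta)$, the regret decomposition through $\cucbaction{s-1}(A_s)-\meanaction{A_s}=\sum_z(\ucbcontext{s-1}(z)-\meancontext(z))\Parm{A_s}[Z=z]$, the martingale event to pass from expected to realized context counts, and the standard potential-plus-Cauchy--Schwarz bound giving $\sqrt{|\contextspace|t}$. The uniformity over $t$ that you flag as a possible obstacle is already built into the definition of $\eventmartingale(\delta')$ (which quantifies over all $t\in[T]$ at the cost of the $\log(T/\delta')$ factor), so there is no remaining gap.
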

\begin{proof}[Proof of \cref{regret bound: C-UCB}]
    Similarly in event $\eventcontext(\delta)$ we have $\meancontext(z)\leq\ucbcontext{t}(z)\leq\meancontext(z)+2\sqrt{\frac{\log(2|\contextspace|T/\delta)}{2\numcontext{t}(z)}}$ for all $z\in\contextspace, t\in[T]$. Additionally, 
    \*[
    \meanaction{\optarm}
    &=
    \sum_{z\in\contextspace} \meancontext(z)\Parm{\optarm}[Z=z]\\
    &\leq
    \sum_{z\in\contextspace} \ucbcontext{t-1}(z)\Parm{\optarm}[Z=z]\\
    &=\cucbaction{t-1}(\optarm)
    \leq \cucbaction{t-1}(A_t), \forall t\in[T],
    \]
    where $A_t=\cucbactionrv{t}$ is the action played by $\cucb(\delta)$.
    Therefore we can control the cumulative the regret of $\cucb(\delta)$ in the first $t$ rounds as follows
    \*[
    \Reg(t)
    &=
    \sum_{s=1}^t (\meanaction{\optarm}-\cucbaction{s-1}(A_s))+(\cucbaction{s-1}(A_s)-\meanaction{A_s})\\
    &\leq
    \sum_{s=1}^t \cucbaction{s-1}(A_s)-\meanaction{A_s}\\
    &=
    \sum_{s=1}^t\sum_{z\in\contextspace} (\ucbcontext{s-1}(z)-\meancontext(z))\Parm{A_s}[Z=z]\\
    &\leq
    \sum_{s=1}^t\sum_{z\in\contextspace} \sqrt{\frac{2\log(2|\contextspace|T/\delta)}{\numcontext{s-1}(z)}}\Parm{A_s}[Z=z]\\
    &=
    \sum_{s=1}^t\sum_{z\in\contextspace}
    \sqrt{\frac{2\log(2|\contextspace|T/\delta)}{\numcontext{s-1}(z)}}\indicator\{Z_s=z\}
    +\sum_{s=1}^t\sum_{z\in\contextspace}
    \sqrt{\frac{2\log(2|\contextspace|T/\delta)}{\numcontext{s-1}(z)}}(\Parm{A_s}[Z=z]-\indicator\{Z_s=z\})\\
    &\leq
    \sqrt{8\log(2|\contextspace|T/\delta)|\contextspace|t}
    +\sum_{s=1}^t\sum_{z\in\contextspace}
    \sqrt{\frac{2\log(2|\contextspace|T/\delta)}{\numcontext{s-1}(z)}}(\Parm{A_s}[Z=z]-\indicator\{Z_s=z\}), 
    \]
    where in the last inequality we use the same argument as in the proof of \cref{regret bound: UCB}, and the remaining summation term can be controlled by $\sqrt{4\log(2|\contextspace|T/\delta)\log(T/\delta')t}$ immediately after we further condition on $\eventmartingale(\delta')$. Therefore, we get
    \*[
    \Reg(t)
    \leq
    \sqrt{\log(2|\contextspace|T/\delta)}\lr{\sqrt{8|\contextspace|}+\sqrt{4\log(T/\delta')}}\sqrt{t}, \forall t\in[T],
    \]
    in event $\eventcontext(\delta)\cap\eventmartingale(\delta')$.
\end{proof}
Combining \cref{regret bound: UCB} with \cref{regret bound: C-UCB} and taking $\delta'=\delta$, we thus verfy
\cref{regret bound: cucb and ucb}.

\section{Proofs of Lower Bounds}\label{appendix sec: proofs of lower bounds}
In this section we give the full proof of \cref{lower bound: pareto lower bound} and \cref{lower bound: when we have zero knowledge of marginals}. Note that our proof of \cref{lower bound: pareto lower bound} mainly adopts but also largely generalizes the one of \citet[][Theorem 6.2]{bilodeau2022adaptively}.
\subsection{Proof of \cref{lower bound: pareto lower bound}}\label{appendix sec: proof of lower bound on the frontier}
\begin{proof}[Proof of \cref{lower bound: pareto lower bound}]
   Fix $\actionspace, \contextspace$ and $T$. Let $\contextspace_0$ be an arbitrary proper subset of $\contextspace$ and $\contextspace_1=\contextspace\setminus\contextspace_0$. Fix $\Delta\in (0,1/20)$ to be chosen later. Define the family of marginals for all instances appearing in this proof
   \*[
   q_a[Z\in\contextspace_0]
   =
   \begin{cases}
       1/2+2\Delta & a=1 \\
       1/2 & a\neq 1,
   \end{cases}
   \]
   where probability is evenly spread within $\contextspace_0$ and $\contextspace_1$ respectively. Then define a conditionally benign environment $\env\in\envspace$ by
   \*[
   \hP_{\env_a}[Y=1]
   =
   \sum_{z\in\contextspace} p[Y=1|Z=z]q_a[Z=z], 
   \quad \forall a\in\actionspace,
   \]
   where $p[Y|Z]$ is a Bernoulli conditional distribution such that
   \*[
   p[Y=1|Z=z]=
   \begin{cases}
       3/4 & z\in\contextspace_0 \\
       1/4 & z\in\contextspace_1.
   \end{cases}
   \]
   Now we define some non-benign instances. For every $a_0\neq 1$, define $\env_{a_0}$ by
   \*[
   \hP_{\env_a^{a_0}}[Y=1]
   =
   \sum_{z\in\contextspace} p_a^{a_0}[Y=1|Z=z]q_a[Z=z],
   \quad \forall a\in\actionspace,
   \]
   where $p_a^{a_0}[Y|Z]$ is a Bernoulli conditional distribution such that
   \*[
   p_a^{a_0}[Y=1|Z=z]
   =
   \begin{cases}
       3/4 & a=1, z\in\contextspace_0 \\
       1/4 & a=1, z\in\contextspace_1 \\
       3/4+4\Delta & a=a_0, z\in\contextspace_0 \\
       1/4 & a=a_0, z\in\contextspace_1 \\
       3/4 & a\notin\{1, a_0\}, z\in\contextspace_0 \\
       1/4 & a\notin\{1, a_0\}, z\in\contextspace_1.
   \end{cases}
   \]
   For any MAB algorithm $\alg$, let $\pi^q=\alg(\actionspace,
   \contextspace,q,T)$ be the actual policy implemented by $\alg$ when it's interacting with $\env$ and $\env^{a_0}$. Then by the divergence decomposition formula and Bretagnolle-Huber inequality,
   \*[
   \hE_{\env,\pi^q}[\Reg(T)] + \hE_{\env^{a_0},\pi^q}[\Reg(T)]
   &\geq
   \frac{T\Delta}{2}\hP_{\env,\pi^q}[\numaction{T}(1)\leq T/2]
   +
   \frac{T\Delta}{2}\hP_{\env^{a_0},\pi^q}[\numaction{T}(1)> T/2] \\
   &\geq
   \frac{T\Delta}{4}\exp(-\KL{\hP_{\env,\pi^q}}{\hP_{\env^{a_0}, \pi^q}}) \\
   &=
   \frac{T\Delta}{4}\exp\lr{-\frac{1}{2}\hE_{\env,\pi^q}[\numaction{T}(a_0)]\KL{\bernoulli{3/4}}{\bernoulli{3/4+4\Delta}}}\\
   &\geq
   \frac{T\Delta}{4}\exp\lr{-\hE_{\env,\pi^q}[\numaction{T}(a_0)]\cdot 32\Delta^2},
   \]
   where in the last step we use $\KL{\bernoulli{3/4}}{\bernoulli{3/4+4\Delta}}\leq 64\Delta^2$ for $\Delta<1/40$. Combined with the worst-case regret upper bound $\hE_{\env,\pi^q}[\Reg(T)] + \hE_{\env^{a_0},\pi^q}[\Reg(T)]\leq 2 R(T;\actionspace,\contextspace)$, it implies that
   \*[
   \hE_{\env,\pi^q}[\numaction{T}(a_0)]
   \geq
   \frac{1}{32\Delta^2}\log\lr{\frac{T\Delta}{8 R(T;\actionspace,\contextspace)}}, \forall a_0\neq 1.
   \]
   Realizing $\hE_{\env,\pi^q}[\Reg(T)]=\sum_{a_0\neq 1} \Delta \hE_{\env,\pi^q}[\numaction{T}(a_0)]$, we have 
   \*[
   \hE_{\env,\pi^q}[\Reg(T)]
   \geq
   \frac{|\actionspace|-1}{32\Delta}\log\lr{\frac{T\Delta}{8 R(T;\actionspace,\contextspace)}}.
   \]
   So there exists absolute constants $c=\log 2/1024,c'=1/641$ such that whenever $R(T;\actionspace,\contextspace)\leq c'T$, the choice of 
   $\Delta=\frac{16 R(T;\actionspace,\contextspace)}{T}$ satisfies $\Delta<1/40$ and 
   \*[
   \hE_{\env,\pi^q}[\Reg(T)]
   \geq
   c\cdot\frac{|\actionspace|T}{R(T;\actionspace,\contextspace)},
   \]
   which completes the proof.
\end{proof}
\subsection{Proof of \cref{lower bound: when we have zero knowledge of marginals}}\label{appendix sec: proof of lower bound when we have zero knowledge of marginals}
\begin{proof}[Proof of \cref{lower bound: when we have zero knowledge of marginals}]
    Fix $\actionspace, \contextspace$ and $T$. Let $\contextspace_0$ be an arbitrary proper subset of $\contextspace$ and $\contextspace_1=\contextspace\setminus\contextspace_0$. Fix $\Delta\in(0,\frac{1}{40})$ to be chosen later. For all conditionally benign instances $\env$ in this proof, we consider $\hP_{\env_a}[Y|Z]$ to be the Bernoulli distribution given by
    \*[
    \hP_{\env_a}[Y=1|Z=z]=p[Y=1|Z=z]=
    \begin{cases}
        3/4
        & z\in\contextspace_0\\
        1/4
        &
        z\in\contextspace_1,
    \end{cases}
    \]
    which implies that contexts from $\contextspace_0$ are more rewarding than those from $\contextspace_1$.

    Now define conditionally benign environments $\env,\env^{a_0}\in\envspace, \forall a_0\neq 1$, through their marginals
    \*[
    \hP_{\env_a}[Y=1]&=
    \sum_{z\in\contextspace}
    p[Y=1|Z=z]q_a[Z=z],\\
    \hP_{\env^{a_0}_a}[Y=1]&=
    \sum_{z\in\contextspace}
    p[Y=1|Z=z]q^{a_0}_a[Z=z], 
    \forall a\in\actionspace
    \]
    where
    \*[
    q_a[Z\in\contextspace_0]=
    \begin{cases}
        1/2+2\Delta
        & a=1\\
        1/2
        &a\neq 1
    \end{cases}
    \quad\text{and}\quad
    q^{a_0}_a[Z\in\contextspace_0]=
    \begin{cases}
        1/2+2\Delta
        &a=1\\
        1/2+4\Delta
        &a=a_0\\
        1/2
        &a\neq 1, a_0,
    \end{cases}
    \]
    where probability is evenly spaced within $\contextspace_0$ and $\contextspace_1$. So clearly action 1 is the only optimal action in $\env$ and action $a_0$ is the only optimal action in $\env^{a_0}$, with sub-optimality gap $\mingap(\env)=\mingap(\env^{a_0})=\Delta$.

    Fix algorithm $\alg\in\noncausalspace$ with $\tilde\policy=\alg(\actionspace,\contextspace, T, \cdot)$ be the actual policy implemented by $\alg$. By the divergence decomposition formula (\citealt{bilodeau2022adaptively}), we have that for every $a_0\neq 1$,
    \*[
    \KL{\hP_{\env,\tilde\policy}}{\hP_{\env^{a_0}, \tilde\policy}}
    &=
    \sum_{a\in\actionspace}\hE_{\env,\tilde\policy}[\numaction{T}(a)]\KL{\hP_{\env_a}}{\hP_{\env^{a_0}_a}}\\
    &=
    \sum_{a\in\actionspace}\hE_{\env,\tilde\policy}[\numaction{T}(a)]\KL{q_a}{q^{a_0}_a}\\
    &=
    \hE_{\env,\tilde\policy}[\numaction{T}(a_0)]\KL{q_{a_0}}{q^{a_0}_{a_0}}\\
    &=
    \hE_{\env,\tilde\policy}[\numaction{T}(a_0)]\KL{\bernoulli{1/2}}{\bernoulli{1/2+4\Delta}}.
    \]
    By Bretagnolle–Huber inequality,
    \*[
    \hE_{\env,\tilde\policy}[\Reg(T)]+\hE_{\env^{a_0},\tilde\policy}[\Reg(T)]
    &\geq
    \frac{T\Delta}{2}\lr{\hP_{\env,\tilde\policy}[\numaction{T}(1)\leq T/2]+\hP_{\env^{a_0},\tilde\policy}[\numaction{T}(1)>T/2]} \\
    &\geq
    \frac{T\Delta}{4}\exp(-\KL{\hP_{\env,\tilde\policy}}{\hP_{\env^{a_0}, \tilde\policy}})\\
    &=
    \frac{T\Delta}{4}\exp(-\hE_{\env,\tilde\policy}[\numaction{T}(a_0)]\KL{\bernoulli{1/2}}{\bernoulli{1/2+4\Delta}}).
    \]
    Now we pick $a_0\in\cArgmin_{a\neq 1}\hE_{\env,\tilde\policy}[\numaction{T}(a)]$ which implies that $\hE_{\env,\tilde\policy}[\numaction{T}(a_0)]\leq\frac{T}{|\actionspace|-1}$. Also $\KL{\bernoulli{1/2}}{\bernoulli{1/2+4\Delta}}\leq 4(4\Delta)^2=64\Delta^2$ for $\Delta<1/40$. So
    \*[
    \hE_{\env,\tilde\policy}[\Reg(T)]+\hE_{\env^{a_0},\tilde\policy}[\Reg(T)]
    \geq
    \frac{T\Delta}{4}\exp\lr{-\frac{64T\Delta^2}{|\actionspace|-1}}.
    \]
    Taking $\Delta=\frac{1}{40}\sqrt{\frac{|\actionspace|-1}{T}}$, we know that $\max\lrset{\hE_{\env,\tilde\policy}[\Reg(T)], \hE_{\env^{a_0},\tilde\policy}[\Reg(T)]}\geq\frac{1}{2}(\hE_{\env,\tilde\policy}[\Reg(T)]+\hE_{\env^{a_0},\tilde\policy}[\Reg(T)])\geq c\sqrt{|\actionspace|T}$ for some absolute constant $c>0$, which yields the claim.
\end{proof}
In the above proof, it is easy to see that $\env(Z)$ and $\env^{a_0}(Z)$ are $\varepsilon-$close, where $\varepsilon=c\cdot\sqrt{|\actionspace|/T}$ for some absolute constant $c$. So for any algorithm input by $\tilde\env(Z)=\env(Z)$ when interacting with $\bar\env\in\lrset{\env,\env^{a_0}, a_0\neq 1}$, it is satisfied that $\tilde\env(Z)$ and $\bar\env(Z)$ are always $\varepsilon-$close, but the algorithm incurs $\Omega(\sqrt{|\actionspace|T})$ regret in some instance from $\lrset{\env,\env^{a_0}, a_0\neq 1}$.

\section{Instances where \texorpdfstring{$\elim$}{PE} incurs linear regret}\label{appendix sec: proofs regarding adaptive ins-dep results}
In this section we give an example for $\elim$ to illustrate that to merely force linear regret on a causal bandit algorithm, we need to construct non-benign instances carefully and re-code the algorithm to ensure its erratic behavior in those instances. In particular, we construct a non-benign environment $\env$ for every $\Delta\in(0,1)$ such that $\mingap(\env)=\Delta$ while the re-coded $\elim$ never plays the optimal arm.
\begin{proposition}
\label{fact: phased elimination incurs linear regret in some non-benign settings}
    Suppose we modify \cref{alg: phased-elimination} such that, in each phase, we always  choose an exact-optimal design whenever feasible.
    For any $\actionspace,\contextspace$ and $T$ with $|\actionspace|>|\contextspace|\geq 3$ and $\Delta\in(0,1)$, there exists a non-benign environment $\env$ such that $\mingap(\env)=\Delta$, while
    \cref{alg: phased-elimination} will never play the optimal arm, hence incurring linear regret,
    \*[
    \Reg(t)
    \geq
    \mingap(\env)\cdot t
    =\Delta\cdot t,
    \quad \forall t\in[T].
    \]
\end{proposition}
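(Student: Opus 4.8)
The plan is to exploit the fact that $\elim$, via the reduction of \cref{subsec: causal to linear reduction}, treats the reward as a linear function $Y_t\approx\ip{\meancontext,\nu_{A_t}}$ of the marginal $\nu_{A_t}$, and bases both its sampling (through the G-optimal design) and its elimination test on this linear model. In a non-benign environment this model is false: the true mean reward of an arm need not equal the linear extrapolation $\ip{\phaseestimate{\ell},\nu_a}$ obtained from the arms that are actually played. The idea is therefore to plant an optimal arm $\optarm$ whose marginal $\nu_{\optarm}$ is a convex combination of the marginals of \emph{suboptimal} arms, so that $\nu_{\optarm}$ is never an extreme point of the active set and hence never enters the support of the exact optimal design, while boosting the conditional reward of $\optarm$ (which is precisely what non-benignness permits) so that its \emph{true} mean is the largest. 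Since $\optarm$ is never sampled, $\elim$ never learns that its reward exceeds the linear prediction, and never plays it.

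Concretely, I would take $|\contextspace|=k\ge 3$ with contexts $z_1,\dots,z_k$, and $k$ ``vertex'' arms $e_1,\dots,e_k$, where $e_i$ deterministically produces context $z_i$ and a deterministic reward, with $\meanaction{e_1}=\epsilon$ and $\meanaction{e_i}=0$ for $i\ge 2$, for a small $\epsilon\in(0,1-\Delta)$. The marginals $\nu_{e_i}$ are the standard basis vectors. I then add the optimal arm $\optarm$ with marginal $\nu_{\optarm}=\tfrac1{k-1}\sum_{i\ge 2}e_i$ (the centroid of $e_2,\dots,e_k$) and set its conditional rewards on $z_2,\dots,z_k$ equal to $\Delta+\epsilon$, so $\meanaction{\optarm}=\Delta+\epsilon\le 1$. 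This environment is non-benign because the conditional reward at $z_2$ is $0$ under $e_2$ but $\Delta+\epsilon>0$ under $\optarm$; moreover $\optarm$ is the unique optimal arm and $\mingap(\env)=\meanaction{\optarm}-\meanaction{e_1}=\Delta$. To meet $|\actionspace|>|\contextspace|$ for larger action sets, I would add further suboptimal arms whose marginals lie in the interior of the simplex and whose rewards are at most $\epsilon$; these do not affect the argument.

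The two things to verify are that $\optarm$ is never played and that this forces linear regret. For the first, I would invoke the Kiefer--Wolfowitz equivalence so that the exact G-optimal design coincides with the D-optimal design, whose information matrix is unique by strict concavity of $\log\det$; a short trace/isotropy computation shows the unique optimum on $\{e_1,\dots,e_k,\optarm,\dots\}$ is the uniform design on the vertices $e_1,\dots,e_k$, since any mass placed on the interior point $\nu_{\optarm}$ (which has strictly smaller norm) strictly decreases $\det V(\pi)$. Hence $\optarm\notin\supp{\phasedesign{\ell}}$ and $\optarm$ is not sampled while it is active. Because the rewards are deterministic, the least-squares estimate is exactly $\phaseestimate{\ell}=(\epsilon,0,\dots,0)$, so the predicted reward $\ip{\phaseestimate{\ell},\nu_{\optarm}}=0$ equals that of each $e_i$ with $i\ge 2$; the elimination test then treats $\optarm,e_2,\dots,e_k$ identically, so they are discarded in the \emph{same} phase and the active set passes directly from $\{e_1,\dots,e_k,\optarm,\dots\}$ to $\{e_1\}$, never reaching an intermediate set in which $\optarm$ would be needed to span the active subspace. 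Thus $\optarm$ is never played in any phase. Finally, since $\optarm$ is the unique optimal arm and every arm actually played has mean reward at most $\epsilon=\meanaction{\optarm}-\Delta$, each round contributes at least $\Delta$ to the regret, giving $\Reg(t)\ge\mingap(\env)\cdot t=\Delta t$ for all $t\in[T]$.

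The main obstacle is this second verification, namely guaranteeing that $\optarm$ is excluded from the design in \emph{every} phase, not merely the first. The danger is an intermediate active set of the form $\{e_1,\optarm\}$, in which $\optarm$ becomes linearly necessary and would be sampled; this is exactly what the hypotheses $|\contextspace|\ge 3$ (giving at least two zero-reward vertices $e_2,e_3$ for $\optarm$ to be their combination) and the modification to an \emph{exact} optimal design are designed to rule out, by forcing the simultaneous elimination of $\optarm$ with $e_2,\dots,e_k$. I expect the bulk of the write-up to be the uniqueness/exclusion argument for the optimal design together with the bookkeeping that elimination is simultaneous.
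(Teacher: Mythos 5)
Your proposal is correct and follows essentially the same construction as the paper's proof: point-mass marginals on the vertices with deterministic rewards, the optimal arm's marginal placed as a convex combination of suboptimal arms' marginals so the exact optimal design excludes it, and reward values arranged so that $\optarm$ is eliminated simultaneously with the other zero-predicted-reward arms, never forming a problematic intermediate active set. The only (cosmetic) differences are that the paper takes $\nu_{\optarm}=\tfrac12(e_1+e_2)$ with rewards $1$ vs.\ $1-\Delta$ and verifies design exclusion by an explicit $\det V(\pi)$ computation, whereas you take the centroid of $e_2,\dots,e_k$ with rewards $\Delta+\epsilon$ vs.\ $\epsilon$ and argue exclusion via Kiefer--Wolfowitz.
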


\begin{proof}[Proof of \cref{fact: phased elimination incurs linear regret in some non-benign settings}]
    For any $\actionspace$ and $\contextspace$ with $|\actionspace|>|\contextspace|\geq 3$, suppose we index the contexts in arbitrary way such that $\contextspace=\lrset{z_1,...,z_{|\contextspace|}}$, and we pick $(|\contextspace|+1)$ number of arms from $\actionspace$ and denote them by $\optarm, a_1,...,a_{|\contextspace|}$. Construct marginals $\env_a$ as follows:
     \*[
     \env_{a_i}(Z) &= \delta_{\lrset{z_i}} =: e_i, \quad i\in[|\contextspace|], \\
     \env_{\optarm}(Z) &= \frac{1}{2}(\delta_{\lrset{z_1}}+\delta_{\lrset{z_2}}) = \frac{1}{2}(e_1 + e_2),
     \]
     where we write marginal distributions over $\contextspace$ as vectors in $\mathbb R^{|\contextspace|}$ according to context indices. Then define conditional distributions $\hP_{\env_a}(Y|Z)$:
     \*[
     \hP_{\env_{a_i}}[Y|Z=z_i] =
     \begin{cases}
     \delta_{\lrset{0}} & i\in[|\contextspace|-1] \\
     \delta_{\lrset{1-\Delta}} & i = |\contextspace|
     \end{cases}
     \]
     and
     \*[
     \hP_{\env_{\optarm}}[Y|Z=z_1]&=\hP_{\env_{\optarm}}[Y|Z=z_2] = \delta_{\lrset{1}}.
     \]
     In other words, playing arm $a_i$ yields context $z_i$ and deterministic reward, while we could observe $z_1$ or $z_2$ with equal probability and always get the optimal reward by playing arm $\optarm$. So the only optimal arm for $\env$ is $\optarm$ with $\mingap(\env)=\Delta$. We can treat all other $a\in\actionspace$ as dummy actions by identifying each of them with one of $\optarm, a_i, i\in[|\contextspace|]$ arbitrarily. 

     Next we will verify the following facts. (1) When no action is eliminated and $\activeset{\ell}=\actionspace$, any exact G-optimal design $\phasedesign{\ell}\in\cP(\activeset{\ell})$ does not have positive mass over $\optarm$. (2) Whenever any action is eliminated in the end of phase $\ell$, it must be that all actions except for $a_{|\contextspace|}$ are eliminated as well. Then PE would just play $a_{|\contextspace|}$ till the end. Combining these two facts we can conclude that PE never picks $\optarm$ during the interaction with $\env$.
     \paragraph{No G-optimal design is supported on $\optarm$.} Recall that any G-optimal design $\phasedesign{\ell}$ maximizes $f(\pi)=\log\det V(\pi)$, where $V(\pi)=\sum_{a\in\activeset{\ell}}\pi(a)\nu_a\nu_a^\top$ over $\pi\in\cP(\activeset{\ell})$ \citep[][Theorem~21.1]{lattimore2020bandit}. When $\activeset{\ell}=\actionspace$, $\det V(\pi)$ can be computed as
     \*[
     \det V(\pi)
     =
     \lr{\pi(a_1)\pi(a_2)+ \frac{\pi(\optarm)}{4}(\pi(a_1)+\pi(a_2))}\pi(a_3)\cdots\pi(a_{|\contextspace|}).
     \]
     Then we can find that any maximizing $\pi$ should have $\pi(\optarm)=0$ after realizing that $\pi(a_1)=\pi(a_2)$ for such $\pi$. Moreover, there is only one G-optimal design in this case, which is $\phasedesign{\ell}=Unif(a_1,...,a_{|\contextspace|})$.

     \paragraph{All actions other than $a_{|\contextspace|}$ would be eliminated at the same time.} If the first elimination happens in the end of phase $\ell$, then we must have $\empmeancontext{\ell}=(0,...,0, 1-\Delta)^\top$ due to that $\phasedesign{\ell}=Unif(a_1,...,a_{|\contextspace|})$ and rewards are deterministic. So $\max_{b\in\activeset{\ell}} \ip{\phaseestimate{\ell}, \nu_b-\nu_a}$ is $1-\Delta$ for all $a\neq a_{|\contextspace|}$ and $0$ for $a=a_{|\contextspace|}$. Then the elimination must happen within $\optarm, a_1,...,a_{|\contextspace|-1}$, and thus every one of it should be eliminated simultaneously.
\end{proof}

\end{document}